\documentclass{article}

\usepackage[preprint]{neurips_2026}

\usepackage[utf8]{inputenc} 
\usepackage[T1]{fontenc}    
\usepackage{hyperref}       
\usepackage{url}            
\usepackage{booktabs}       
\usepackage{amsfonts}       
\usepackage{nicefrac}       
\usepackage{microtype}      
\usepackage{xcolor}   
\usepackage{amsmath}
\usepackage{amsthm}
\usepackage{algorithm}
\usepackage{algorithmic}

\newtheorem{theorem}{Theorem}
\newtheorem{lemma}{Lemma}
\newtheorem{proposition}{Proposition}

\theoremstyle{definition}
\newtheorem{definition}{Definition}

\theoremstyle{remark}

\usepackage{float}
\usepackage{wrapfig}
\usepackage{subcaption}
\usepackage{graphicx}

\usepackage{bbold}
\usepackage{bbm}
\usepackage{bm}

\newcommand{\ignore}[1]{}

\newcommand{\Exp}[1]{\mathbb{E}\left[#1\right]} 

\newcommand{\E}{\mathcal{E}}
\newcommand{\N}{\mathbb{N}}

\newcommand{\remove}[1]{}

\DeclareMathOperator*{\argmax}{arg\,max}
\DeclareMathOperator*{\argmin}{arg\,min}

\title{Cost-Aware Best-LLM Identification \\ using Dueling Feedback}

\author{%
  Sarvesh Gharat \\
  Centre for Machine Intelligence and Data Science (C-MInDS)\\
  Indian Institute of Technology Bombay\\
  \texttt{sarveshgharat19@gmail.com} \\
  \And
  Nikhil Karamchandani \\
  Department of Electrical Engineering\\
  Indian Institute of Technology Bombay\\
  \texttt{nikhilk@ee.iitb.ac.in} \\
  \And
  Jayakrishnan Nair \\
  Department of Electrical Engineering\\
  Indian Institute of Technology Bombay\\
  \texttt{jayakrishnan.nair@ee.iitb.ac.in} \\
}

\begin{document}

\maketitle

\begin{abstract}
Inspired by the problem of identifying the best model from a collection of large language models (LLMs) with heterogeneous querying costs, we formulate and analyse a variant of the multi-armed bandit (MAB) with (i) dueling feedback, where pairwise comparisons between model responses provide robust preference signals, and (ii) heterogeneous sampling costs, reflecting the differing costs of querying different LLMs. Assuming the existence of a Condorcet winner, a condition we empirically validate across multiple real-world datasets, we propose a Track-and-Stop style algorithm for best-arm identification with prescribed confidence. We prove that the algorithm almost surely achieves the asymptotically optimal cost as the error tends to zero. Finally, we extensively evaluate our approach on both synthetic and real-world instances, demonstrating consistent improvements over classical cost-unaware algorithms and their cost-aware extensions.
\end{abstract}

\section{Introduction}

Sequential decision-making under uncertainty is a fundamental problem in machine learning, with the multi-armed bandit (MAB) model serving as a canonical abstraction. While classical bandits assume access to scalar rewards, many applications provide feedback only in the form of pairwise preferences. This preference-based feedback has emerged as a more reliable signal in human-centered evaluation tasks, since people often find it easier to answer “Which of these two options is better?” than to assign consistent absolute scores.
Such pairwise feedback naturally arises in diverse online learning frameworks. In information retrieval and recommender systems, for instance, users typically click or choose between competing options rather than rate them numerically \citep{yan2022human, hui2017low, roitero2022preferences}. These observations have motivated the dueling bandit problem, a variant of the MAB framework in which the learner interacts through noisy pairwise comparisons \citep{yue2009interactively, yue2012k}. 

In this work, we are motivated by the problem of identifying the best model for a certain task (say, text-to-image generation) from amongst a collection of large language models (LLMs). Preference-based feedback has received significant attention recently in LLM evaluations, as existing automatic baselines often suffer from data leakage and misalignment with human judgment \citep{chiang2024chatbot, ramos2025data, zhou2025lessleak, fodor2025line} To mitigate these issues, large-scale evaluation platforms, such as Chatbot Arena \cite{chiang2024chatbot}, have adopted head-to-head comparisons, where annotators and users provide judgments by directly comparing the outputs of two models to the same query. Formally, we model the best-LLM identification problem as a \emph{fixed confidence} dueling MAB formulation. Relative to prior literature, the novelty of this formulation is two-fold---we make the mild structural assumption of a Condorcet winner, and account for heterogeneous arm (LLM) sampling costs. We motivate each of these novelties below.

A dueling bandit instance is described via a preference matrix $\mathbb{P} =[[p_{i,j}]]$, where $p_{i,j}$ reflects the probability that arm $i$ wins over arm $j$ in a duel; see Figure~\ref{fig:vision_matrix0} for an example. 
The literature proposes several structural assumptions on this preference matrix (comprehensive surveys can be found in \cite{bengs2021preference, sui2018advancements}). This includes the \emph{low noise model} (LNM), several notions that imply a total ordering of arms (for example, \emph{strong/moderate/weak stochastic transitivity}), \emph{general identifiability}, and the existence of a \emph{Condorcet winner}. Of these, the assumption of a Condorcet winner (an arm~$i$ that beats every other with probability exceeding~$1/2,$ i.e., $p_{i,j} > 1/2$ for all $j \neq i$) is the weakest, and also the most universal in terms of applicability. Indeed, we tested several empirical preference matrices for LLM tasks, and found that the only structural assumption they \emph{all} satisfy is the existence of a Condorcet winner; see Appendix~C (all appendix references point to the appendices in the supplementary material) for an analysis of the datasets we evaluate our algorithms on, and Appendix~D for additional ones. 

\begin{wrapfigure}{r}{0.62\textwidth}
  \vspace{-10pt}
  \centering
  \includegraphics[width=0.58
  \textwidth]{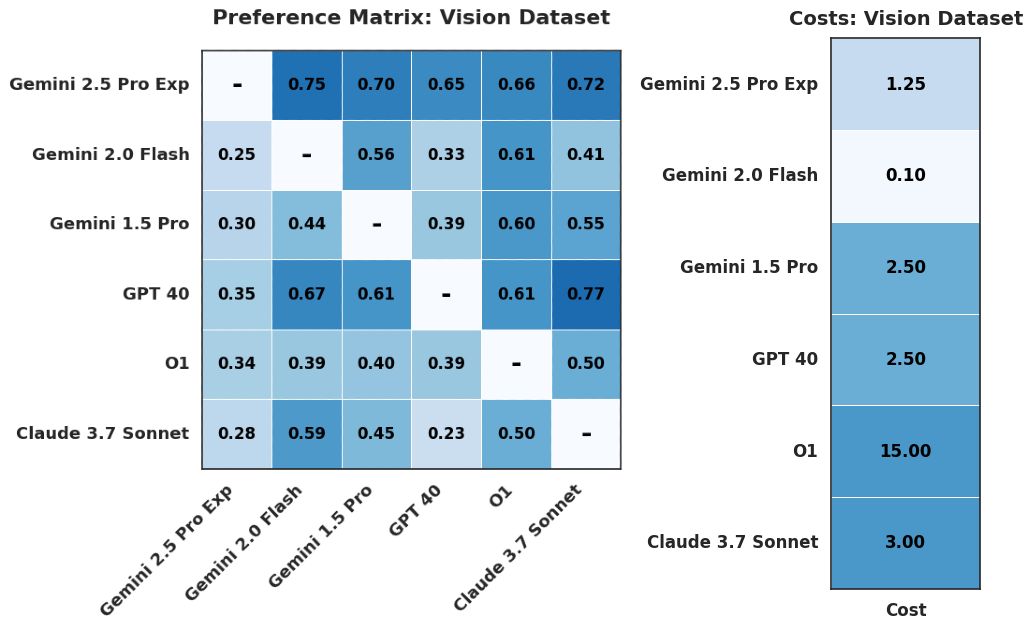}
  \caption{Preference matrix and model costs for the Vision-based task \citep{chiang2024chatbot}}
  \label{fig:vision_matrix0}
  \vspace{-10pt}
\end{wrapfigure}

An example is illustrated in Figure~\ref{fig:vision_matrix0}, which presents the preference matrix for a vision-based task from Chatbot arena \citep{chiang2024chatbot}. It can be easily checked that for this preference matrix, the Gemini 2.5 Pro Exp model is the Condorcet winner. The same preference matrix also demonstrates that transitivity (which is essential for any total ordering of LLMs) is violated---Gemini 2.0 Flash beats (with probability exceeding~$1/2$) Gemini 1.5 Pro, Gemini 1.5 Pro beats Claude 3.7 Sonnet, and Claude 3.7 Sonnet in turn beats Gemini 2.0 Flash. Such transitivity violations are quite common in real-world preference data, as we demonstrate in Appendix~C and~D.

Another practical consideration often overlooked in dueling bandit formulations is that comparisons incur heterogeneous costs. 
For instance, generating a response from a large LLM can be significantly more expensive than querying a smaller model. Figure \ref{fig:vision_matrix0} illustrates the heterogeneity in sampling/querying costs across the LLMs considered for the aforementioned vision-based task. Clearly, these diverse costs would inform the most efficient sampling strategy for inferring the best LLM. The proposed dueling MAB formulation here explicitly captures heterogeneous arm sampling costs, and our algorithms seek to minimize the average sampling cost subject to an accuracy constraint.

\noindent {\bf Related Literature: }
In this brief review, we restrict attention to the literature that addresses fixed confidence best arm identification in stochastic MABs. 
Our work is closely related to \cite{karnin2016verification}, which studies fixed-confidence best-arm identification in dueling bandits under the Condorcet-winner assumption.
However, their approach is verification-based and assumes uniform comparison costs, thereby optimizing only the number of pairwise comparisons. In contrast, we study the same identification objective under heterogeneous arm sampling costs and develop a cost-aware Track-and-Stop-style algorithm. In our experiments, this Track-and-Stop-style approach outperforms a cost-aware adaptation of their verification-based method.

While heterogeneous sampling costs have not been studied in the dueling-bandit setting, they have been considered for scalar-reward bandits for cost-aware best-arm identification~\citep{tucker2023bandits,ivkin2021cost,kanarios2024cost}. In particular, \cite{kanarios2024cost} adapts Track-and-Stop~\citep{garivier2016optimal} to heterogeneous sampling costs. However, the results/insights in these works do not extend to the dueling setting considered here.

Separately, fixed-confidence dueling bandits have also been studied under even weaker structural assumptions on the preference matrix, where the `best arm' is defined as the Borda winner \citep{busa2014pac, falahatgar2017maxing, lin2018efficient, heckel2018approximate} 
In light of our observations in the context of real-world LLM comparison datasets, we work with the (stronger) Condorcet winner assumption here. It is also worth noting that under this  stronger assumption, we are able to establish the performance guarantee of (asympotic) optimality in sampling/cost complexity relative to an information theoretic lower bound; the prior literature on identifying the Borda winner does not achieve this, to the best of our knowledge.

\noindent {\bf Summary of contributions and paper outline: }
In Section~\ref{sec: problem formulation}, we formally define the proposed dueling MAB formulation with heterogenous sampling costs. In Section~\ref{sec: lower bound}, we derive an information-theoretic lower bound on the expected cost of any algorithm that meets the accuracy constraint. We further characterize the associated `optimal' sampling proportions and use this characterization to obtain a closed-form expression for the lower bound. In Section~\ref{sec: algorithm}, we propose a cost-aware Track-and-Stop style algorithm for the dueling bandit setting. The main technical challenge is that the optimal allocation weights need not be unique, making it difficult to directly apply standard Track-and-Stop arguments that rely on unique sampling proportions~\citep{garivier2016optimal}. While similar issues have been studied in stochastic bandits~\citep{reddy2023best, al2021navigating}, the pairwise preference structure and heterogeneous costs require additional care. We also modify the GLR-based stopping rule to test the best-arm hypothesis rather than standard pairwise superiority, which is crucial for proving correctness and asymptotic optimality in terms of cost complexity. Finally, in Section~\ref{sec: numerics}, we present experiments on synthetic and real-world datasets and compare our algorithm with relevant baselines.
\section{Problem Formulation}
\label{sec: problem formulation}

We consider an $L$-armed dueling multi-armed bandit problem, where a learner is given access to $L$ arms, denoted by $\mathcal{L}:= [L]$.\footnote{For $n \in \N,$ the set $[n]$ denotes $\{1,2,\ldots,n\}.$} Unique to this work, we assume that each arm $i \in \mathcal{L}$ has an associated sampling cost~$c_i.$ The vector of costs $\mathcal{C}:= (c_1, c_2, \dots, c_L)$ is known a priori to the learner. At each time step $t \in \{1, 2, \dots \}$, the learner selects a pair of arms $a_t:= \left(a_{t,1}, a_{t,2} \right)$ where $a_{t,1}, a_{t,2} \in \mathcal{L},$ incurs a cost $c_{a_{t,1}} + c_{a_{t,2}},$ and observes a dueling outcome $y_t \in \{a_{t,1},a_{t,2}\},$ representing the \emph{winner} of the duel between the sampled arms. The outcome of dueling between any pair of arms~$i$ and~$j$ at each time step is an independent Bernoulli random variable, such that $i$ wins the duel with probability $p_{i,j}$. The winning probabilities can be compactly represented as an $L \times L$ preference matrix $\mathbb{P} =[[p_{i,j}]].$ 
Note that the diagonal entries of $\mathbb{P}$ are inconsequential, and that this matrix is anti-symmetric, meaning that for every $i,j \in [L],$ $i \neq j,$ we have $p_{j,i} = 1-p_{i,j}$. To summarize, an instance of the dueling MAB with heterogeneous sampling costs is given by the tuple $\left(\mathcal{C},\mathbb{P}\right).$ 

Furthermore, we assume the existence of a \emph{Condorcet winner}--an arm that beats every other arm with a probability greater than $0.5$. The Condorcet winner is also referred to as the \emph{best arm}. The best arm corresponding to the instance $\nu = (\mathcal{C},\mathbb{P})$ is denoted by ~$a^*(\nu);$ note that $p_{a^*(\nu),j} > 0.5, \; \forall j \neq a^*(\nu).$

Since the learner does not know the preference matrix $\mathbb{P}$ a priori, it must estimate the best arm (i.e., the Condorcet winner) by sequential sampling of arm pairs. The goal of the learner is to identify the best arm with a prescribed confidence level, while incurring the least aggregate sampling cost.\footnote{If the sampling costs are identical, note that this reduces to the standard fixed confidence dueling MAB formulation (see, for example, \cite{bengs2021preference}), where the goal is to identify the best arm with a prescribed confidence level, using the least number of duels.} Formally, after $\tau$ time steps,
the learning algorithm can either choose to continue sampling, or stop and return its estimated best arm $\hat{a}_\tau.$ Given a prescribed error threshold $\delta \in \left(0,1\right)$, we say that an algorithm is $\delta$-probably correct ($\delta$-PC) if, for any problem instance $\nu = \left(\mathcal{C},\mathbb{P}\right)$, $$P(\tau_{\delta} < \infty,\hat{a}_{\tau_{\delta}} \neq a^*(\nu)) \leq \delta,$$ where $\tau_{\delta}$ denotes the stopping time of the algorithm. Our goal is to design $\delta$-PC algorithms that incur the minimum sampling cost~$$J(\tau_\delta) = \sum_{t = 1}^{\tau_\delta} (c_{a_{t,1}} + c_{a_{t,2}}).$$ 
More precisely, as is typical in the fixed confidence setting, we seek to minimize $\Exp{J(\tau_\delta)},$ i.e., the expected cost incurred in identifying the best arm.

\newcommand{\K}{\mathcal{K}}
\label{sec: lower bound}

In this section, we establish an information-theoretic lower bound on the
\emph{expected} sampling cost incurred by any $\delta$-PC algorithm.
Interestingly, we are able to express the lower bound in an explicit and interpretable closed form—this is enabled by our modeling assumption that a Condorcet winner exists.

Let $\mathcal{K}$ represent the set of arm pairs; formally, $$\K = \{(i,j)\ \colon\ 1 \leq i < j \leq L\}.$$
Let $\E$ denote the set of instances that admit a Condorcet winner. For~$\nu = (\mathcal{C},\mathbb{P}) \in \E,$ let $\epsilon_{\text{alt}} \left(\nu\right)$ denote the set of instances in $\E$ (i.e., instances that also admit a Condorcet winner) having the same cost vector, where the best arm differs from $a^*(\nu),$ i.e.,  
\begin{align*}
\epsilon_{\text{alt}} \left(\nu\right) = \{ \nu'= (\mathcal{C},\mathbb{P}') \in \E \ \colon \ a^*\left(\nu'\right) \neq a^*\left(\nu\right)\}.
\end{align*}
The following lemma, which is a straightforward generalization of Theorem 1 in \cite{garivier2016optimal}, establishes a lower bound on the expected sampling cost of any $\delta$-PC algorithm. 
\begin{lemma}
\label{lemma:lower_bound_abstract}
For $\nu \in \E,$ 
let $J\left(\tau_{\delta}\right)$ denote the total sampling cost incurred by a $\delta$-PC algorithm. Then $E_{\nu}[J\left(\tau_{\delta}\right)] \geq c^*\left(\nu\right)\log\left(\frac{1}{4\delta}\right)$ where
    \begin{align}
    \label{eq:lower_bound_generic}
    c^*\left(\nu\right)^{-1} = \sup_{\mathbf{w} \in \Sigma_{\K}} \ \inf_{\nu' = (\mathcal{C},\mathbb{P}') \in \epsilon_{\text{alt}} \left(\nu\right)}\ 
    \left( \sum_{(i,j) \in \K} \dfrac{w_{i,j}}{c_i + c_j} d\left(p_{i,j}, p'_{i,j} \right) \right).
    \end{align}
    Here, $\Sigma_{\K}$ denotes the probability simplex over~$\K,$ and $d(x,y)$ denotes the KL divergence between Bernoulli distributions with means~$x$ and~$y.$
\end{lemma}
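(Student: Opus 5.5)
We follow the change-of-measure argument of \cite{garivier2016optimal}, adapted so that the quantity being counted is cost rather than samples. Fix $\nu \in \E$ and a $\delta$-PC algorithm. We may assume $E_\nu[J(\tau_\delta)] < \infty$, since otherwise there is nothing to prove. Because every $c_i > 0$ (implicitly, as costs), this gives $E_\nu[\tau_\delta] < \infty$, and hence $\tau_\delta < \infty$ almost surely under $\nu$. For each pair $(i,j) \in \K$, let $N_{i,j}(\tau_\delta)$ denote the number of duels between $i$ and $j$ up to $\tau_\delta$. If the algorithm ever selects a pair with $a_{t,1}=a_{t,2}$ or an unordered duplicate, we fold it into the corresponding element of $\K$; self-duels carry no information but do incur cost, so they only strengthen the bound.

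\textbf{Step 1: transportation inequality.} Take any $\nu' = (\mathcal{C},\mathbb{P}') \in \epsilon_{\text{alt}}(\nu)$. Consider the event $\mathcal{A} = \{\tau_\delta<\infty, \hat a_{\tau_\delta} = a^*(\nu)\}$, which is $\mathcal{F}_{\tau_\delta}$-measurable. Applying the transportation lemma of Kaufmann, Capp\'e and Garivier (the data-processing step used in \cite{garivier2016optimal}) to the observation process gives
\begin{align*}
\sum_{(i,j)\in\K} E_\nu[N_{i,j}(\tau_\delta)]\, d(p_{i,j},p'_{i,j}) \;\geq\; \mathrm{kl}\bigl(P_\nu(\mathcal{A}), P_{\nu'}(\mathcal{A})\bigr).
\end{align*}
Here the observations for pair $(i,j)$ are Bernoulli$(p_{i,j})$ under $\nu$. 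The log-likelihood ratio is a sum over duels, and Wald's identity applies since $E_\nu[\tau_\delta]<\infty$. Since the algorithm is $\delta$-PC, we have $P_\nu(\mathcal{A}) \ge 1-\delta$. Because $a^*(\nu')\neq a^*(\nu)$, we also have $P_{\nu'}(\mathcal{A}) \le \delta$. By monotonicity of $\mathrm{kl}$, the right-hand side is therefore at least $\mathrm{kl}(1-\delta,\delta) \ge \log\frac{1}{4\delta}$, using the standard bound $\mathrm{kl}(x,1-x)\ge \log\frac{1}{2.4x}$.

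One technical point needs care: $d(p_{i,j},p'_{i,j})$ may be infinite if $p'_{i,j}\in\{0,1\}$. Such alternatives simply make the inequality trivial, so they do not harm the infimum.

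\textbf{Step 2: rewriting in cost-weighted proportions.} We have
\begin{align*}
E_\nu[J(\tau_\delta)] = \sum_{(i,j)\in\K}(c_i+c_j)\,E_\nu[N_{i,j}(\tau_\delta)],
\end{align*}
ignoring self-duels, which only add cost. Define
\begin{align*}
w_{i,j} = \frac{(c_i+c_j)E_\nu[N_{i,j}(\tau_\delta)]}{E_\nu[J(\tau_\delta)]}.
\end{align*}
Then $\mathbf{w}\in\Sigma_\K$, and
\begin{align*}
\sum_{(i,j)} E_\nu[N_{i,j}]\, d(p_{i,j},p'_{i,j}) = E_\nu[J(\tau_\delta)] \sum_{(i,j)} \frac{w_{i,j}}{c_i+c_j}\, d(p_{i,j},p'_{i,j}).
\end{align*}
Combining with Step 1 gives, for every $\nu'\in\epsilon_{\text{alt}}(\nu)$,
\begin{align*}
\log\tfrac{1}{4\delta} \le E_\nu[J(\tau_\delta)] \sum_{(i,j)} \frac{w_{i,j}}{c_i+c_j}\, d(p_{i,j},p'_{i,j}).
\end{align*}

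\textbf{Step 3: optimizing.} Take the infimum over $\nu'$ for this fixed $\mathbf{w}$, and then bound it by the supremum over $\Sigma_\K$. This yields $\log\frac{1}{4\delta}\le E_\nu[J(\tau_\delta)]\, c^*(\nu)^{-1}$, which is the claim. If $c^*(\nu)^{-1}=0$, the bound says the expected cost is infinite; this is consistent with our reduction, since a finite expected cost would contradict the displayed inequality.

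The main obstacle is Step 1's justification in the dueling setting. We must ensure the transportation lemma holds when the algorithm adaptively chooses pairs and the alternative model is restricted to $\E$. This is only a restriction on $\nu'$, so it just shrinks the set over which we take the infimum and needs no change to the argument. The remaining steps are bookkeeping: the normalization by cost rather than by $E_\nu[\tau_\delta]$ is exactly what produces the $1/(c_i+c_j)$ weights.
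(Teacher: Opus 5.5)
Your proof is correct and follows the same route as the paper. Both arguments apply a change-of-measure inequality against an arbitrary $\nu'\in\epsilon_{\text{alt}}(\nu)$, normalize $E_\nu[N_{i,j}(\tau_\delta)]$ by the expected cost to get cost fractions $w_{i,j}\in\Sigma_{\K}$, and then take the infimum over $\nu'$ and the supremum over $\mathbf{w}$. The only difference is the inequality used in the first step: the paper uses the Bretagnolle--Huber-type bound $P_\nu(E^c)+P_{\nu'}(E)\ge \frac{1}{2}\exp\bigl(-\sum_{(i,j)\in\K} E_\nu[N_{i,j}(\tau_\delta)]\,d(p_{i,j},p'_{i,j})\bigr)$, which gives the constant $4$ directly, whereas you use the transportation lemma with $\mathrm{kl}(1-\delta,\delta)\ge\log\frac{1}{2.4\delta}$ (your monotonicity step needs $\delta<1/2$, but for $\delta\ge 1/4$ the claim is vacuous anyway).
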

Lower bounds of this type are standard in the fixed confidence MAB literature \citep{garivier2016optimal, poiani2024best,jedra2020optimal, reddy2023best}. While such lower bounds tend to be (asymptotically) tight, they are not particularly interpretable, as they are expressed in terms of the solution of a certain nested optimization problem. Moreover, the set $\mathcal{W}^*\left(\nu\right)$ of probability distributions over~$\K$ that attain the outer supremum in \eqref{eq:lower_bound_generic} is significant from the standpoint of an optimal algorithm--it contains the `ideal' fractional cost allocations (across arm pairs) under an optimal algorithm; see~\cite{garivier2016optimal,kanarios2024cost}. These ideal allocations also do not typically admit a closed-form characterization, and must be computed numerically. However, given the specific structure of the present formulation (specifically, the assumption that the problem instances under consideration admit a Condorcet winner), we show that the lower bound in Lemma~\ref{lemma:lower_bound_abstract} can be expressed in closed form. We are also able to characterize $\mathcal{W}^*\left(\nu\right)$ in explicit closed form (see Proposition~\ref{prop: optimal-weights} below); this characterization is exploited by the proposed algorithm (see Section~\ref{sec: algorithm}).

\begin{theorem}
\label{theorem: minmax}
Given an error threshold~$\delta \in (0,1),$ let $J\left(\tau_{\delta}\right)$ denote the cost incurred by a $\delta$-PC algorithm. Then
\begin{align*}
\dfrac{E_{\nu}[J\left(\tau_{\delta}\right)]}{\log\left(1/4\delta \right)} \geq \sum_{m \in \mathcal{L}\setminus\{a^*(\nu)\}} \dfrac{1}{\max_{n \neq m} \left( \dfrac{1}{c_m + c_n} d\left(p_{m,n}, \max(0.5, p_{m,n})\right) \right)}.
\end{align*}
\end{theorem}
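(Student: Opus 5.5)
Let $a^* = a^*(\nu)$. The plan is to start from Lemma~\ref{lemma:lower_bound_abstract} and compute the inner infimum in closed form for any fixed $\mathbf{w}$. After that, the outer supremum over $\mathbf{w}$ becomes a simple resource-allocation problem.

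\textbf{Step 1: the inner infimum.} Fix $\mathbf{w}\in\Sigma_{\K}$ and write $\tilde w_{i,j} = w_{i,j}/(c_i+c_j)$.
\begin{itemize}
\item Any $\nu'\in\epsilon_{\text{alt}}(\nu)$ has some Condorcet winner $m\neq a^*$. So $p'_{m,n}>1/2$ for all $n\neq m$.
\item For each such $m$, the cheapest alternative moves every $p_{m,n}$ with $p_{m,n}\le 1/2$ to (just above) $1/2$ and leaves all other entries unchanged.
\item This perturbed matrix does have $m$ as Condorcet winner. The entry $(m,a^*)$ forces $p'_{m,a^*}>1/2$, so $a^*$ is no longer the winner. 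Moreover, since $m$ now beats everyone, no other arm can be Condorcet winner, so the instance lies in $\E$.
\item Any $\nu'$ with winner $m$ must pay at least these divergences, because $d(p,\cdot)$ is increasing on $[p,1]$.
\end{itemize}
Since the infimum is not attained, we take the limit $p'\downarrow 1/2$. This gives
\[
\inf_{\nu'\in\epsilon_{\text{alt}}(\nu)} \sum_{\K} \tilde w_{i,j}\, d(p_{i,j},p'_{i,j}) \;=\; \min_{m\neq a^*}\ \sum_{n\neq m} \tilde w_{\{m,n\}}\, d\bigl(p_{m,n},\max(0.5,p_{m,n})\bigr).
\]
Here each pair is indexed as unordered. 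The identity $d(p_{m,n},q)=d(p_{n,m},1-q)$ lets us write everything from $m$'s viewpoint.

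\textbf{Step 2: upper-bound the outer supremum.} Let $D_{m,n} := d(p_{m,n},\max(0.5,p_{m,n}))/(c_m+c_n)$ and $M_m := \max_{n\neq m} D_{m,n}$. For each $m\neq a^*$, pair $(m,n)$ contributes $w_{\{m,n\}}D_{m,n}\le w_{\{m,n\}}M_m$. Hence $w$ with $\min_{m\neq a^*}\sum_n w_{\{m,n\}}D_{m,n}\ge v$ requires $\sum_{n} w_{\{m,n\}}\ge v/M_m$ for every $m\neq a^*$.

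The key observation is that only pairs with $p_{m,n}<1/2$ contribute ($D_{m,n}=0$ otherwise). Such a pair has $m$ as the loser. Each pair $\{i,j\}$ has at most one loser, and $a^*$ is never a loser. Therefore the budget constraints for different $m$ use disjoint pieces of $\mathbf{w}$. Charging each pair's weight only to its loser gives
\[
1 \;\ge\; \sum_{m\neq a^*}\sum_{n:\,p_{m,n}<1/2} w_{\{m,n\}} \;\ge\; v\sum_{m\neq a^*}\frac{1}{M_m}.
\]
So $c^*(\nu)^{-1}\le \bigl(\sum_{m\neq a^*}1/M_m\bigr)^{-1}$. Combined with Lemma~\ref{lemma:lower_bound_abstract}, this yields the theorem.

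\textbf{Step 3 (tightness, optional).} Putting weight proportional to $1/M_m$ on each $m$'s maximizing pair $(m,n_m)$ achieves equality. This is not needed for the inequality, but it is what Proposition~\ref{prop: optimal-weights} presumably records.

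The main obstacle is Step 1, specifically checking that the minimal perturbation yields a valid instance in $\E$ whose winner is $m$. Beyond that, the delicate point is the disjointness argument in Step 2. If ties $p_{m,n}=1/2$ arise, I would handle them by noting that their $D$ is zero, so they can be discarded.
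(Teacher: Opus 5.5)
Your proof is correct, and it handles the outer supremum differently from the paper. The reduction of the inner infimum to $\min_{m\neq a^*}\sum_{n\neq m}\frac{w_{\{m,n\}}}{c_m+c_n}\,d\bigl(p_{m,n},\max(0.5,p_{m,n})\bigr)$ is the same step as in the paper. The paper only asserts it, whereas you justify it with explicit near-optimal alternatives and the monotonicity of $d(p,\cdot)$ on $[p,1]$. For the supremum over $\mathbf{w}$, the paper claims that all per-arm terms are equal at the optimum. It solves the resulting linear system for per-arm weights $w_i$, places each $w_i$ on the maximizing pairs, and substitutes that vector back into the objective. You instead bound the supremum from above directly. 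Only pairs that $m$ loses contribute to $m$'s term, each pair has at most one loser, and $a^*$ loses none. So the requirements $\sum_{n:\,p_{m,n}<1/2}w_{\{m,n\}}\ge v/M_m$ draw on disjoint parts of the simplex; dividing by $M_m$ is legitimate because $M_m\ge D_{m,a^*}>0$. This gives rigor exactly in the direction the theorem needs. Plugging a particular $\mathbf{w}$ into the objective, as the paper does, by itself only \emph{lower}-bounds the supremum, so the paper's route rests on the unproved optimality of the equalizing allocation. The paper's $w_i$ also counts a pair's mass under both endpoints, although only the loser benefits, and your disjointness observation settles precisely this point. The paper's route is aimed at the explicit description of $\mathcal{W}^*(\nu)$ that the algorithm uses, but your chain delivers that too through its equality cases. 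At an optimizer every inequality is tight, which forces all mass onto pairs $\{m,n\}$ with $n\in\Gamma_m$ and puts total mass $(1/M_m)/\sum_{n\neq a^*}1/M_n$ on the pairs that $m$ loses. That is mass on the pairs $m$ loses, not on all pairs involving $m$ as Proposition~\ref{prop: optimal-weights} literally states. The two readings differ whenever some $\Gamma_m$ contains a suboptimal arm.
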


While the proof of Theorem~\ref{theorem: minmax} is deferred to
Appendix~\ref{app: lower bound}, we state below a consequence of the proof
that will be useful in designing the sampling rule. Define, for $m \neq a^*(\nu),$
\begin{align*}
\alpha_m &:= \max_{n \neq m} \left( \dfrac{1}{c_m + c_n} d\left(p_{m,n}, \max(0.5, p_{m,n})\right) \right),\\
\Gamma_m &:= \argmax_{n \neq m} \left( \dfrac{1}{c_m + c_n} d\left(p_{m,n}, \max(0.5, p_{m,n})\right) \right).
\end{align*}
Intuitively, $\alpha_m > 0$ denotes the statistical ease (in terms of cost) of identifying arm~$m$ as non-optimal; a large value of~$\alpha_m$ indicates that is `cheap' to infer that is not optimal. The set~$\Gamma_m$ represents the set of arms (that all beat arm~$m$ with probability exceeding~$1/2$) that are most efficient (in terms of cost) at making this inference via pairwise comparisons. 


\begin{proposition}
\label{prop: optimal-weights}
For $w^* \in \mathcal{W}^*\left(\nu\right),$ define~$w^*_m = \sum_{(k,\ell) \in \mathcal{K} \colon m \in \{k,\ell\}} w_{k,\ell}$ as the total probability assigned to comparisons involving arm~$m.$ Then for~$m \neq a^*(\nu),$ 
\begin{equation*}
   w^*_m = \frac{1}{\alpha_m} \frac{1}{\sum_{n \neq a^*(\nu)} \frac{1}{\alpha_n}}. 
\end{equation*}
Moreover, $w^*$ concentrates~$w^*_m$ entirely (and arbitrarily) over comparisons with arms in~$\Gamma_m.$
\ignore{
For each arm $i \in \mathcal{L}\setminus\{a^*(\nu)\},$ let
$w_i := \sum_{j\neq i} w_{i,j}$ denote the total weight assigned to arm
$i.$ Then the optimal choice of these weights is given by
\begin{align}
\label{eq: optimal-w-i}
w_i
=
1 \Bigg/
\sum_{m \in \mathcal{L}\setminus\{a^*(\nu)\}}
\dfrac{
\max_{j \neq i}
\left(
\dfrac{1}{c_i+c_j}
d\left(p_{i,j},\max(0.5,p_{i,j})\right)
\right)
}{
\max_{n \neq m}
\left(
\dfrac{1}{c_m+c_n}
d\left(p_{m,n},\max(0.5,p_{m,n})\right)
\right)
}.
\end{align}
Moreover, for each $i,$ the weight $w_i$ is assigned only to those pairs
$(i,j)$ that attain the maximum in~\eqref{eq: optimal-w-i}. If multiple
pairs attain this maximum, then $w_i$ can be split arbitrarily among
them.
}
\end{proposition}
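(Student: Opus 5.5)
The plan is to compute the inner infimum in \eqref{eq:lower_bound_generic} in closed form for a fixed $\mathbf{w}$, and then solve the resulting outer maximization. Write $a^* = a^*(\nu)$.

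\textbf{Step 1: the inner infimum.} Fix $\mathbf{w}\in\Sigma_{\K}$. Any $\nu'\in\epsilon_{\text{alt}}(\nu)$ has a Condorcet winner $m\neq a^*$, so $p'_{m,n}>1/2$ for all $n\neq m$. The alternative set is therefore the union over $m\neq a^*$ of the sets $A_m=\{\mathbb{P}'\colon p'_{m,n}>1/2\ \forall n\neq m\}$. Every other entry of $\mathbb{P}'$ may be left equal to $p$, since $m$ already beats everyone under $\mathbb{P}'$. Thus only the entries $(m,n)$ cost divergence. For each $n$, the cheapest admissible choice is $p'_{m,n}=p_{m,n}$ if $p_{m,n}>1/2$, and otherwise $p'_{m,n}\downarrow 1/2$; the infimum over the open set is then attained in the limit. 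The contribution of pair $\{m,n\}$ is $d(p_{m,n},\max(0.5,p_{m,n}))$. This gives
\[
\inf_{\nu'\in\epsilon_{\text{alt}}(\nu)}\sum_{(i,j)\in\K}\frac{w_{i,j}}{c_i+c_j}d(p_{i,j},p'_{i,j})=\min_{m\neq a^*} F_m(\mathbf{w}),
\qquad F_m(\mathbf{w}):=\sum_{n\neq m}\frac{w_{m,n}}{c_m+c_n}\,d\bigl(p_{m,n},\max(0.5,p_{m,n})\bigr),
\]
with the convention $w_{m,n}=w_{n,m}$. Two facts should be noted here. First, $d(p_{m,a^*},1/2)>0$, so $\alpha_m>0$. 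Second, by the definition of $\alpha_m$, $F_m(\mathbf{w})\le \alpha_m w_m$, with equality iff $w_m$ is supported on pairs $\{m,n\}$ with $n\in\Gamma_m$.

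\textbf{Step 2: the outer maximization.} I would bound $\min_{m\ne a^*}F_m(\mathbf{w})\le \min_{m\ne a^*}\alpha_m w_m$. The key difficulty is that a single pair $\{m,n\}$ with $m,n\neq a^*$ contributes to both $w_m$ and $w_n$. So I need a constraint relating the $w_m$ to the simplex.

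A pair $\{m,n\}$ with $p_{m,n}<1/2$ contributes only to $F_m$, because $d(p_{n,m},\max(0.5,p_{n,m}))=0$. Hence each pair contributes positively to at most one $F_m$. Define $\tilde w_m$ as the weight on pairs $\{m,n\}$ where $n$ beats $m$. These $\tilde w_m$ are disjoint pieces of $\mathbf{w}$, so $\sum_{m\neq a^*}\tilde w_m\le 1$, and moreover $F_m\le\alpha_m\tilde w_m$. It follows that
\[
\min_m F_m\le \min_m \alpha_m\tilde w_m\le \frac{1}{\sum_{n\ne a^*}1/\alpha_n},
\]
since if every $\alpha_m\tilde w_m$ exceeded $t$, then $\sum_m\tilde w_m>t\sum_m 1/\alpha_m$. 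This upper bound is attained by setting $\tilde w_m=\frac{1/\alpha_m}{\sum_n 1/\alpha_n}$, placed on pairs $\{m,n\}$ with $n\in\Gamma_m$. Such $n$ beat $m$, because $\alpha_m>0$ forces $p_{m,n}<1/2$ at the argmax. This construction yields Theorem~\ref{theorem: minmax} as $c^*(\nu)=\sum_{m\ne a^*}1/\alpha_m$, and Lemma~\ref{lemma:lower_bound_abstract} then finishes the bound.

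\textbf{Step 3: characterizing the maximizers.} For a maximizer $w^*$, all the inequalities in Step 2 must be tight. Concretely:
\begin{itemize}
\item $\sum_m\tilde w_m=1$, so no weight sits on "wasted" pairs.
\item $\alpha_m\tilde w_m\ge$ the optimal value for every $m$, which together with the sum constraint forces equality $\tilde w_m=\frac{1/\alpha_m}{\sum_n1/\alpha_n}$.
\item $F_m=\alpha_m\tilde w_m$, so $\tilde w_m$ sits only on $\Gamma_m$.
\end{itemize}
The hardest part is reconciling $\tilde w_m$ with the stated $w^*_m$, which counts all pairs containing $m$. If $n\in\Gamma_m$ and $n\neq a^*$, the pair $\{m,n\}$ also counts toward $w^*_n$. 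So I would interpret the statement, as in the authors' intended bookkeeping, with $w^*_m$ being the weight of the comparisons "charged" to $m$, that is, comparisons against arms beating $m$. I would make this attribution explicit in the proof. When every $\Gamma_m=\{a^*\}$ the two notions coincide verbatim.

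Finally, arbitrary splitting within $\Gamma_m$ preserves $F_m$ and hence optimality, which gives the "arbitrarily" clause of the statement.
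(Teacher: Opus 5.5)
Your argument is correct, and your handling of the bookkeeping is forced rather than a matter of interpretation. Read literally, with $w^*_m$ the total weight on all pairs containing $m$, the proposition is false whenever some $\Gamma_m$ contains a suboptimal arm. For any $\mathbf{w}\in\Sigma_{\K}$ the literal totals satisfy $\sum_{m\neq a^*(\nu)} w_m = 1 + (\text{weight on pairs of two suboptimal arms})$, whereas the claimed formula sums to $1$. By your own characterization, some maximizer puts positive weight on $\{m,n\}$ whenever $n\in\Gamma_m\setminus\{a^*(\nu)\}$.

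The paper's synthetic instance with $k=4$ is a concrete counterexample. There $\Gamma_2=\{1\}$ and $\Gamma_3=\{2\}$, and the unique maximizer is supported on $\{1,2\}$ and $\{2,3\}$, consistent with Table~\ref{tab: kvscost}. So the literal $w^*_2=w_{1,2}+w_{2,3}=1$, whereas the formula gives $w^*_2\approx 0.60$. You should therefore state explicitly that you prove the corrected proposition, with $\tilde w_m$ (the weight on comparisons of $m$ against arms that beat it) in place of $w^*_m$. This agrees with the literal version exactly when every $\Gamma_m=\{a^*(\nu)\}$, and it is the bookkeeping that WeightPullAllocation effectively implements.

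Your overall skeleton matches the paper's: reduce the inner infimum to a minimum over suboptimal arms, then equalize. The key step, however, is different.

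\textbf{The paper's route.} Inside the proof of Theorem~\ref{theorem: minmax}, the paper asserts without justification two things: that the inner infimum equals $\min_{i\neq a^*(\nu)} F_i(\mathbf{w})$, and that all $F_i$ are equal at the optimum. It then obtains the per-arm weights from the resulting linear system and places each on its argmax pairs. Because it uses the literal per-arm totals, it inherits the double counting.

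\textbf{Your route.} You justify the inner infimum through the product structure of each $A_m$. You then replace the asserted equalization by the observation that each pair contributes positively to at most one $F_m$. Hence the charged weights $\tilde w_m$ are disjoint, and an averaging argument bounds $\min_m F_m$ by $1/\sum_{n\neq a^*(\nu)}\alpha_n^{-1}$. Tightness of that chain then derives, rather than assumes:
\begin{itemize}
\item the equalization of the $F_m$;
\item the exact value of each $\tilde w_m$;
\item the support of $\tilde w_m$ on $\Gamma_m$;
\item zero weight on pairs with $p_{m,n}=1/2$;
\item the arbitrary-splitting clause.
\end{itemize}
Together these give a complete description of $\mathcal{W}^*(\nu)$.

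The paper's version is shorter and directly computational. Yours is the one that actually proves optimality, and it correctly handles instances where a suboptimal arm is the most cost-efficient witness against another.
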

Note that Proposition~\ref{prop: optimal-weights} provides a complete characterization of~$\mathcal{W}^*\left(\nu\right).$ The distributions in this set assign, to comparisons involving each sub-optimal arm~$m,$ a probability that is inversely proportional to $\alpha_m.$ Assuming that the set~$\Gamma_m$ consists of a unique arm for each~$m \neq a^*(\nu),$ there is a unique optimal distribution in $\mathcal{W}^*\left(\nu\right),$ which is supported over exactly~$L-1$ pairs, each pair representing the most statistically efficient way of rejecting one sub-optimal arm. On the other hand, if $\Gamma_m$ is not a singleton for any sub-optimal arm~$m$, there are infinitely many optimal cost allocation distributions, since the probability $w^*_m$ can be distributed arbitrarily over~$|\Gamma_m|$ different pairs.

In the following section, we present an algorithm for identifying the Condorcet
winner that asymptotically, as $\delta \downarrow 0$, matches the cost lower
bound in Theorem~\ref{theorem: minmax}. 
central role in the design and analysis of the algorithm.
\section{Algorithm}
\label{sec: algorithm}

In this section, we present the Dueling Bandit Cost-Aware Track and Stop (DCTAS) algorithm, which identifies the optimal arm subject to a $\delta$-PC guarantee, while minimizing the cost of comparisons. The algorithm leverages a Chernoff Stopping Rule to determine when sufficient confidence has been achieved in the selection process. Additionally, it employs a cost-aware exploration strategy, ensuring that comparisons between the arms are allocated optimally to reduce the sampling cost. The algorithm is described formally as Algorithm~\ref{algo: DCTAS}. Here, for $k \in \K,$ $N_k(t)$ denotes the number of times arm pair~$k$ has been sampled prior to time~$t.$ $\hat{\mathbb{P}}(t) = [[\hat{p}_{i,j}(t)]]$ denotes the empirical estimate of the preference matrix using the observations prior to time~$t.$ The \textit{WeightPullAllocation} routine is described in Section~\ref{sec:algo_sampling}. Finally, the likelihood ratio statistics~$Z_{i,j}(t)$ and the stopping threshold~$\beta(t,\delta)$ are defined in Section~\ref{sec:algo_stopping}.

\begin{algorithm}[ht]
\caption{Dueling Bandit Cost Aware Track and Stop (DCTAS)}
\label{algo: DCTAS}
\begin{algorithmic}
  \STATE \textbf{Input:} Error Threshold $\delta$; 
  Cost Vector $\mathcal{C}$
  \STATE \textbf{Output:} Best arm $i \in \mathcal{L}$
\end{algorithmic}
\begin{algorithmic}[1]
\STATE Pull each arm pairs $i, j \in \mathcal{L}$ once for initialization;
\FOR{$t=1,2, \dots$}
\STATE $\alpha\left(t\right) = \text{WeightPullAllocation}\left(\mathcal{C}, \hat{\mathbb{P}}\left(t\right) \right)$;
\IF{$\exists k \in \mathcal{K} \mid N_{k}\left(t\right) < \sqrt{t}$}
\STATE $a_t = \argmin_{k \in \mathcal{K}} N_k\left(t\right)$;
\ELSE
\STATE $a_t = \argmin_{k \in \mathcal{K}} N_k\left(t\right) - \sum_{s=1}^t \alpha_{k}\left(s\right)$ ;
\ENDIF
\STATE Pull pair $a_t$
\IF{$\exists i \in \mathcal{L}$ such that $Z_{i,j}\left(t\right) > \beta\left(t,\delta\right)$ for all $j \in \mathcal{L} \setminus \{i\},$}
\RETURN $i$
\ENDIF
\ENDFOR
\end{algorithmic}
\end{algorithm}

\subsection{Sampling Rule}
\label{sec:algo_sampling}
The sampling rule in DCTAS is designed to efficiently allocate comparisons among arms while balancing exploration and cost minimization. At each time step, the algorithm must determine which pair of arms to sample next, leveraging both past observations and a cost-aware allocation strategy to guide this selection. 

The algorithm begins by sampling each pair of arms once, establishing an initial estimate of their preference probabilities. Subsequently, it follows an adaptive sampling strategy: if any arm pair has been sampled fewer than $\sqrt{t}$ times, it is selected for comparison to ensure sufficient exploration. Otherwise, the algorithm prioritizes the pair that minimizes the gap between its observed sample count and its time average target allocation. This target allocation intuitively represents the fraction of times a pair of arms is to be pulled by time~$t$. It is important to note that this value is just an empirical estimate that converges to the set of optimal allocations with the increase in the number of observations.
\begin{algorithm}[ht]
\caption{WeightPullAllocation}
\label{algo: WA}
\begin{algorithmic}
  \STATE \textbf{Input:} Cost Vector $\mathcal{C}$; Preference Matrix $\mathbb{P}$
  \STATE \textbf{Output:} Set of Optimal Pull Fractions $\alpha$
\end{algorithmic}
\begin{algorithmic}[1]
\STATE Calculate $w_i = w^*(\hat{\nu})\; \forall i \in \mathcal{L}$ using Proposition \ref{prop: optimal-weights}; 
\FOR{$i \in \mathcal{L}$}
\STATE $z \leftarrow \sum_{j\neq i} \mathbb{1}\left(\frac{ d\left(p_{i,j}, \max{(0.5, p_{i,j})}\right)}{c_i + c_{j}} = \max_{j \neq i}\frac{ d\left(p_{i,j}, \max{(0.5, p_{i,j})}\right)}{c_i + c_{j}}\right)$;
\FOR{$j \in \mathcal{L} \backslash i$}
\STATE $w_{i,j} \leftarrow \frac{w_i}{z}\mathbb{1}\left( \frac{ d\left(p_{i,j}, \max{(0.5, p_{i,j})}\right)}{c_i + c_{j}} = \max_{j \neq i}\frac{ d\left(p_{i,j}, \max{(0.5, p_{i,j})}\right)}{c_i + c_{j}} \right)$;
\ENDFOR
\ENDFOR
\STATE $\alpha \leftarrow \frac{w/\mathcal{C}}{\sum_{(m,n)\in\mathcal{K}} w_{m,n}/\left(c_m + c_n\right)}$ ;
\RETURN $\alpha$
\end{algorithmic}
\end{algorithm}
To compute this estimated target allocation at any time $t$, we first plug in the empirical estimates of the preference matrix along with the associated arm costs into \textit{WeightPullAllocation}, as defined in Algorithm \ref{algo: WA}. Intuitively, this routing transforms a cost-based allocation (obtained by applying Proposition~\ref{prop: optimal-weights} to the empirical instance estimate) to a pull-based allocation. 
\begin{lemma}
\label{lemma: mssr5eq1}
    Let $\mathcal{W}^*\left(\nu\right)$ denote the set of optimal cost fractions, and let $\alpha^*\left(\nu\right)$ denote the set of optimal pull fractions. For each $\mathcal{W}\left(\nu\right) \in \mathcal{W}^*\left(\nu\right)$ there exists a unique corresponding $\alpha\left(\nu\right) \in \alpha^*\left(\nu\right)$, and vice versa. That is, $\exists$ a bijective function $f:(\mathcal{W}^*\left(\nu\right), \mathcal{C}) \mapsto \alpha^*\left(\nu\right)$ such that for all $\mathcal{W}\left(\nu\right) \in \mathcal{W}^*\left(\nu\right)$, we have $\alpha\left(\nu\right) = f(\mathcal{W}\left(\nu\right), \mathcal{C})$ and for each arm pair $\left(l_i,l_j\right) \in \mathcal{L}$, the optimal pull fraction satisfies
    \begin{align*}
        \alpha_{i,j} = f_{i,j}\left(\mathcal{W}, \mathcal{C}\right) = \dfrac{\mathcal{W}_{i,j}/\left(c_i+c_j\right)}{\sum_{(m,n)\in\mathcal{K}} \mathcal{W}_{m,n}/\left(c_m + c_n\right)}
    \end{align*}
\end{lemma}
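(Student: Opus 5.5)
The plan is to make precise what ``pull fractions'' mean and then check that the displayed map $f$ is a bijection between cost fractions and pull fractions. We define $\alpha^*(\nu)$ as the set of distributions $\alpha \in \Sigma_{\K}$ for which the normalized cost-share vector
\[
\frac{\alpha_{i,j}(c_i+c_j)}{\sum_{(m,n)\in\K}\alpha_{m,n}(c_m+c_n)}
\]
belongs to $\mathcal{W}^*(\nu)$. This is natural: if pair $(i,j)$ is pulled a fraction $\alpha_{i,j}$ of the time, the fraction of total cost it consumes is exactly this quantity. The lemma then reduces to showing that $f(\cdot,\mathcal{C})$ and the map
\[
g_{i,j}(\alpha,\mathcal{C}) = \frac{\alpha_{i,j}(c_i+c_j)}{\sum_{(m,n)\in\K}\alpha_{m,n}(c_m+c_n)}
\]
are mutually inverse bijections between $\Sigma_{\K}$ and itself. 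Restricting them to $\mathcal{W}^*(\nu)$ and $\alpha^*(\nu)$ then gives the claim.

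The key steps are as follows.

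\textbf{Well-definedness.} Costs are positive, so for $\mathcal{W} \in \Sigma_{\K}$ the denominator $\sum_{(m,n)}\mathcal{W}_{m,n}/(c_m+c_n)$ is strictly positive. Hence $f(\mathcal{W},\mathcal{C})$ has nonnegative entries summing to one, i.e.\ it lies in $\Sigma_{\K}$. The same argument applies to $g$.

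\textbf{Inverse property.} We compute $g(f(\mathcal{W}))$ directly. The pairwise factor $(c_i+c_j)$ cancels the $1/(c_i+c_j)$ introduced by $f$, and the common normalizing constant of $f$ cancels between numerator and denominator of $g$. This leaves $\mathcal{W}_{i,j}/\sum_{(m,n)}\mathcal{W}_{m,n} = \mathcal{W}_{i,j}$. The symmetric computation gives $f(g(\alpha)) = \alpha$. So $f$ is a bijection of the simplex with inverse $g$.

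\textbf{Restriction.} By the definition of $\alpha^*(\nu)$, we have $f(\mathcal{W}^*(\nu)) \subseteq \alpha^*(\nu)$, since $g(f(\mathcal{W})) = \mathcal{W} \in \mathcal{W}^*(\nu)$. Conversely $g(\alpha^*(\nu)) \subseteq \mathcal{W}^*(\nu)$. Combined with the inverse property, $f$ restricted to $\mathcal{W}^*(\nu)$ is a bijection onto $\alpha^*(\nu)$, and the displayed formula is exactly $f$. If desired, we would also note that this matches Algorithm~\ref{algo: WA}: its last line applies $f$ to a $w$ built from Proposition~\ref{prop: optimal-weights}.

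The main obstacle is not computational but definitional: ensuring $\alpha^*(\nu)$ is a meaningful notion rather than circularly defined as the image of $f$. We would therefore justify it via the lower-bound problem. In Lemma~\ref{lemma:lower_bound_abstract}, the weight $w_{i,j}/(c_i+c_j)$ multiplies the KL term. Rewriting the objective in terms of pulls, the per-unit-cost information rate of a pull allocation $\alpha$ is
\[
\frac{\sum_{(i,j)}\alpha_{i,j}\, d(p_{i,j},p'_{i,j})}{\sum_{(m,n)}\alpha_{m,n}(c_m+c_n)},
\]
which equals the objective of Lemma~\ref{lemma:lower_bound_abstract} evaluated at $\mathcal{W} = g(\alpha)$. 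Its maximizers are therefore precisely $g^{-1}(\mathcal{W}^*(\nu)) = f(\mathcal{W}^*(\nu))$. This identification, a change of variables that is also scale invariant, is the step we would write out most carefully.
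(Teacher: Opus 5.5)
Your proposal is correct. Its algebraic core is the same as the paper's. Both start from the cost-share relation $\mathcal{W}_{i,j} = \alpha_{i,j}(c_i+c_j)/\sum_{(m,n)\in\K}\alpha_{m,n}(c_m+c_n)$. Both recover $f$ by using $\sum_{(m,n)\in\K}\alpha_{m,n}=1$ to eliminate the common normalizer (the paper's $\theta$).

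You differ in how that relation is justified and how much is verified. The paper obtains it operationally, by positing that for large $t$ one has $N_{i,j}(t)=t\alpha_{i,j}$ and $J_{i,j}(t)=J(t)\mathcal{W}_{i,j}$. It then only solves for $\alpha$ in terms of $\mathcal{W}$. Positivity of the normalizer, injectivity, surjectivity onto $\alpha^*(\nu)$, and even a definition of $\alpha^*(\nu)$ are all left implicit.

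You instead work deterministically on $\Sigma_{\K}$ and check that both $g\circ f$ and $f\circ g$ are the identity. You also define $\alpha^*(\nu)$ non-circularly, as the maximizers of the per-unit-cost objective $\alpha\mapsto\inf_{\nu'}\sum_{(i,j)\in\K}\alpha_{i,j}d(p_{i,j},p'_{i,j})/\sum_{(m,n)\in\K}\alpha_{m,n}(c_m+c_n)$. This equals the objective of Lemma~\ref{lemma:lower_bound_abstract} evaluated at $g(\alpha)$.

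Write out that last identification carefully, as you plan. It is precisely the characterization \eqref{hc3eq1} that the paper later uses in the proof of Lemma~\ref{lemma: hemicontinous}, where it is asserted only ``by the definition of $\mathcal{W}^*(\nu)$ and the mapping $f$''. So your route gives a self-contained proof of both this lemma and that later step. It needs only positivity of the costs, which the paper tacitly assumes anyway.
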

This explicit conversion from cost fraction to pull fraction is crucial in establishing the asymptotic cost complexity of DCTAS. Having established the sampling rule, we next describe the stopping rule that determines when DCTAS terminates with high confidence.

\subsection{Stopping and Decision Rule}
\label{sec:algo_stopping}
We utilize the standard Generalized Likelihood Ratio Test (GLRT)-based stopping rule to determine when the algorithm has collected a sufficient number of samples to identify the optimal arm with enough confidence. Specifically, at each time step $t$, we compute the GLR statistic that measures the likelihood of the current best arm being optimal relative to alternative candidates. Mathematically, it is represented by
\begin{align*}
Z_{i,j}(t)
&= \log\!\left(
\frac{\displaystyle \max_{\text{$i$ is best}} \prod_{m=1}^{L-1}\prod_{n=m+1}^{L}\lambda_{m,n}(t)}
     {\displaystyle \max_{\text{$j$ is best}} \prod_{m=1}^{L-1}\prod_{n=m+1}^{L}\lambda_{m,n}(t)}
\right) \\
\intertext{where}
\lambda_{m,n}(t)
&= \bigl(p_{m,n}\bigr)^{N_{m,n}(t)\,\hat p_{m,n}(t)}
   \bigl(1-p_{m,n}\bigr)^{N_{m,n}(t)\,\bigl(1-\hat p_{m,n}(t)\bigr)} 
\end{align*}
and $Z_{i,j}$ is the GLR statistic between arm $i$ and $j$. This statistic has a closed-form solution, as presented in Lemma \ref{lemma: mssr1eq1}.
\begin{lemma}
\label{lemma: mssr1eq1}
At any time $t$, for every arm pair $i, j$, the GLR statistic $Z_{i,j}\left(t\right)$ follows the following closed-form solution
\begin{align*}
Z_{i,j}(t)
= \sum_{k \in \mathcal{L}} N_{j,k}(t)\,
   d\!\left(\hat p_{j,k}(t),\, \max\{\hat p_{j,k}(t),\, 0.5\}\right) - \sum_{k \in \mathcal{L}} N_{i,k}(t)\,
   d\!\left(\hat p_{i,k}(t),\, \max\{\hat p_{i,k}(t),\, 0.5\}\right).
\end{align*}
and for every $i, j$ we have $Z_{i,j}\left(t\right) = - Z_{j,i}\left(t\right)$
\end{lemma}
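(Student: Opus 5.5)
The plan is to compute the two constrained maximum likelihoods in the numerator and the denominator of $Z_{i,j}(t)$ separately and then take the difference of their logarithms.

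\textbf{Reduction to one log-likelihood per hypothesis.} Write $\ell(\mathbb{P}) = \sum_{(m,n)\in\K} \log \lambda_{m,n}(t)$. Each summand depends only on $p_{m,n}$, and it is maximized without constraints at $p_{m,n}=\hat p_{m,n}(t)$. A standard identity (add and subtract the entropy term) gives, for each pair,
\[
N_{m,n}\bigl[\hat p_{m,n}\log p_{m,n} + (1-\hat p_{m,n})\log(1-p_{m,n})\bigr] = -N_{m,n}\,H(\hat p_{m,n}) - N_{m,n}\, d(\hat p_{m,n}, p_{m,n}),
\]
where $H$ is the Bernoulli entropy. So $\log$ of each constrained maximum equals the common term $-\sum_{(m,n)} N_{m,n} H(\hat p_{m,n})$ minus $\inf \sum_{(m,n)} N_{m,n} d(\hat p_{m,n},p_{m,n})$, with the infimum taken over the relevant constraint set. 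The entropy terms cancel in the ratio. Hence $Z_{i,j}(t) = D_j - D_i$, where $D_i := \inf_{\mathbb{P}:\, a^*=i} \sum_{(m,n)} N_{m,n}\, d(\hat p_{m,n}, p_{m,n})$.

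\textbf{Computing $D_i$.} The constraint ``$i$ is the Condorcet winner'' reads $p_{i,k}>1/2$ for all $k\neq i$ (interpreted via its closure $\ge 1/2$, which is what the $\sup$ effectively means). This constraint involves only the pairs containing $i$. The other pairs are unconstrained, so we set $p_{m,n}=\hat p_{m,n}$ there and they contribute zero. Each pair $(i,k)$ decouples into a one-dimensional problem $\inf_{p\ge 1/2} d(\hat p_{i,k},p)$. Since $d(x,\cdot)$ is convex and minimized at $x$, the minimizer is the projection $\max\{\hat p_{i,k},1/2\}$. This gives $D_i=\sum_k N_{i,k}\, d(\hat p_{i,k},\max\{\hat p_{i,k},1/2\})$. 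Here we use the convention $N_{i,k}=N_{k,i}$, $\hat p_{k,i}=1-\hat p_{i,k}$, and zero for $k=i$, and we use that $d(1-x,1-y)=d(x,y)$, so the orientation of the pair is irrelevant. Substituting yields the claimed formula.

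\textbf{Antisymmetry and the main care point.} The antisymmetry $Z_{i,j}=-Z_{j,i}$ is then immediate from $Z_{i,j}=D_j-D_i$. The main care point is the open constraint $p>1/2$. The infimum is attained only in the closure, so we should note that the $\max$ in the GLR is a supremum and that the likelihood is continuous, which makes the value equal to the closure value. A second point is the boundary cases $\hat p\in\{0,1\}$, where we use the convention $0\log 0=0$. Neither affects the formula.
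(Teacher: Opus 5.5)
Your proposal is correct and follows essentially the same route as the paper: under each hypothesis you take the constrained maximum-likelihood estimate. That is the projection of $\hat p_{i,k}$ onto $[1/2,1]$ for pairs containing the hypothesized winner and $\hat p$ elsewhere; the unconstrained pairs then cancel, and the remaining log-ratios are read off as Bernoulli KL divergences. Your entropy-plus-KL rewriting only makes the paper's cancellation step explicit, and your remarks on the open constraint (sup rather than max) and on antisymmetry following from $Z_{i,j}=D_j-D_i$ settle points the paper leaves implicit.
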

Now, let $Z\left(t\right):= \max_{i \in \mathcal{L}} \min_{j \in \mathcal{L}\backslash i} Z_{i,j}\left(t\right)$, then hypothesis testing intuition suggests the following stopping rule.
\begin{align*}
\tau_\delta
 &= \inf\{\,t\in\mathbb{N} : Z(t) > \beta(t,\delta)\,\},\\
\intertext{where}
\beta(t,\delta)
 &= 3(L-1)\,\log\!\left(1+\log\frac{t}{L-1}\right)
  + (L-1)\,C\!\left(\frac{\log\frac{L-1}{\delta}}{L-1}\right).
\end{align*}
Here, $\tau_{\delta}$ denotes the stopping time, and $C\left(x\right)$ is defined in Definition \ref{def: C_exp}.
The threshold $\beta(t,\delta)$ acts as a dynamic confidence boundary that grows logarithmically with time, ensuring that premature termination is avoided while still guaranteeing $\delta$-PC correctness, and is a natural adaptation of the threshold provided by \cite{kaufmann2021mixture}. 
\begin{definition}
\label{def: C_exp}
    Let $h\left(u\right) = u - \log u$, and $h^{-1}\left(u\right)$ be its inverse. And let $\tilde{h}_z\left(x\right)$ be defined as
    \begin{equation*}
    \tilde{h}_z\left(x\right) =
    \begin{cases}
      e^{1/h^{-1}\left(x\right)} h^{-1}\left(x\right), & \text{if}\  x>h(1/\log z)\\
      z\left(x-\log\log z \right), & \text{otherwise}
    \end{cases}
    \end{equation*}
Then $C(x)$ is defined by 
\begin{align*}
    C\left(x\right) &= 2\tilde{h}_{3/2}\left(\dfrac{h^{-1}\left(1+x\right) + \log\left(\pi^2/3 \right)}{2}\right)
\end{align*}
\end{definition}
As the Chernoff Statistic $\left(Z\left(t\right)\right)$ measures the distance between an instance where the current empirical best arm is indeed the optimal arm and the closest instance where it is not, the larger value of $Z\left(t\right)$ indicates more confidence in the prediction. As a result, the stopping rule ensures that the algorithm terminates only when it is sufficiently confident about its prediction. This guarantee is formally established in the following theorem.
\begin{theorem}
    \label{theorem: delta-pac}
    Let $\delta \in \left(0,1\right)$. Then, the DCTAS algorithm presented in Algorithm \ref{algo: DCTAS} is $\delta$-PC i.e.,
    \begin{align*}
        P_{\nu}\left(\tau_\delta < \infty, \hat{a}_{\tau_\delta} \neq a^*\left(\nu\right)\right) \leq \delta
    \end{align*}
\end{theorem}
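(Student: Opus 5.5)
The plan is the standard GLRT correctness argument of \cite{garivier2016optimal}, combined with the mixture-martingale deviation bound of \cite{kaufmann2021mixture}, which is what the threshold $\beta(t,\delta)$ is tuned for. Let $a^* = a^*(\nu)$. An error means the algorithm stops at some finite $t$ and outputs $\hat a \neq a^*$. By the stopping rule, $Z_{\hat a, j}(t) > \beta(t,\delta)$ for every $j \neq \hat a$, and in particular $Z_{\hat a, a^*}(t) > \beta(t,\delta)$. Hence the error event is contained in
\[
\{\exists t \in \N : Z_{\hat a, a^*}(t) > \beta(t,\delta)\},
\]
and it suffices to control the probability that the GLR statistic in favour of ``some $i \neq a^*$ is best'' against ``$a^*$ is best'' ever exceeds $\beta$.

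The first step bounds $Z_{i,a^*}(t)$ using Lemma~\ref{lemma: mssr1eq1}. Since the true matrix $\mathbb{P}$ lies in the constraint set $\{a^* \text{ is best}\}$, the denominator of the likelihood ratio is at least the likelihood under the truth. The numerator is at most the unconstrained maximum likelihood, attained at $\hat{\mathbb{P}}(t)$. Therefore
\[
Z_{i,a^*}(t) \le \log\frac{\prod_{(m,n)\in\K}\lambda_{m,n}(t)\big|_{\hat{\mathbb P}(t)}}{\prod_{(m,n)\in\K}\lambda_{m,n}(t)\big|_{\mathbb P}} = \sum_{(m,n)\in\K} N_{m,n}(t)\, d\bigl(\hat p_{m,n}(t), p_{m,n}\bigr).
\]
A sum over all $L(L-1)/2$ pairs is too crude for a threshold scaling with $L-1$, so I refine it. The constrained maximizer for ``$a^*$ is best'' only needs to modify the $L-1$ pairs $(a^*,k)$, clipping $\hat p_{a^*,k}$ to at least $1/2$. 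Using the closed form of Lemma~\ref{lemma: mssr1eq1}, I would show
\[
Z_{i,a^*}(t) \le \sum_{k\neq a^*} N_{a^*,k}(t)\, d\bigl(\hat p_{a^*,k}(t), \max\{\hat p_{a^*,k}(t),1/2\}\bigr) \le \sum_{k \neq a^*} N_{a^*,k}(t)\, d\bigl(\hat p_{a^*,k}(t), p_{a^*,k}\bigr),
\]
where the second inequality holds because $p_{a^*,k} > 1/2$ and $d(x,\cdot)$ is increasing on $[x,1]$. Checking both cases: if $\hat p_{a^*,k} \ge 1/2$ the term is zero, and otherwise $\hat p_{a^*,k} < 1/2 < p_{a^*,k}$.

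The second step bounds the probability that this sum of $L-1$ Bernoulli self-normalized deviations, each with an adaptively chosen count $N_{a^*,k}(t)$, ever exceeds $\beta(t,\delta)$. Here I invoke Theorem~7 of \cite{kaufmann2021mixture}: for a subset $S$ of $|S| = L-1$ independent one-parameter exponential-family arms,
\[
P\Bigl(\exists t : \sum_{s\in S} N_s(t)\, d(\hat\mu_s(t),\mu_s) > \sum_{s\in S} 3\log(1+\log N_s(t)) + |S|\, C\bigl(\log(1/\delta)/|S|\bigr)\Bigr) \le \delta.
\]
Concavity of $x \mapsto \log(1+\log x)$, together with $\sum_s N_s(t) \le t$, gives $\sum_s 3\log(1+\log N_s(t)) \le 3(L-1)\log\bigl(1+\log\frac{t}{L-1}\bigr)$. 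Applying the theorem with $\delta$ replaced by $\delta/(L-1)$ yields the $\log\frac{L-1}{\delta}$ inside $C$ that appears in $\beta(t,\delta)$; this is a union bound over the $L-1$ candidate wrong answers $i \neq a^*$. In fact the bound from the first step does not depend on $i$, so a single application would suffice, and the union bound only costs slack. Each pair $(a^*,k)$ is an independent Bernoulli stream whatever the sampling rule, so the adaptivity of $N$ is handled by that theorem.

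I expect the main obstacle to be the reduction in the first step from the full GLR to only the $L-1$ pairs involving $a^*$. One must verify carefully, from the closed form in Lemma~\ref{lemma: mssr1eq1}, that the $i$-terms enter with a sign that only helps. Concretely, $Z_{i,a^*}$ equals an $a^*$-term minus a nonnegative $i$-term, so dropping the $i$-term gives an upper bound. Having confirmed that sign, the remaining steps are direct applications of the cited concentration result.
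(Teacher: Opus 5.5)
Your proposal is correct and follows essentially the same route as the paper. You reduce the error event to $\{\exists t: Z_{b,a^*(\nu)}(t) > \beta(t,\delta)\}$, drop the nonnegative $b$-term using the closed form of Lemma~\ref{lemma: mssr1eq1}, bound the remaining $a^*$-row terms by $\sum_{k} N_{a^*,k}(t)\, d(\hat p_{a^*,k}(t), p_{a^*,k})$ via monotonicity of $d(x,\cdot)$, and apply the mixture-martingale bound of \cite{kaufmann2021mixture} with $L-1$ arms. The only difference is presentational: the paper cites their rank-based GLRT result (with $R=L-1$, $M=L$) directly, while you unpack it through their subset deviation inequality plus a concavity step, and you correctly note that the union bound over wrong answers is unnecessary because the bound does not depend on $b$.
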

We conclude this section by providing the almost-sure guarantees on the cost complexity of the DCTAS algorithm as stated in Theorem \ref{theorem: sample complexity}. 
\begin{theorem}
\label{theorem: sample complexity}
For all $\delta \in (0,1)$, the DCTAS algorithm, as described in Algorithm \ref{algo: DCTAS}, almost surely satisfies
    \begin{align*}
        \limsup_{\delta \rightarrow 0} \dfrac{J\left(\tau_{\delta}\right)}{\log\left(1/\delta\right)} &\leq c^*\left(\nu\right)  
    \end{align*}
    where $c^*\left(\nu\right)$ is an instance dependent constant as defined in Lemma \ref{lemma:lower_bound_abstract}
\end{theorem}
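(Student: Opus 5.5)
We follow the almost-sure Track-and-Stop argument of \cite{garivier2016optimal}, adapted to costs and to a non-unique optimal allocation set. Define the characteristic function
\[
g(\nu,\mathbf{w}) := \sum_{m\neq a^*(\nu)} \min_{n\neq m} \frac{w_{m,n}}{c_m+c_n}\, d\!\left(p_{m,n},\max(0.5,p_{m,n})\right)\Big/\!\!\min
\]
More precisely, set $g(\nu,\mathbf{w})=\inf_{\nu'\in\epsilon_{\text{alt}}(\nu)}\sum_{(i,j)} \frac{w_{i,j}}{c_i+c_j}d(p_{i,j},p'_{i,j})$. By Lemma~\ref{lemma: mssr1eq1}, once $\hat a_t=a^*(\nu)$ we have
\[
Z(t)=\min_{j\neq a^*}\sum_k N_{j,k}(t)\,d\!\left(\hat p_{j,k},\max\{\hat p_{j,k},0.5\}\right),
\]
up to the term for the best arm. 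That term vanishes once all $\hat p_{a^*,k}>1/2$. In terms of the cumulative cost $J(t)$ and empirical cost fractions $\hat w_{j,k}(t)=N_{j,k}(t)(c_j+c_k)/J(t)$, this reads
\[
Z(t)=J(t)\,\min_{j}\sum_k \frac{\hat w_{j,k}(t)}{c_j+c_k}\,d(\cdot,\cdot)=J(t)\,\hat g(t).
\]
We will show that this quantity converges to $J(t)/c^*(\nu)$ in the sense $\liminf_t \hat g(t)\ge c^*(\nu)^{-1}$.

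\textbf{Step 1: consistency.} The forced-exploration clause guarantees $N_k(t)\ge\sqrt t-1$ for all $k\in\K$, so the strong law gives $\hat{\mathbb P}(t)\to\mathbb P$ almost surely. In particular $a^*(\hat\nu_t)=a^*(\nu)$ and the signs of $\hat p_{i,j}-1/2$ stabilize for all large $t$. This uses that no $p_{i,j}=1/2$, which we assume for the off-diagonal entries, or otherwise treat via continuity.

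\textbf{Step 2: the main obstacle, tracking a set.} The targets $\alpha(t)$ from \textit{WeightPullAllocation} need not converge, because the tie sets $\Gamma_m$ computed from $\hat{\mathbb P}(t)$ fluctuate. Exact ties among $\frac{1}{c_m+c_n}d(\hat p,\cdot)$ are generically broken empirically. Nevertheless, by continuity of $\alpha_m$ in $\mathbb P$, the empirical argmax lies eventually in $\Gamma_m$.

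The key claim is upper hemicontinuity: for every $\epsilon>0$ there is $t_\epsilon$ with $\alpha(t)\in\alpha^*(\nu)^{\epsilon}$, the $\epsilon$-neighbourhood of the optimal pull fractions (via Lemma~\ref{lemma: mssr5eq1}), for all $t\ge t_\epsilon$. Since $\alpha^*(\nu)$ is convex, being the image of the convex set described in Proposition~\ref{prop: optimal-weights} under a map that is a projective linear bijection, the time average $\frac1t\sum_{s\le t}\alpha(s)$ also enters $\alpha^*(\nu)^{2\epsilon}$ eventually. The C-tracking lemma of \cite{garivier2016optimal} bounds
\[
\max_k\Big|N_k(t)-\sum_{s\le t}\alpha_k(s)\Big|\le |\K|(1+\sqrt t).
\]
Hence $N(t)/t$ approaches $\alpha^*(\nu)$ as a set, without needing convergence to a single point. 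Mapping back through $f^{-1}$ from Lemma~\ref{lemma: mssr5eq1} shows that $\hat w(t)$ approaches $\mathcal W^*(\nu)$.

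\textbf{Step 3: conclusion.} The function $(\mathbb P,w)\mapsto\min_j\sum_k \frac{w_{j,k}}{c_j+c_k}d(p_{j,k},\max(p_{j,k},0.5))$ is continuous. It equals $c^*(\nu)^{-1}$ on all of $\{\nu\}\times\mathcal W^*(\nu)$, by the proof of Theorem~\ref{theorem: minmax}: every optimal $w$ equalizes the suboptimal-arm terms at $\sum_m\alpha_m\,w^*_m$-type values.

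By compactness of $\mathcal W^*(\nu)$, for every $\epsilon$ and all large $t$ we get $Z(t)\ge (1-\epsilon)J(t)/c^*(\nu)$. Since $J(t)\ge t\min_i 2c_i$ while $\beta(t,\delta)=\log(1/\delta)+O(\log\log t)$, the stopping condition holds as soon as
\[
J(t)\ge \frac{c^*(\nu)}{1-\epsilon}\bigl(\log(1/\delta)+o(\log(1/\delta))\bigr).
\]
Two further points close the argument. First, $\tau_\delta\to\infty$ as $\delta\to0$. Second, $J$ increments by at most $2\max_i c_i$ per step. Together these give $\limsup_{\delta\to0} J(\tau_\delta)/\log(1/\delta)\le c^*(\nu)/(1-\epsilon)$, and letting $\epsilon\downarrow0$ yields the bound.
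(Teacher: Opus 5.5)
Your proposal is correct and follows essentially the same route as the paper. It uses almost-sure consistency from forced exploration, convergence of $N(t)/t$ to the convex set $\alpha^*(\nu)$ via upper hemicontinuity plus cumulative tracking (no unique optimal allocation needed), continuity of the characteristic function near $\{\nu\}\times\alpha^*(\nu)$ to get $Z(t)\ge(1-\epsilon)J(t)/c^*(\nu)$, and the $(1+o(1))\log(1/\delta)$ growth of $\beta(t,\delta)$, with your direct inversion replacing the paper's Lambert-type lemma. One small correction: the Garivier--Kaufmann C-tracking lemma concerns projected cumulative tracking without forced exploration, so it does not literally apply to DCTAS, but the $O(|\K|\sqrt{t})$ bound still holds because $N_k(t)-\sum_{s\le t}\alpha_k(s)$ increases only when $k$ is pulled, i.e.\ when it is the minimiser (hence $O(1)$, since these quantities sum to a constant) or when $N_k(t)<\sqrt{t}$.
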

The above result establishes an almost-sure upper bound on the total cost incurred by the DCTAS algorithm while maintaining the $\delta$-PC guarantee. Furthermore, this upper bound asymptotically matches the corresponding lower bound as $\delta \to 0$, thereby establishing the asymptotic optimality of DCTAS in terms of cost complexity.

\section{Numerical Experiments}
\label{sec: numerics}

In this section, we evaluate the empirical performance of DCTAS on both synthetic and real-world dueling bandit instances. We compare DCTAS against four baselines: TAS, the cost-unaware version of our method; CRR and DPCA, two cost-aware baselines; and DCTAC, which uses the same tracking-based sampling rule as DCTAS but a confidence-interval-based stopping rule. Detailed descriptions, pseudocodes, and correctness arguments for all baselines are provided in Appendix~\ref{app: baselines}. We first present a synthetic example illustrating the impact of costs on the optimal sampling strategy in Section~\ref{sec:synthetic}, followed by experiments on real-world LLM comparison datasets in Section~\ref{sec:realworld}. Since these experiments use fixed preference matrices and sampling costs derived from existing datasets, they do not require model training or GPU inference, and were run on a Google Cloud \texttt{e2-highcpu-8} instance with 8 vCPUs and 8~GB of memory.

\subsection{Synthetic instance}
\label{sec:synthetic}
We consider a small synthetic dueling bandit instance with three arms, where the pairwise preference probability matrix $\mathbb{P}$ is given by 
\begin{align}
\label{dataset: preference matrix}
\mathbb{P} = \begin{bmatrix}
- & 0.63 & 0.65 \\
0.37 & -  & 0.6  \\
0.35 & 0.4  & - 
\end{bmatrix}
\end{align}
In this instance, Arm 1 is the Condorcet winner, as it outperforms both Arms 2 and 3 in pairwise comparisons. To study the impact of sampling costs on the performance of various algorithms, we assign a cost vector $\mathcal{C} = (k, 1, 1)$, where the parameter $k$ is varied between $1$ and $19$. While Arms $2$ and $3$ have a fixed querying cost of $1$, the cost of sampling Arm $1$ is increased to observe how the algorithms adapt their sampling strategy to minimize the \textit{total cost} required to identify the best arm. For each value of $k$, the experiment is executed over $500$ independent trials. 
\begin{wrapfigure}{r}{0.47\textwidth}
    \vspace{-10pt}
    \centering
    \includegraphics[width=0.47\textwidth]{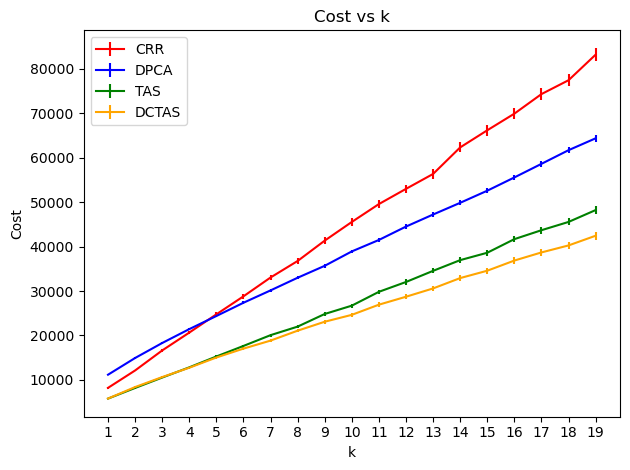}
    \caption{Total cost incurred by DCTAS, TAS, CRR, and DPCA for different values of $k$.}
    \label{fig:costvsk}
    \vspace{-10pt}
\end{wrapfigure}
In Figure \ref{fig:costvsk}, we plot the average total cost (along with the $95\%$ confidence interval) incurred to identify the Condorcet winner with a fixed confidence level, $\delta=0.01$, for different values of $k$. 
As the figure illustrates, the DCTAS algorithm consistently identifies the Condorcet winner with a lower total cost across all values of $k$. While CRR and DPCA are both cost-aware algorithms, they do not perform as well as DCTAS, which by its very design, tunes its sampling proportions optimally in response to the different arm costs. 

We also observe that, for smaller values of $k$, the mean cost incurred by TAS and DCTAS is almost the same. This is because when $k$ is small, i.e., $k<4$, the optimal sampling strategy involves querying the pairs $(1, 2)$ and $(1, 3)$ since the cost of querying Arm $1$ is comparable to that of the other arms. In this regime, both TAS and DCTAS follow similar querying patterns, resulting in similar total costs. However, as the value of $k$ increases, the cost of querying Arm $1$ becomes significantly higher. In response, DCTAS adapts its sampling strategy by prioritizing less expensive comparisons, primarily focusing on the pairs $(1, 2)$ and $(2, 3)$ (refer to Table \ref{tab: kvscost}). By leveraging the transitivity in preferences and reducing reliance on costly comparisons involving Arm $1$, DCTAS can minimize the total querying cost while still reliably identifying the best arm. This adaptive behavior leads to a substantial reduction in total cost compared to TAS as $k$ increases.

\subsection{Real-world datasets}
\label{sec:realworld}

We further evaluate our algorithms on real-world datasets released by Chatbot Arena \cite{chiang2024chatbot}. These datasets provide pairwise performance data for different LLMs across text-to-image generation, text-to-text generation, vision-based tasks, and search-based tasks. We use this preference data, together with the corresponding model query costs, to construct dueling bandit instances. As mentioned in the introduction, all considered datasets admit a Condorcet winner. A brief description of each dataset and further details of the experiment design are provided in the Appendix \ref{App: DD}.

\begin{wraptable}{r}{0.45\textwidth}
\vspace{-8pt}
\caption{Optimal pull fractions for cost-aware and cost-unaware algorithms for $k=4$.}
\label{tab: kvscost}
\centering
\begin{tabular}{|l|c|c|} 
\hline
Arm Pairs & Cost Unaware & Cost Aware \\ 
\hline\hline
1 and 2 & $0.5720$ & $0.3706$ \\ 
\hline
1 and 3 & $0.4280$ & $0$ \\
\hline
2 and 3 & $0$ & $0.6294$ \\
\hline
\end{tabular}
\vspace{-8pt}
\end{wraptable}

For each of the above identified dueling bandit instances, we compare the performance of our proposed algorithm with the various baselines. We set the confidence parameter to $\delta = 10^{-10}$ and execute 100 independent trials for each datapoint. We report the mean cost incurred by each algorithm, along with 99.7\% confidence intervals, in Table \ref{tab: realworld_results}. The missing entries in the table correspond to (instance, scheme) pairs for which the cost was prohibitively large; in particular the average cost for these cases would be larger than ten times the average cost incurred by the DCTAS algorithm. 

\begin{table}[htb]
\caption{Performance of Algorithms across Real-World Datasets.}
\label{tab: realworld_results}
\begin{center}
\begin{tabular}{| l | c | c | c | c |} 
\hline
 & T2I & T2T & Vision & Search \\ 
\hline\hline
DCTAS & $2823 \pm 89$ & $21.5 \pm 0.9$ & $77.0 \pm 4.3$ & $1015 \pm 28$ \\
\hline
TAS & $2931 \pm 171$& $22.8 \pm 1.2$ & $78.9 \pm 4.0$ & $1079 \pm 59$ \\
\hline
DPCA & $13017 \pm 138$ & $-$ & $304.2\pm 70.4$ & $9184\pm 212$ \\
\hline
CRR & $-$ & $73.8 \pm 2.0$ & $364.2 \pm 16.3$ & $-$ \\ \hline
DCTAC & $1461 \pm 907$ & $12.8 \pm 1.7$ & $42.6 \pm 2.9$ & $973 \pm 67$\\
\hline
\end{tabular}
\end{center}
\end{table}

As shown in Table~\ref{tab: realworld_results}, the DCTAC algorithm incurs the lowest cost across all evaluated datasets. Recall that DCTAC has the sample tracking-based sampling rule as DCTAS; its advantage arises from the confidence interval-based stopping rule it uses, which allows for earlier termination compared to the Chernoff-based rule used by DCTAS. On the other hand, among the other baselines, we observe that the Track-and-Stop type algorithms outperform the Cost-Aware Round-Reject (CRR) and Dual-Phase Cost-Aware (DPCA) algorithms. This can be attributed to the adaptive sampling strategy used by these algorithms, in contrast to the more static approaches of CRR and DPCA. 

\begin{wraptable}{r}{0.48\textwidth}
\vspace{-8pt}
\caption{Percentage reduction in average cost of DCTAS relative to TAS across real-world datasets.}
\label{tab: dctasvstas}
\centering
\small
\begin{tabular}{lcccc} 
\hline
 & T2I & T2T & Vision & Search \\ 
\hline
DCTAS & $2823$ & $21.5$ & $77.0$ & $1015$ \\
TAS & $2931$ & $22.8$ & $78.9$ & $1079$ \\
$\%$ reduction & $3.8\%$ & $5.7\%$ & $2.4\%$ & $5.9\%$ \\
\hline
\end{tabular}
\vspace{-8pt}
\end{wraptable}

To further understand the impact of cost-aware sampling, we compare the DCTAS and TAS algorithms, both of which employ the Chernoff-based stopping rule but differ in their sampling strategies. While DCTAS adaptively allocates samples to minimize the total cost, the TAS algorithm ignores cost and allocates samples to minimize the number of pulls.  Table~\ref{tab: dctasvstas} reports the percentage reduction in the mean cost achieved by DCTAS relative to TAS across all datasets. We note that DCTAS achieves close to $6\%$ lower cost on the T2T and Search datasets. We find that these datasets have a higher degree of cost heterogeneity, emphasizing the importance of adaptive cost-aware sampling in realistic evaluation settings.



\section{Conclusion}

This paper addresses the challenge of selecting the most suitable model from a pool of large language models (LLMs) that vary in both quality and querying cost. We frame the problem as a novel extension of the dueling multi-armed bandit setting that integrates non-uniform sampling costs, which account for the practical differences in accessing each model. Assuming the presence of a Condorcet winner—a property we verify empirically using several real-world datasets—we develop a Track-and-Stop style algorithm tailored for cost-sensitive best-arm identification under dueling feedback. We show that this approach converges to the optimal cost-efficiency in the limit of vanishing error probability. Our experimental results on synthetic and real-world benchmarks highlight the algorithm’s consistent advantages over both standard and cost-aware baselines.


\newpage
\bibliographystyle{unsrtnat}
\bibliography{references}


\newpage
\appendix

\appendix
\section{Missing Proofs of Lemmas and Theorems}
\label{app: missing proofs}

In this section, we provide complete proofs of the results stated in the main text. We begin in Appendix~\ref{app: lower bound} with the derivation of the lower bound.


\subsection{Lower Bound}
\label{app: lower bound}

\begin{proof}[Proof of Lemma \ref{lemma:lower_bound_abstract}]
The result is straightforward when $E_{\nu}[J(\tau_{\delta})] = \infty$. Hence, for the rest of the proof, we assume that $E_{\nu}[J(\tau_{\delta})] < \infty$, which implies $E_{\nu}[N(\tau_{\delta})] < \infty$, where $E_{\nu}[N(\tau_{\delta})]$ is the expected number of total pulls made by any $\delta$-PC algorithm. Consequently, we have $P_{\nu}(\tau_\delta = \infty) = 0$.

Let $\nu' \in \epsilon_{\text{alt}}(\nu)$, and let $E$ be the event defined as $E = \{\tau_{\delta} < \infty \text{ and } o \notin a^*(\nu')\}$, where $o \in \mathcal{L}$ is the output arm recommended at the stopping time. Then,
\begin{align}
    2\delta &\geq P_{\nu}\left(\tau_\delta < \infty \text{ and } o \notin a^*(\nu)\right) + P_{\nu'}\left(\tau_\delta < \infty \text{ and } o \notin a^*(\nu')\right) \label{lb1eq1} \\
    &\geq P_{\nu}(E^c) + P_{\nu'}(E) \label{lb1eq2} \\
    &\geq \frac{1}{2} \exp\left( -\sum_{(i,j)\in\mathcal{K}} E_{\nu}[N_{i,j}(\tau_{\delta})]\, d(p_{i,j}, p'_{i,j}) \right). \label{lb1eq3}
\end{align}
Rearranging \eqref{lb1eq3} gives
\begin{align}
    \log\left(\frac{1}{4\delta}\right) \leq \sum_{(i,j)\in\mathcal{K}} E_{\nu}[N_{i,j}(\tau_{\delta})]\, d(p_{i,j}, p'_{i,j}). \label{lb2eq1}
\end{align}
We now choose an instance $\nu' \in \epsilon_{\text{alt}}(\nu)$ that gives a different answer than $\nu$ to achieve the infimum on the right-hand side of \eqref{lb2eq1}, yielding
\begin{align}
    \log\left(\frac{1}{4\delta}\right)
    &\leq \inf_{\nu' \in \epsilon_{\text{alt}}(\nu)} \sum_{(i,j)\in\mathcal{K}} E_{\nu}[N_{i,j}(\tau_{\delta})]\, d(p_{i,j}, p'_{i,j}) \label{lb3eq1} \\
    &= E_{\nu}[J(\tau_\delta)] \inf_{\nu' \in \epsilon_{\text{alt}}(\nu)} \sum_{(i,j)\in\mathcal{K}} \frac{E_{\nu}[N_{i,j}(\tau_{\delta})]}{E_{\nu}[J(\tau_\delta)]}\, d(p_{i,j}, p'_{i,j}) \label{lb3eq2} \\
    &= E_{\nu}[J(\tau_\delta)] \inf_{\nu' \in \epsilon_{\text{alt}}(\nu)} \sum_{(i,j)\in\mathcal{K}} \frac{E_{\nu}[N_{i,j}(\tau_{\delta})]}{\sum_{(m,n)\in\mathcal{K}} (c_m+c_n)\, E_{\nu}[N_{m,n}(\tau_\delta)]}\, d(p_{i,j}, p'_{i,j}) \label{lb3eq3} \\
    &= E_{\nu}[J(\tau_\delta)] \inf_{\nu' \in \epsilon_{\text{alt}}(\nu)} \sum_{(i,j)\in\mathcal{K}} \frac{1}{c_i+c_j} \cdot \frac{(c_i+c_j)\, E_{\nu}[N_{i,j}(\tau_{\delta})]}{\sum_{(m,n)\in\mathcal{K}} (c_m+c_n)\, E_{\nu}[N_{m,n}(\tau_\delta)]}\, d(p_{i,j}, p'_{i,j}). \label{lb3eq4}
\end{align}
Define
\[
    w_{i,j} := \frac{(c_i+c_j)\, E_{\nu}[N_{i,j}(\tau_{\delta})]}{\sum_{(m,n)\in\mathcal{K}} (c_m+c_n)\, E_{\nu}[N_{m,n}(\tau_\delta)]},
\]
the proportion of cost incurred by pulling the arm pair $i$ and $j$. Note that $\sum_{(i,j)\in\mathcal{K}} w_{i,j} = 1$, giving
\begin{align}
    \log\left(\frac{1}{4\delta}\right) &\leq E_{\nu}[J(\tau_\delta)] \inf_{\nu' \in \epsilon_{\text{alt}}(\nu)} \sum_{(i,j)\in\mathcal{K}} \frac{w_{i,j}}{c_i+c_j}\, d(p_{i,j}, p'_{i,j}). \label{lb4eq1}
\end{align}
Taking the supremum over $\mathbf{w}$ further upper bounds the right-hand side:
\begin{align}
    \log\left(\frac{1}{4\delta}\right)
    &\leq E_{\nu}[J(\tau_\delta)] \sup_{\mathbf{w} \in \Sigma_{\mathcal{K}}} \inf_{\nu' \in \epsilon_{\text{alt}}(\nu)} \sum_{(i,j)\in\mathcal{K}} \frac{w_{i,j}}{c_i+c_j}\, d(p_{i,j}, p'_{i,j}) \nonumber \\
    &\leq E_{\nu}[J(\tau_\delta)]\, c^*(\nu)^{-1}. \label{lb5eq1}
\end{align}
Therefore, the lower bound on the total cost is
\begin{align}
    E_{\nu}[J(\tau_\delta)] \geq c^*(\nu) \log\left(\frac{1}{4\delta}\right), \label{lb6eq1}
\end{align}
which completes the proof.
\end{proof}

\begin{proof}[Proof of Theorem \ref{theorem: minmax}]
Without loss of generality, assume the Condorcet winner is arm $1$. The bound in Theorem \ref{theorem: minmax} is attained by optimally allocating the weights $w_{i,j}$ across the set of arms. To obtain an interpretable lower bound, we compute the optimal cost proportions by solving the optimization problem motivated by Lemma \ref{lemma:lower_bound_abstract}. Simplifying Equation \ref{eq:lower_bound_generic} gives the following discrete minimization form:
\begin{align}
    c^*(\nu)^{-1}
    &= \sup_{\mathbf{w} \in \Sigma_{\mathcal{K}}} \inf_{\nu' = (\mathcal{C},\mathbb{P}') \in \epsilon_{\text{alt}}(\nu)} \sum_{(i,j) \in \mathcal{K}} \frac{w_{i,j}}{c_i + c_j}\, d(p_{i,j}, p'_{i,j}) \label{eq1} \\
    &= \sup_{\mathbf{w} \in \Sigma_{\mathcal{K}}} \min_{i \neq 1} \sum_{j \neq i} \frac{w_{i,j}}{c_i + c_j}\, d(p_{i,j}, \max(0.5, p_{i,j})), \label{eq2}
\end{align}
where the second line reformulates the inner infimum as a discrete minimization over suboptimal arms.

The optimal weight allocation is given by
\begin{align}
    w(\nu) &= \argmax_{\mathbf{w} \in \Sigma_{\mathcal{K}}} \min_{i \neq 1} \sum_{j \in \mathcal{L} \setminus i} \frac{w_{i,j}}{c_i + c_j}\, d(p_{i,j}, \max(0.5, p_{i,j})). \label{mssr2eq1}
\end{align}
At the optimum, the quantities inside the minima, namely $\sum_{j \in \mathcal{L} \setminus i} \frac{w_{i,j}}{c_i + c_j}\, d(p_{i,j}, \max(0.5, p_{i,j}))$, are all equal. Thus, $w$ can be calculated by simply solving $L$ linear equations: $L-1$ from the inner minima of \eqref{mssr2eq1}, and one from the constraint on the weight vector. However, the optimal weights may not be unique for instances where there exists $i \neq 1$ such that there exist $j_1, j_2$ satisfying
\begin{align}
    \frac{d(p_{i,j_1}, 0.5)}{c_i + c_{j_1}} = \frac{d(p_{i,j_2}, 0.5)}{c_i + c_{j_2}} = \max_{j \neq i} \frac{d(p_{i,j}, 0.5)}{c_i + c_j}. \label{mssr3eq1}
\end{align}
Define $\mathcal{W}^*(\nu)$ to be the set of weight vectors that attain the supremum in \eqref{mssr2eq1}, and let $w_i := \sum_{j \in \mathcal{L} \setminus i} w_{i,j}$ denote the sum of weights involving arm $i$. We then have
\begin{align}
    w_i &= \frac{1}{\displaystyle\sum_{m \in \mathcal{L}} \frac{\max_{j \neq i}\left( \frac{1}{c_i + c_j}\, d(p_{i,j}, \max(0.5, p_{i,j})) \right)}{\max_{n \neq m}\left( \frac{1}{c_m + c_n}\, d(p_{m,n}, \max(0.5, p_{m,n})) \right)}}.
    \label{mssr4eq1}
\end{align}
Next, for every $i$ not satisfying \eqref{mssr3eq1}, it is easy to see that
\begin{align*}
    w_{i,j} =
    \begin{cases}
        0, & \text{if } \displaystyle\max_{k \neq i}\left( \frac{1}{c_i+c_k}\, d(p_{i,k}, \max\{0.5, p_{i,k}\}) \right) > \frac{1}{c_i+c_j}\, d(p_{i,j}, \max\{0.5, p_{i,j}\}), \\[6pt]
        w_i, & \text{otherwise.}
    \end{cases}
\end{align*}
For all values of $i$ satisfying \eqref{mssr3eq1}, the weight $w_i$ can be distributed as any linear combination of weights corresponding to all $j$ satisfying \eqref{mssr3eq1}. Finally, substituting one of the weight vectors in $\mathcal{W}^*(\nu)$ into Equation \ref{eq2} yields the desired result.
\end{proof}


\subsection{Mapping cost fraction to pull fraction}
\begin{proof}[Proof of Lemma \ref{lemma: mssr5eq1}:]
Let $\mathcal{W}_{i,j}$ be the optimal cost fraction and $\alpha_{i,j}$ be the corresponding pull fraction for the arm pair $i, j$. Then, for sufficiently large $t$, we have $N_{i,j}\left(t\right) = t\alpha_{i,j}$ and $J_{i,j}\left(t\right) = J\left(t\right) \mathcal{W}_{i,j}$, where $J_{i,j}\left(t\right)$ denotes the cost incurred by the arm pair $i, j$ and $J\left(t\right)$ represents the total cost incurred by the algorithm, As a result, we obtain
\begin{align}
    \mathcal{W}_{i,j}\left(t\right) &= \dfrac{J_{i,j}\left(t\right)}{J\left(t\right)} 
    \\&= \dfrac{N_{i,j}\left(t\right)\left(c_i + c_j\right)}{\sum_{(m,n)\in\mathcal{K}} N_{m,n}\left(t\right) \left(c_m+c_n\right)} 
    \\&= \dfrac{\alpha_{i,j}\left(t\right) \left(c_i+c_j\right)}{\sum_{(m,n)\in\mathcal{K}} \alpha_{m,n}\left(t\right) \left(c_m+c_n\right)} \label{sr51eq1}
\end{align}
Now, let $\theta\left(t\right) = \sum_{(m,n)\in\mathcal{K}} \alpha_{m,n}\left(t\right) \left(c_m+c_n\right)$
Therefore from \eqref{sr51eq1}, we have
\begin{align}
    \alpha_{i,j}\left(t\right) &= \dfrac{\mathcal{W}_{i,j}\left(t\right)}{c_i+c_j}\theta\left(t\right) \label{sr52eq1}
\end{align}
We also know that $\sum_{(m,n)\in\mathcal{K}} \alpha_{m,n}\left(t\right) = 1$ at all times, hence from \eqref{sr52eq1}, we know that
\begin{align}
    \sum_{(m,n)\in\mathcal{K}} \dfrac{\mathcal{W}_{i,j}\left(t\right)}{c_i+c_j} \theta\left(t\right) = 1 \label{sr53eq1}
\end{align}
Finally on using \eqref{sr52eq1} and \eqref{sr53eq1}, we have 
\begin{align}
    \alpha_{i,j}\left(t\right) = \dfrac{\mathcal{W}_{i,j}\left(t\right)/(c_i+c_j)}{\sum_{(m,n)\in\mathcal{K}} \mathcal{W}_{m,n}\left(t\right)/\left(c_m+c_n\right)}
\end{align}
\end{proof}

\subsection{Closed-form Solution for the GLR Statistic}

\begin{proof}[Proof of Lemma \ref{lemma: mssr1eq1}]
We start by identifying the unrestricted likelihood estimate of the log-likelihood function:
\begin{align}
    L(p) &= N^1_{i,j}(t) \log(p_{i,j}) + (N - N^1_{i,j}(t)) \log(1 - p_{i,j}) \label{sr11eq1} \\
    &= N_{i,j}(t)\left[ \hat{p}_{i,j}(t) \log(p_{i,j}) + (1 - \hat{p}_{i,j}(t)) \log(1 - p_{i,j}) \right]. \label{sr11eq2}
\end{align}
Differentiating \eqref{sr11eq2} and setting it to zero gives $p_{i,j} = \hat{p}_{i,j}(t)$. However, since $p_{i,j}$ is constrained with $0.5$ as the boundary value, we have
\begin{equation}
\label{sr12eq1}
    p_{i,j} =
    \begin{cases}
        \min(\hat{p}_{i,j}(t), 0.5), & \text{if } p_{i,j} < 0.5, \\
        \max(\hat{p}_{i,j}(t), 0.5), & \text{otherwise}.
    \end{cases}
\end{equation}
For brevity, define $q_{i,k}(t) := \max(\hat{p}_{i,k}(t), 0.5)$. Then
\begin{align}
    Z_{i,j}(t)
    &= \log \frac{\displaystyle\max_{i \text{ is best}} \prod_{m=1}^{L-1} \prod_{n=m+1}^L p_{m,n}^{N_{m,n}(t) \hat{p}_{m,n}(t)} (1 - p_{m,n})^{N_{m,n}(t)(1 - \hat{p}_{m,n}(t))}}{\displaystyle\max_{j \text{ is best}} \prod_{m=1}^{L-1} \prod_{n=m+1}^L p_{m,n}^{N_{m,n}(t) \hat{p}_{m,n}(t)} (1 - p_{m,n})^{N_{m,n}(t)(1 - \hat{p}_{m,n}(t))}}. \label{sr14eq1}
\end{align}
Substituting the constrained maximum-likelihood estimates from \eqref{sr12eq1}, the numerator and denominator simplify so that all factors involving arms other than $i$ or $j$ cancel. After simplification,
\begin{align}
    Z_{i,j}(t)
    &= \sum_{k \in \mathcal{L}} N_{j,k}(t) \log \left( \frac{\hat{p}_{j,k}(t)^{\hat{p}_{j,k}(t)} (1 - \hat{p}_{j,k}(t))^{1 - \hat{p}_{j,k}(t)}}{q_{j,k}(t)^{\hat{p}_{j,k}(t)} (1 - q_{j,k}(t))^{1 - \hat{p}_{j,k}(t)}} \right) \nonumber \\
    &\quad - \sum_{k \in \mathcal{L}} N_{i,k}(t) \log \left( \frac{\hat{p}_{i,k}(t)^{\hat{p}_{i,k}(t)} (1 - \hat{p}_{i,k}(t))^{1 - \hat{p}_{i,k}(t)}}{q_{i,k}(t)^{\hat{p}_{i,k}(t)} (1 - q_{i,k}(t))^{1 - \hat{p}_{i,k}(t)}} \right) \label{sr14eq3} \\
    &= \sum_{k \in \mathcal{L}} N_{j,k}(t)\, d(\hat{p}_{j,k}(t), q_{j,k}(t)) - \sum_{k \in \mathcal{L}} N_{i,k}(t)\, d(\hat{p}_{i,k}(t), q_{i,k}(t)). \label{sr14eq4}
\end{align}
Recalling $q_{i,k}(t) = \max(\hat{p}_{i,k}(t), 0.5)$ gives the stated closed form.
\end{proof}

\subsection{Correctness Argument}

\begin{definition}
\label{def: rank}
Consider a sequential identification problem specified by a partition $\mathcal{O} = \bigcup_{i=1}^{L} \mathcal{O}_i$, where each $\mathcal{O}_i$ represents an open set where arm $i$ is the correct answer. The \emph{rank} $R$ of the problem is the minimum number of arms required to describe the exclusion of any set $\mathcal{O}_i$ from $\mathcal{O}$. More specifically, the set $\mathcal{O} \setminus \mathcal{O}_i$ can be expressed as a finite union of conditions involving at most $R$ arms.
\end{definition}

\begin{lemma}
\label{lemma: delta}
For any sequential identification problem, let $R$ be its rank and $M$ the number of open sets. Then, by Proposition 21 of \cite{kaufmann2021mixture}, the Generalized Likelihood Ratio Test (GLRT) stopping rule is $\delta$-correct with threshold
\begin{align}
    \gamma(t, \delta) = 3R \log\left(1 + \log\left(\frac{t}{R}\right)\right) + R\, C\!\left(\frac{\log\left(\frac{M-1}{\delta}\right)}{R}\right). \label{delta}
\end{align}
\end{lemma}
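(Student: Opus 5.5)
The plan is to reduce the claim to the time-uniform deviation inequality for sums of KL divergences of \cite{kaufmann2021mixture} (their Theorem~7, which underlies Proposition~21), and then take a union bound over the wrong answers. Let $\nu \in \mathcal{O}_{a}$ be the true instance, with correct answer $a$. An error occurs only if, for some $t$ and some $i \neq a$, the GLR statistic in favour of $\mathcal{O}_i$ exceeds the threshold. Concretely, the event is
\[
\inf_{\lambda \in \mathcal{O}\setminus\mathcal{O}_i} \sum_{k} N_k(t)\, d(\hat p_k(t), \lambda_k) - \inf_{\lambda \in \mathcal{O}_i} \sum_{k} N_k(t)\, d(\hat p_k(t), \lambda_k) > \gamma(t,\delta).
\]
This is the form of $Z_{i,j}(t)$ obtained in Lemma~\ref{lemma: mssr1eq1}. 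The second infimum is nonnegative, so dropping it only enlarges the event. For the first infimum, $\nu$ itself lies in $\mathcal{O}\setminus\mathcal{O}_i$, so it can be bounded by evaluating the objective at a suitable point built from $\nu$.

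The key structural step uses the rank. By Definition~\ref{def: rank}, $\mathcal{O}\setminus\mathcal{O}_i$ is a finite union of sets, each described by a condition on at most $R$ coordinates. Hence $\nu$ belongs to one such set, say with coordinate set $S$ of size $|S|\le R$. Since the condition constrains only the coordinates in $S$, we may take $\lambda_k = \nu_k$ for $k \in S$ and $\lambda_k = \hat p_k(t)$ for $k \notin S$, and this $\lambda$ still lies in $\mathcal{O}\setminus\mathcal{O}_i$. This yields
\[
\inf_{\lambda \in \mathcal{O}\setminus\mathcal{O}_i} \sum_{k} N_k(t)\, d(\hat p_k(t), \lambda_k) \le \sum_{k\in S} N_k(t)\, d(\hat p_k(t), \nu_k).
\]
In our dueling problem this step is concrete. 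To certify that $i$ is \emph{not} best, it suffices to exhibit one arm that beats it, which is a single coordinate $p_{i,j}<1/2$. More generally one lists the relevant pairwise conditions, and there are at most $L-1$ of them, so $R=L-1$.

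Next, apply Theorem~7 of \cite{kaufmann2021mixture} to the $|S|$ independent Bernoulli coordinates in $S$:
\[
P\Big(\exists t:\ \sum_{k\in S}\big[N_k(t)\, d(\hat p_k(t),\nu_k) - 3\log(1+\log N_k(t))\big] > |S|\, C\big(x/|S|\big)\Big) \le e^{-x}.
\]
Because $\log(1+\log u)$ is concave and $\sum_{k\in S} N_k(t)\le t$, Jensen's inequality gives $\sum_{k\in S} 3\log(1+\log N_k(t)) \le 3|S|\log(1+\log(t/|S|))$. Both $u\mapsto u\log(1+\log(t/u))$ and $u\mapsto u\,C(x/u)$ are nondecreasing in $u$, so we may replace $|S|$ by $R$ throughout. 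It follows that the event $\{\exists t: \text{the GLR for } i \text{ exceeds } \gamma(t,\delta)\}$ has probability at most $e^{-x}$.

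Finally, set $x=\log((M-1)/\delta)$ and take a union bound over the $M-1$ wrong answers $i\neq a$; the total error probability is then at most $\delta$. The main obstacle I expect is the rank step in the present problem: checking that the alternative set can be written as a union of conditions each involving at most $L-1$ pair coordinates, while keeping every such alternative inside the Condorcet class $\mathcal{E}$. I would handle this by exhibiting explicit constructions that perturb only the row of $i$ and leave everything else at the empirical values. I would also verify the two monotonicity claims used in the third step, in particular for $u\mapsto u\,C(x/u)$.
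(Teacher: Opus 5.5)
Your argument is the standard proof of Proposition~21 of Kaufmann and Koolen, which the paper only cites and does not reproduce. The steps are the same: bound the GLR for a wrong answer $i$ by $\sum_{k\in S}N_k(t)\,d(\hat p_k(t),\nu_k)$ using the piece of $\mathcal{O}\setminus\mathcal{O}_i$ that contains $\nu$, apply their Theorem~7, control the iterated-log terms by Jensen, and take a union bound over the $M-1$ wrong answers. The structure is right, but one step is false as written.

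\textbf{The false step.} You claim that $u\mapsto u\log(1+\log(t/u))$ is nondecreasing. This fails when $t/u$ is small. With $y=1+\log(t/u)$, the derivative is $\log y-1/y$, which is negative for $t/u$ below about $2.15$. For example, $R=2$, $|S|=1$, $t=2$ gives $3\log(1+\log 2)>0=6\log(1+\log 1)$, so you cannot replace $|S|$ by $R$ for small $t$.

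\textbf{The fix.} You do not need monotonicity at all. A condition on the coordinates in $S$ is trivially a condition on any superset, so pad $S$ with arbitrary extra coordinates to a set $S'$ of size exactly $R$. Since KL terms are nonnegative, $\sum_{k\in S}N_k\, d(\hat p_k,\nu_k)\le\sum_{k\in S'}N_k\, d(\hat p_k,\nu_k)$. Now apply Theorem~7 and Jensen directly to $S'$, using $N_k(t)\ge 1$ from initialization and $\sum_{k\in S'}N_k(t)\le t$. This gives exactly $3R\log(1+\log(t/R))+R\,C(x/R)$.

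Incidentally, the claim you were worried about, that $u\mapsto u\,C(x/u)$ is nondecreasing, is true. $h^{-1}$ on $[1,\infty)$ and $\tilde h_{3/2}$ are both concave and nondecreasing, so $C$ is concave with $C(0)>0$. Hence the derivative $C(y)-yC'(y)$, with $y=x/u$, is at least $C(0)$. After padding you no longer need it anyway.

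\textbf{The dueling aside.} This is not part of the lemma, but it is muddled. Pieces of the form $\{p_{i,j}<1/2\}$ involve one coordinate each, so they would give rank $1$, not $L-1$; you counted pieces rather than coordinates per piece. They are also inadmissible, because filling the remaining coordinates with $\hat p(t)$ need not yield a matrix in $\mathcal{E}$. The admissible decomposition is $\mathcal{E}\setminus\mathcal{O}_i=\bigcup_{j\neq i}\{p:\ p_{j,k}>1/2\ \forall k\neq j\}$. Each piece is a cylinder on the $L-1$ entries of row $j$, which is why $R=L-1$. The point $\lambda$ you need keeps row $a^*(\nu)$ at its true values and everything else at $\hat p(t)$; it does not perturb row $i$.
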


\begin{proof}[Proof of Theorem \ref{theorem: delta-pac}]
The probability that the algorithm stops and outputs the wrong answer is bounded by
\begin{align}
    P_{\nu}\left(\tau_\delta < \infty, \hat{a}_{\tau_\delta} \neq a^*(\nu)\right)
    &\leq P_{\nu}\!\left(\exists b \in \mathcal{L} \setminus a^*(\nu), \exists t \in \mathbb{N} \,:\, \forall i \in \mathcal{L} \setminus b,\; Z_{b,i}(t) > \beta(t, \delta)\right) \label{pc1eq1} \\
    &\leq P_{\nu}\!\left(\exists b \in \mathcal{L} \setminus a^*(\nu), \exists t \in \mathbb{N} \,:\, Z_{b, a^*(\nu)}(t) > \beta(t, \delta)\right). \label{pc1eq2}
\end{align}
For the event in \eqref{pc1eq2}, we have
\begin{align}
    Z_{b, a^*(\nu)}(t)
    &= \sum_{k \in \mathcal{L}} N_{a^*(\nu),k}(t)\, d\!\left(\hat{p}_{a^*(\nu),k}(t), \max(\hat{p}_{a^*(\nu),k}(t), 0.5)\right) \nonumber \\
    &\quad - \sum_{k \in \mathcal{L}} N_{b,k}(t)\, d\!\left(\hat{p}_{b,k}(t), \max(\hat{p}_{b,k}(t), 0.5)\right) \label{pc2eq1} \\
    &\leq \sum_{k \in \mathcal{L}} N_{a^*(\nu),k}(t)\, d\!\left(\hat{p}_{a^*(\nu),k}(t), \max(\hat{p}_{a^*(\nu),k}(t), 0.5)\right). \label{pc2eq2}
\end{align}
Furthermore,
\begin{equation}
\label{pc3eq1}
    N_{a^*(\nu),k}(t)\, d\!\left(\hat{p}_{a^*(\nu),k}(t), \max(\hat{p}_{a^*(\nu),k}(t), 0.5)\right) =
    \begin{cases}
        N_{a^*(\nu),k}(t)\, d(\hat{p}_{a^*(\nu),k}(t), 0.5), & \text{if } \hat{p}_{a^*(\nu),k}(t) < 0.5, \\
        0, & \text{otherwise}.
    \end{cases}
\end{equation}
Therefore,
\begin{align}
    Z_{b, a^*(\nu)}(t) &\leq \sum_{k \in \mathcal{L}} N_{a^*(\nu),k}(t)\, d\!\left(\hat{p}_{a^*(\nu),k}(t), p_{a^*(\nu),k}\right). \label{pc4eq1}
\end{align}
Hence,
\begin{align}
    P_{\nu}\!\left(\tau_\delta < \infty, \hat{a}_{\tau_\delta} \neq a^*(\nu)\right)
    &\leq P_{\nu}\!\left(\exists b \in \mathcal{L} \setminus a^*(\nu), \exists t \in \mathbb{N} \,:\, \sum_{k \in \mathcal{L}} N_{a^*(\nu),k}(t)\, d\!\left(\hat{p}_{a^*(\nu),k}(t), p_{a^*(\nu),k}\right) > \beta(t, \delta)\right). \label{pc5eq1}
\end{align}
By Definition \ref{def: rank}, our problem has rank $R = L-1$ and number of open sets $M = L$. Substituting these into \eqref{delta}, the expression for $\gamma(t, \delta)$ matches $\beta(t, \delta)$ exactly, giving
\begin{align}
    P_{\nu}\!\left(\tau_\delta < \infty, \hat{a}_{\tau_\delta} \neq a^*(\nu)\right) &\leq \delta. \label{pc6eq1}
\end{align}
\end{proof}

\subsection{Cost Complexity}

\begin{lemma}
\label{lemma: hemicontinous}
The mapping $\nu \mapsto \alpha^*(\nu)$ is upper hemicontinuous and compact-valued. Furthermore, the set $\alpha^*(\nu)$ is convex for every instance $\nu \in \epsilon$.
\end{lemma}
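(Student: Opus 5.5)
Since $\alpha^*(\nu) = f(\mathcal{W}^*(\nu),\mathcal{C})$ by Lemma~\ref{lemma: mssr5eq1}, I will prove the three properties for $\nu \mapsto \mathcal{W}^*(\nu)$ first and then transfer them. The map $f(\cdot,\mathcal{C})$ is continuous on the simplex: its denominator $\sum_{(m,n)} \mathcal{W}_{m,n}/(c_m+c_n)$ is at least $1/\max_{(m,n)}(c_m+c_n) > 0$. It is also a bijection onto the simplex, with inverse $\alpha \mapsto \alpha_{i,j}(c_i+c_j)/\sum_{(m,n)} \alpha_{m,n}(c_m+c_n)$. Both maps are linear-fractional with positive denominators, so they send convex sets to convex sets (segments go to segments) and compact sets to compact sets. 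Upper hemicontinuity is preserved under composition with a continuous map. Hence it suffices to treat $\mathcal{W}^*$, with the cost vector fixed throughout.

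By Proposition~\ref{prop: optimal-weights} (equivalently \eqref{eq2}), $\mathcal{W}^*(\nu)$ is the set of maximizers over $\Sigma_{\K}$ of
\[
F(\mathbf{w},\nu) = \min_{m \neq a^*(\nu)} \sum_{n\neq m} \frac{w_{m,n}}{c_m+c_n}\, d\bigl(p_{m,n},\max(0.5,p_{m,n})\bigr).
\]
\textbf{Convexity.} For fixed $\nu$, each term inside the minimum is linear in $\mathbf{w}$, so $F(\cdot,\nu)$ is concave. The set of maximizers of a concave function over a convex set is convex.

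\textbf{Compactness.} $F(\cdot,\nu)$ is continuous and $\Sigma_{\K}$ is compact, so the argmax set is a nonempty closed subset of a compact set.

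Alternatively, both properties can be read directly from the explicit description in Proposition~\ref{prop: optimal-weights}. There $w^*_m$ is fixed for each suboptimal $m$, and the weight is spread arbitrarily over pairs $(m,n)$ with $n\in\Gamma_m$. The resulting set is a product of scaled simplices, which is a convex polytope.

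\textbf{Upper hemicontinuity.} I will apply Berge's maximum theorem: the constraint set $\Sigma_{\K}$ is constant and compact, so if $F$ is jointly continuous in $(\mathbf{w},\nu)$ at the given $\nu$, then the argmax correspondence is upper hemicontinuous there. Continuity in $\nu$ is the main obstacle. Three things need checking.
\begin{itemize}
\item The map $x\mapsto d(x,\max(0.5,x))$ equals $d(x,0.5)$ for $x<0.5$ and $0$ otherwise, so it is continuous on $(0,1)$.
\item The index set $\{m\neq a^*(\nu)\}$ depends on $\nu$. On $\E$ this is harmless: the condition $p_{a^*,j}>1/2$ for all $j \neq a^*$ is strict, so $a^*(\nu')=a^*(\nu)$ for all $\nu'$ in a neighbourhood of $\nu$. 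Locally the minimum is therefore over a fixed finite set of continuous functions.
\item Entries of $\mathbb{P}$ at the boundary $\{0,1\}$ could make $d$ infinite. I will restrict to instances with entries in $(0,1)$ (the natural parameter space), or note that the relevant $d(x,0.5)\le\log 2$ is bounded, so no blow-up occurs.
\end{itemize}
With these in place, $F$ is jointly continuous near $\nu$, and Berge gives that $\mathcal{W}^*$ is upper hemicontinuous and compact-valued. Composing with the continuous bijection $f(\cdot,\mathcal{C})$ transfers upper hemicontinuity, compactness and convexity to $\alpha^*$.

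As a sanity check on the convexity transfer: if $\alpha^1,\alpha^2\in\alpha^*(\nu)$, then their preimages $\mathcal{W}^1,\mathcal{W}^2$ are optimal, and so is every convex combination of them. The image of the segment joining $\mathcal{W}^1$ and $\mathcal{W}^2$ under $f$ is the segment joining $\alpha^1$ and $\alpha^2$, because linear-fractional maps with positive denominator preserve segments. Hence that segment lies in $\alpha^*(\nu)$.

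Alternatively, convexity of $\alpha^*(\nu)$ can be checked directly. A pull vector $\alpha$ is optimal if and only if it is supported on pairs $(m,n)$ with $n\in\Gamma_m$ and satisfies the linear equalities $\alpha_m(c_m+\cdot)\propto w^*_m$. This is a system of linear constraints after clearing the common normalizer, and its solution set is a polytope.
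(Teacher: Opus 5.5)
Your argument is correct, but it is organized differently from the paper's.

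The paper proves compactness of $\mathcal{W}^*(\nu)$ and transfers it through $f$ exactly as you do. For the other two properties, however, it works directly in pull-fraction space. It applies Berge's theorem to the ratio objective
\[
\phi(\nu,\alpha)=\frac{1}{\sum_{(m,n)\in\K}\alpha_{m,n}(c_m+c_n)}\,\min_{i\neq 1}\sum_{j\neq i}\alpha_{i,j}\,d\bigl(p_{i,j},\max(0.5,p_{i,j})\bigr),
\]
and it proves convexity of $\alpha^*(\nu)$ by checking by hand that $\phi(\nu,z\alpha^{(1)}+(1-z)\alpha^{(2)})\ge c^*(\nu)^{-1}$. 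That check is a quasi-concavity argument for a concave numerator over a positive linear denominator.

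You instead establish all three properties for $\mathcal{W}^*$, where the objective is a minimum of linear functions, so convexity of the argmax set is immediate. You then push everything through the fixed bijection $f(\cdot,\mathcal{C})$. This uses only three facts: continuous maps preserve compactness, composition with a continuous map preserves upper hemicontinuity, and linear-fractional maps with positive denominator send segments onto segments.

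Your route sidesteps the normalization bookkeeping in the paper's convexity step. Write $D_k=\sum_{(m,n)\in\K}\alpha^{(k)}_{m,n}(c_m+c_n)$ and $D_z=zD_1+(1-z)D_2$. The two minima there should be weighted by $zD_1/D_z$ and $(1-z)D_2/D_z$, not by $z$ and $1-z$ as written, although the conclusion still holds. Your route also makes explicit that $a^*(\cdot)$ is locally constant on $\E$. The paper's fixed index set $\{i\neq 1\}$ silently assumes this when it claims joint continuity of $\phi$.

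The paper's formulation, for its part, does not need the segment-preservation property of $f$. It also produces the function $\phi$ that is reused directly in the proof of Theorem~\ref{theorem: sample complexity}.
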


\begin{proof}[Proof of Lemma \ref{lemma: hemicontinous}]
To establish the upper hemicontinuity and compactness of the mapping $\nu \mapsto \alpha^*(\nu)$, we begin by proving the compactness of $\mathcal{W}^*(\nu)$. Recall that
\begin{align}
    \mathcal{W}^*(\nu) &= \argmax_{\mathbf{w} \in \Sigma_{\mathcal{K}}} \min_{i \neq 1} \sum_{j \in \mathcal{L} \setminus i} \frac{w_{i,j}}{c_i + c_j}\, d(p_{i,j}, \max(0.5, p_{i,j})). \label{hc1eq1}
\end{align}
Since $\Sigma_{\mathcal{K}}$ is a $K$-dimensional simplex (closed and bounded, hence compact), and the objective is continuous, any set of maximizers $\mathcal{W}^*(\nu) \subseteq \Sigma_{\mathcal{K}}$ is also compact. Given the bijection from $\mathcal{W}^*(\nu)$ to $\alpha^*(\nu)$ via the continuous map $f(\mathcal{W}^*, \mathcal{C})$ (with continuous inverse), $\alpha^*(\nu)$ is also compact.

Next, define
\begin{align}
    \phi(\nu, \alpha)
    &= \min_{i \neq 1} \sum_{j \in \mathcal{L} \setminus i} \frac{\alpha_{i,j}}{\sum_{(m,n) \in \mathcal{K}} \alpha_{m,n} (c_m + c_n)}\, d(p_{i,j}, \max(0.5, p_{i,j})) \label{hc2eq1} \\
    &= \frac{1}{\sum_{(m,n) \in \mathcal{K}} \alpha_{m,n} (c_m + c_n)} \min_{i \neq 1} \sum_{j \in \mathcal{L} \setminus i} \alpha_{i,j}\, d(p_{i,j}, \max(0.5, p_{i,j})). \label{hc2eq2}
\end{align}
From \eqref{hc2eq2}, $\phi(\nu, \alpha)$ is continuous as a composition of continuous functions. By the definition of $\mathcal{W}^*(\nu)$ and the mapping $f(\mathcal{W}^*, \mathcal{C})$, we have
\begin{align}
    \alpha^*(\nu) &= \argmax_{\alpha \in \Sigma_{\mathcal{K}}} \min_{i \neq 1} \sum_{j \in \mathcal{L} \setminus i} \frac{\alpha_{i,j}}{\sum_{(m,n) \in \mathcal{K}} \alpha_{m,n}(c_m + c_n)}\, d(p_{i,j}, \max(0.5, p_{i,j})). \label{hc3eq1}
\end{align}
Since $\Sigma_{\mathcal{K}}$ is compact-valued (for every $\nu \in \epsilon$ there exists a feasible weight allocation) and $\phi(\nu, \alpha)$ is continuous in both arguments, Berge's maximum theorem implies that $\alpha^*(\nu)$ is upper hemicontinuous with nonempty, compact values.

We now show that $\alpha^*(\nu)$ is convex for all $\nu$. Fix $\nu$, pick $\alpha^{(1)}, \alpha^{(2)} \in \alpha^*(\nu)$, and let $z \in [0,1]$. By definition,
\begin{align}
    \phi(\nu, \alpha^{(1)}) = \phi(\nu, \alpha^{(2)}) = c^*(\nu)^{-1}. \label{hc4eq1}
\end{align}
Then
\begin{align}
    &\phi\!\left(\nu, z\alpha^{(1)} + (1-z)\alpha^{(2)}\right) \nonumber \\
    &= \min_{i \neq 1} \sum_{j \in \mathcal{L} \setminus i} \frac{z\alpha^{(1)}_{i,j} + (1-z)\alpha^{(2)}_{i,j}}{\sum_{(m,n) \in \mathcal{K}} \left(z\alpha^{(1)}_{m,n} + (1-z)\alpha^{(2)}_{m,n}\right) (c_m + c_n)}\, d(p_{i,j}, \max(0.5, p_{i,j})) \label{hc5eq1} \\
    &\geq z \min_{i \neq 1} \sum_{j \in \mathcal{L} \setminus i} \frac{\alpha^{(1)}_{i,j}}{\sum_{(m,n) \in \mathcal{K}} \left(z\alpha^{(1)}_{m,n} + (1-z)\alpha^{(2)}_{m,n}\right) (c_m + c_n)}\, d(p_{i,j}, \max(0.5, p_{i,j})) \nonumber \\
    &\quad + (1-z) \min_{i \neq 1} \sum_{j \in \mathcal{L} \setminus i} \frac{\alpha^{(2)}_{i,j}}{\sum_{(m,n) \in \mathcal{K}} \left(z\alpha^{(1)}_{m,n} + (1-z)\alpha^{(2)}_{m,n}\right) (c_m + c_n)}\, d(p_{i,j}, \max(0.5, p_{i,j})) \label{hc5eq2} \\
    &= z \min_{i \neq 1} \sum_{j \in \mathcal{L} \setminus i} \frac{w^{(1)}_{i,j}}{c_i + c_j}\, d(p_{i,j}, \max(0.5, p_{i,j})) + (1-z) \min_{i \neq 1} \sum_{j \in \mathcal{L} \setminus i} \frac{w^{(2)}_{i,j}}{c_i + c_j}\, d(p_{i,j}, \max(0.5, p_{i,j})) \label{hc5eq4} \\
    &= z\, c^*(\nu)^{-1} + (1-z)\, c^*(\nu)^{-1} \label{hc5eq5} \\
    &= c^*(\nu)^{-1}, \label{hc5eq6}
\end{align}
where the step from \eqref{hc5eq2} to \eqref{hc5eq4} uses the identity $\alpha_{m,n}(c_m+c_n) = w_{m,n}$ and a simplification of the normalising denominator. Since $\Sigma_{\mathcal{K}}$ is convex, $z\alpha^{(1)} + (1-z)\alpha^{(2)} \in \Sigma_{\mathcal{K}}$, so $\phi(\nu, z\alpha^{(1)} + (1-z)\alpha^{(2)}) \leq c^*(\nu)^{-1}$. Combining the two inequalities, $\phi(\nu, z\alpha^{(1)} + (1-z)\alpha^{(2)}) = c^*(\nu)^{-1}$. Hence $z\alpha^{(1)} + (1-z)\alpha^{(2)} \in \alpha^*(\nu)$, and $\alpha^*(\nu)$ is convex.
\end{proof}

For real-valued vectors $u, v$ of identical sizes and a compact, convex set $S$, we define
\[
    d_\infty(u, v) := \max_i |u_i - v_i|, \qquad d_\infty(u, S) := \min_{v \in S} d_\infty(u, v), \qquad \|u - v\| := \sqrt{\sum_i (u_i - v_i)^2}.
\]

\begin{lemma}
\label{lemma: tracking}
Under the tracking rule given in lines 3 to 7 of Algorithm \ref{algo: DCTAS}, almost surely
\begin{align}
    \lim_{t \to \infty} d_\infty\!\left((N_{i,j}(t)/t)_{i,j \in \mathcal{L}}, \alpha^*(\nu)\right) = 0. \label{l2eq1}
\end{align}
\end{lemma}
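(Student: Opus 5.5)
The argument follows the standard C-tracking analysis for non-unique optimal allocations, in the spirit of \cite{garivier2016optimal} and its set-valued extensions \cite{reddy2023best, al2021navigating}. It has three steps: consistency of the estimates, convergence of the targets to $\alpha^*(\nu)$, and transfer of that convergence to the empirical pull fractions.

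\emph{Step 1: consistency from forced exploration.} The forced-exploration branch (line 4 of Algorithm~\ref{algo: DCTAS}) guarantees $N_k(t)\ge \sqrt{t}-c_0$ for every $k\in\K$ and all large $t$, for some constant $c_0$ depending only on $|\K|$. This is the usual counting argument: whenever some pair falls below $\sqrt{t}$, the least-sampled pair is pulled. Hence $N_k(t)\to\infty$ for every pair, and the strong law of large numbers gives $\hat{\mathbb{P}}(t)\to\mathbb{P}$ almost surely, i.e.\ $\hat\nu(t)\to\nu$.

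\emph{Step 2: convergence of the targets.} Lemma~\ref{lemma: hemicontinous} says $\nu\mapsto\alpha^*(\nu)$ is upper hemicontinuous and compact-valued. Therefore, for every $\epsilon>0$ there exists $t_0(\epsilon)$, random but almost surely finite, such that $d_\infty(\alpha(s),\alpha^*(\nu))\le\epsilon$ for all $s\ge t_0$. Here I would first confirm that the output $\alpha(s)$ of \textit{WeightPullAllocation} lies in $\alpha^*(\hat\nu(s))$. This follows from Proposition~\ref{prop: optimal-weights} (it returns one particular element, the uniform split over the empirical argmax set) combined with the bijection of Lemma~\ref{lemma: mssr5eq1}. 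Convexity of $\alpha^*(\nu)$ then gives
\[
d_\infty\Big(\tfrac1t\textstyle\sum_{s=1}^t\alpha(s),\ \alpha^*(\nu)\Big)\le \epsilon+\tfrac{t_0}{t},
\]
because the average of the tail terms lies within $\epsilon$ of the convex set $\alpha^*(\nu)$, and the first $t_0$ terms contribute $O(t_0/t)$. The point of this step is that, with non-unique optima, the instantaneous targets $\alpha(s)$ may keep jumping among elements of $\alpha^*(\nu)$; convexity is exactly what lets their running average still settle near the set.

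\emph{Step 3: tracking.} I would adapt the C-tracking lemma of \cite{garivier2016optimal} (Lemma~7 there) to a time-varying target sequence. The claim is that under lines 4--7,
\[
\max_{k\in\K}\Big|N_k(t)-\textstyle\sum_{s=1}^t\alpha_k(s)\Big|\le |\K|\big(1+\sqrt{t}\big)
\]
for all $t$, or at least $o(t)$. The argument is the standard one. In the tracking branch the pulled pair $k$ minimizes $N_k-\sum_s\alpha_k(s)$, and these deficits sum to roughly zero. So no pair can get more than $O(1)$ above its cumulative target through tracking pulls, and forced pulls add at most $O(\sqrt t)$ per pair. Since the deficits sum to about zero, the lower deviation is controlled by the upper one. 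Dividing by $t$ and combining with Step 2 gives $\limsup_t d_\infty(N(t)/t,\alpha^*(\nu))\le\epsilon$ almost surely. Letting $\epsilon\downarrow0$ along a countable sequence proves \eqref{l2eq1}.

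I expect the main obstacle to be Step 2, specifically the claim that the algorithm's selected $\alpha(s)$ belongs to $\alpha^*(\hat\nu(s))$. The tie-breaking in \textit{WeightPullAllocation} uses exact equality of empirical ratios. When $\Gamma_m$ is not a singleton under $\nu$, the empirical argmax will typically be a strict subset of $\Gamma_m$. That is harmless, since it still lies inside $\Gamma_m$ for large $s$ by continuity, so the output stays in $\alpha^*(\nu)$ up to $\epsilon$. Upper hemicontinuity handles this, but I would verify that the case $p_{m,n}\le 1/2$ contributes zero, so that $\max(0.5,\cdot)$ does not break continuity near $1/2$.
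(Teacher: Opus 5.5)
Your proposal is correct and follows essentially the same route as the paper's proof. Both use forced exploration plus the strong law for consistency, upper hemicontinuity and convexity of $\alpha^*(\nu)$ to place the averaged target $\frac{1}{t}\sum_{s\le t}\alpha(s)$ within $\epsilon+t_0/t$ of $\alpha^*(\nu)$ (the paper's ``$9\epsilon$ for $t>t_0/(8\epsilon)$'' is exactly this bound), and a deficit argument on $N_k(t)-\sum_{s\le t}\alpha_k(s)$. In that argument, tracking pulls occur only at the minimal (hence bounded) deficit, forced pulls occur only when $N_k(t)<\sqrt{t}$, and the lower deviation is controlled because the deficits sum to a constant. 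Your deterministic $O(\sqrt{t})$ deficit bound is slightly sharper than the paper's $O(\epsilon t)$ induction, and you are right to flag $\alpha(s)\in\alpha^*(\hat\nu(s))$, which the paper simply asserts. One small slip: it is $p_{m,n}\ge 1/2$ (not $\le 1/2$) that contributes zero, with continuity at $1/2$ coming from $d(p,1/2)\to 0$ as $p\uparrow 1/2$.
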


\begin{proof}[Proof of Lemma \ref{lemma: tracking}]
Let $S := \alpha^*(\nu)$ and fix $\epsilon > 0$. From the sampling rule, $N_{i,j}(t) = \Omega(\sqrt{t})$ almost surely for $i, j \in \mathcal{L}$. By the strong law of large numbers, $\|\hat{\nu}(t) - \nu\| \to 0$ almost surely. Since $\alpha(t+1) \in \alpha^*(\hat{\nu})$ for all $t$ and $\alpha^*(\hat{\nu})$ is close to $S$ for large $t$, by upper hemicontinuity (Lemma \ref{lemma: hemicontinous}) there exists $t_0 = t_0(\epsilon)$ such that $d_\infty(\alpha(t), S) \leq \epsilon$ for all $t \geq t_0$. Letting $v(t) := \argmin_{\alpha \in S} d_\infty(\alpha(t), \alpha)$ denote the projection of $\alpha(t)$ onto $S$, we have $d_\infty(\alpha(t), v(t)) \leq \epsilon$ for $t \geq t_0$.

Define $\bar{v}(t) := \frac{1}{t} \sum_{s=1}^t v(s)$ and $\bar{\alpha}(t) := \frac{1}{t} \sum_{s=1}^t \alpha(s)$. Then $\|\bar{\alpha}(t) - \bar{v}(t)\| \leq 9\epsilon$ for all $t > t_0' = t_0/(8\epsilon)$ (see \cite{reddy2023best}). Letting $\hat{\alpha}(t) := \argmin_{\alpha \in S} d_\infty(\bar{\alpha}(t), \alpha)$, we obtain $d_\infty(\bar{\alpha}(t), \hat{\alpha}(t)) \leq 9\epsilon$ for all $t \geq t_0'$.

Define $\epsilon_{i,j}(t) := N_{i,j}(t) - t \bar{\alpha}_{i,j}(t)$ for $i, j \in \mathcal{L}$. Then
\begin{align}
    \epsilon_{i,j}(t+1)
    &= N_{i,j}(t+1) - (t+1) \bar{\alpha}_{i,j}(t+1) \label{le1eq1} \\
    &= N_{i,j}(t) + \mathbb{1}(a_{t+1} = (i,j)) - (t+1) \bar{\alpha}_{i,j}(t+1) \label{le1eq2} \\
    &= \epsilon_{i,j}(t) + \mathbb{1}(a_{t+1} = (i,j)) + t \bar{\alpha}_{i,j}(t) - (t+1) \bar{\alpha}_{i,j}(t+1) \label{le1eq3} \\
    &= \epsilon_{i,j}(t) + \mathbb{1}(a_{t+1} = (i,j)) - \alpha_{i,j}(t+1) \label{le1eq4} \\
    &\leq \epsilon_{i,j}(t) + \mathbb{1}(a_{t+1} = (i,j)). \label{le1eq5}
\end{align}
We next show that there exists $t_0'' = t_0''(\epsilon)$ such that for all $t \geq t_0''$, $\{a_{t+1} = (i,j)\} \subset \{\epsilon_{i,j}(t) \leq 9t\epsilon\}$. From the sampling rule, $\{a_{t+1} = (i,j)\} \subset \chi_1(t) \cup \chi_2(t)$, where
\begin{align}
    \chi_1(t) &= \left\{ (i,j) = \argmin_{m,n \in \mathcal{L}} N_{m,n}(t) - \sum_{s=1}^t \alpha^*_{m,n}(s) \right\}, \label{le2eq1} \\
    \chi_2(t) &= \left\{ (i,j) = \argmin_{m,n \in \mathcal{L}} N_{m,n}(t) < \sqrt{t} \right\}. \label{le2eq2}
\end{align}
If $\chi_1(t)$ holds, then for all $t \geq t_0'$,
\begin{align}
    \epsilon_{i,j}(t) &= N_{i,j}(t) - t \bar{\alpha}_{i,j}(t) \overset{\eqref{le2eq1}}{=} \min_{m,n \in \mathcal{L}} N_{m,n}(t) - \sum_{s=1}^t \alpha^*_{m,n}(s) \leq 0 \leq 9t\epsilon. \label{le3eq3}
\end{align}
If $\chi_2(t)$ holds with $t > 1/\epsilon^2$, then
\begin{align}
    \epsilon_{i,j}(t) &= N_{i,j}(t) - t \bar{\alpha}_{i,j}(t) \overset{\eqref{le2eq2}}{\leq} \sqrt{t} - t \bar{\alpha}_{i,j}(t) \leq \frac{t}{\sqrt{t}} \leq 9t\epsilon. \label{le12eq7}
\end{align}
Hence,
\begin{align}
    \epsilon_{i,j}(t+1) &\leq \epsilon_{i,j}(t) + \mathbb{1}(\epsilon_{i,j}(t) \leq 9t\epsilon). \label{le4eq1}
\end{align}
We prove by induction that $\epsilon_{i,j}(t) \leq \max\{\epsilon_{i,j}(t_0''), 9t\epsilon + 1\}$. The base case $t = t_0''$ is immediate. For the inductive step, if $\epsilon_{i,j}(t) \leq 9t\epsilon$,
\begin{align}
    \epsilon_{i,j}(t+1) &\leq \epsilon_{i,j}(t) + 1 \leq 9t\epsilon + 1 \leq \max\{\epsilon_{i,j}(t_0''), 9(t+1)\epsilon + 1\}. \label{le5eq4}
\end{align}
If instead $\epsilon_{i,j}(t) > 9t\epsilon$, then by \eqref{le4eq1},
\begin{align}
    \epsilon_{i,j}(t+1) &\leq \epsilon_{i,j}(t) \leq \max\{\epsilon_{i,j}(t_0''), 9t\epsilon + 1\} \leq \max\{\epsilon_{i,j}(t_0''), 9(t+1)\epsilon + 1\}. \label{le6eq3}
\end{align}
Since $\sum_{(m,n) \in \mathcal{L}} \epsilon_{m,n}(t) = 0$,
\begin{align}
    \epsilon_{i,j}(t) &= -\sum_{(i',j') \in (\mathcal{L} \times \mathcal{L}) \setminus (i,j)} \epsilon_{i',j'}(t) \label{le7eq1} \\
    &\geq -\binom{L}{2} - 1) \max_{(i',j') \neq (i,j)} \max\{\epsilon_{i',j'}(t_0''), 9t\epsilon + 1\}. \label{le7eq2}
\end{align}
Therefore,
\begin{align}
    |\epsilon_{i,j}(t)|
    &\leq \left(\binom{L}{2} - 1\right) \max_{(i',j') \neq (i,j)} \max\{\epsilon_{i',j'}(t_0''), 9t\epsilon + 1\} \label{le8eq1} \\
    &\leq \left(\binom{L}{2} - 1\right) \max\{t_0'', 9t\epsilon + 1\} \label{le8eq2} \\
    &= \left(\binom{L}{2} - 1\right) t \max\!\left\{ \frac{t_0''}{t}, 9\epsilon + \frac{1}{t} \right\} \label{le8eq3} \\
    &\leq 10 \left(\binom{L}{2} - 1\right) t \epsilon \label{le8eq4}
\end{align}
for all $t \geq t_1 = \frac{1}{\epsilon} \max\{t_0'', 10\}$. This gives
\begin{align}
    d_\infty\!\left((N_{i,j}(t)/t)_{i,j \in \mathcal{L}}, S\right)
    &\leq d_\infty\!\left((N_{i,j}(t)/t)_{i,j \in \mathcal{L}}, \hat{\alpha}(t)\right) \label{le11eq1} \\
    &\leq d_\infty\!\left((N_{i,j}(t)/t)_{i,j \in \mathcal{L}}, \bar{\alpha}(t)\right) + d_\infty(\bar{\alpha}(t), \hat{\alpha}(t)) \label{le11eq2} \\
    &\leq \max_{i,j \in \mathcal{L}} \left| \frac{N_{i,j}(t)}{t} - \bar{\alpha}_{i,j}(t) \right| + 9\epsilon \label{le11eq3} \\
    &= \max_{i,j \in \mathcal{L}} \frac{|\epsilon_{i,j}(t)|}{t} + 9\epsilon \label{le11eq4} \\
    &\leq 10\epsilon \left(\binom{L}{2} - 1\right) + 9\epsilon. \label{le11eq5}
\end{align}
Since $\epsilon > 0$ was arbitrary, the result follows.
\end{proof}

\begin{lemma}
\label{lemma: lambert}
For any constants $k_1 > 0$, $k_2 > 0$ and every $\gamma \geq 1$, by \cite{garivier2016optimal},
\begin{align}
    \inf\left\{ t \geq 1 : t k_1 \geq \log(k_2 t^\alpha) \right\} \leq \frac{\gamma}{k_1}\!\left( \log \frac{k_2 e}{k_1^\alpha} + \log \log \frac{k_2}{k_1^\alpha} \right). \label{la1eq1}
\end{align}
\end{lemma}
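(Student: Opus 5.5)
The plan is to reduce the claim to an elementary inequality about $u\mapsto u-\log u$. I will follow Lemma~18 of \cite{garivier2016optimal}, reading the multiplicative constant $\gamma$ as the exponent $\alpha$. That is, I take $\gamma=\alpha$ with $\alpha\in[1,e/2]$, and I set $B:=\log\frac{k_2}{k_1^\alpha}$. I expect to need the side condition that $B$ is large enough for $\log B$ to be defined and the bound to be nontrivial, say $B\ge 1$ or $k_2/k_1^\alpha\ge e$. This is the regime in which the lemma is later applied, with $k_2$ growing like $1/\delta$. For $\gamma>\alpha$ the bound then follows trivially once the right-hand side is positive.

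\emph{Step 1 (change of variables).} Write $t=\alpha x/k_1$. The condition $tk_1\ge\log(k_2t^\alpha)$ becomes
\begin{align*}
\alpha x \;\ge\; \log k_2+\alpha\log\frac{\alpha x}{k_1}
\quad\Longleftrightarrow\quad
x-\log x\;\ge\;\frac{B}{\alpha}+\log\alpha .
\end{align*}
The map $x\mapsto x-\log x$ is increasing on $[1,\infty)$. Hence if some $x_0\ge1$ satisfies this inequality, every $t\ge \alpha x_0/k_1$ satisfies the original condition, and the infimum is at most $\alpha x_0/k_1$. It therefore suffices to show that $x_0:=1+B+\log B=\log\frac{k_2e}{k_1^\alpha}+\log\log\frac{k_2}{k_1^\alpha}$ works. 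If $\alpha x_0/k_1<1$, the constraint $t\ge1$ must be handled separately, and I will do so with a direct check at $t=1$.

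\emph{Step 2 (verify $x_0$).} Since $\alpha\ge1$, we have $B/\alpha\le B$. It is enough to show
\begin{align*}
1+B+\log B-\log(1+B+\log B)\;\ge\;B+\log\alpha,
\end{align*}
which is equivalent to $1+\log\frac{B}{1+B+\log B}\ge\log\alpha$. A cruder but cleaner route, which I would try first, is to avoid the reduction $B/\alpha\le B$ and to use $\alpha x_0$ directly in the original inequality.

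\emph{Main obstacle.} This last elementary inequality is where I expect the difficulty. The ratio $B/(1+B+\log B)$ tends to $1$ from below, so the left-hand side tends to $1$, while $\log(e/2)=1-\log 2$. I therefore need $\frac{B}{1+B+\log B}\ge\frac12$, which holds for $B\ge 1$ because $B\ge 1+\log B$ there.
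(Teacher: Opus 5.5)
Your main argument is sound. It is the standard proof of Lemma~18 in \cite{garivier2016optimal}, and the paper offers no more than that: it cites the lemma instead of proving it. The steps are the substitution $t=\alpha x/k_1$, monotonicity of $x\mapsto x-\log x$ on $[1,\infty)$, and verification of $x_0=1+B+\log B$ via $B\ge 1+\log B$ together with $\log\alpha\le 1-\log 2$. Your reading $\gamma=\alpha\in[1,e/2]$, with $\gamma>\alpha$ handled by scaling, and your side condition $B\ge 1$ are the right repairs of the garbled statement. The restriction $\gamma\ge\alpha$ is genuinely needed rather than convenient: with $\gamma=1$, at $t=x_0/k_1$ the condition reduces to $1+\log B\ge\alpha\log x_0$, which fails for large $B$ when $\alpha>1$. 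In the paper's application $\alpha=1$, so nothing is lost.

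The step that would fail is your plan to treat the case $\alpha x_0/k_1<1$ by a direct check at $t=1$. Every element of $\{t\ge 1 : tk_1\ge\log(k_2t^\alpha)\}$ is at least $1$. So whenever the right-hand side is below $1$, the infimum strictly exceeds it and the inequality is false; no check can rescue it. Concretely, take $\alpha=\gamma=1$, $k_1=20$, $k_2=20e$. Then $B=1$, which lies inside your regime, and the right-hand side is $\frac{1}{20}\log(e^2)=\frac{1}{10}$. But $t=1$ satisfies $20\ge\log(20e)$, so the infimum equals $1$.

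The correct fix is to add the hypothesis $\alpha x_0\ge k_1$, or to replace the right-hand side by $\max\{1,\alpha x_0/k_1\}$. In the latter case your monotonicity argument already shows that $t=1$ lies in the set, so the bound holds. This costs nothing in the proof of Theorem~\ref{theorem: sample complexity}, where $k_1$ stays bounded and $k_2\to\infty$ as $\delta\to 0$. Hence both $B\ge 1$ and $\alpha x_0\ge k_1$ hold for all sufficiently small $\delta$. You should state this as an assumption rather than claim it follows from a check at $t=1$.
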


\begin{lemma}
\label{lemma: C}
Given $C(x)$ as defined in Definition \ref{def: C_exp}, as $\delta \to 0$,
\begin{align}
    C\!\left( \frac{\log\left( \frac{L-1}{\delta} \right)}{L-1} \right) \leq (1 + \upsilon)\!\left( \frac{\log\left( \frac{L-1}{\delta} \right)}{L-1} + 4 \log\!\left( 1 + \frac{\log\left( \frac{L-1}{\delta} \right)}{L-1} + \sqrt{\frac{2 \log\left( \frac{L-1}{\delta} \right)}{L-1}} \right) \right),
\end{align}
where $\upsilon \to 0$ as $x \to \infty$.
\end{lemma}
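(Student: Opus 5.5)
The plan is to unfold Definition~\ref{def: C_exp} and use the standard explicit upper bound on the inverse of $h(u)=u-\log u$ on the branch $u\ge 1$:
\begin{align*}
h^{-1}(u)\le u+\log\bigl(u+\sqrt{2(u-1)}\bigr)\qquad (u\ge 1),
\end{align*}
as used in \cite{kaufmann2021mixture}. I would take this bound as known. If it has to be proved, one checks that $h$ evaluated at the right-hand side is at least $u$, using $\log(1+s)\le s$ and monotonicity of $h$ on $[1,\infty)$. Throughout, write $x:=\log\bigl(\frac{L-1}{\delta}\bigr)/(L-1)$, so that $\delta\to 0$ is the same as $x\to\infty$.

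\textbf{Step 1.} Bound the inner argument. Apply the inequality above with $u=1+x$:
\begin{align*}
h^{-1}(1+x)\le 1+x+\log\bigl(1+x+\sqrt{2x}\bigr).
\end{align*}
Hence the argument of $\tilde h_{3/2}$ satisfies
\begin{align*}
y:=\frac{h^{-1}(1+x)+\log(\pi^2/3)}{2}\le \frac{x+\log\bigl(1+x+\sqrt{2x}\bigr)+c_0}{2},
\end{align*}
where $c_0=1+\log(\pi^2/3)$. Since $h^{-1}(1+x)\ge 1+x$, we also have $y\to\infty$. So for all small enough $\delta$, $y$ exceeds the fixed threshold $h(1/\log(3/2))$, and the first branch of $\tilde h_{3/2}$ applies. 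This gives
\begin{align*}
C(x)=2\,e^{1/h^{-1}(y)}\,h^{-1}(y).
\end{align*}

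\textbf{Step 2.} Handle the exponential factor. Set $\upsilon:=e^{1/h^{-1}(y)}-1$. Because $h^{-1}(y)\ge y\to\infty$, we get $\upsilon\to 0$; this $\upsilon$ is the vanishing factor in the statement.

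It remains to show that, for large $x$,
\begin{align*}
2h^{-1}(y)\le x+4\log\bigl(1+x+\sqrt{2x}\bigr).
\end{align*}
Apply the $h^{-1}$ bound once more:
\begin{align*}
2h^{-1}(y)\le 2y+2\log\bigl(y+\sqrt{2(y-1)}\bigr).
\end{align*}
By Step 1, $2y\le x+\log(1+x+\sqrt{2x})+c_0$. Also $y\le x$ for large $x$, so $2\log(y+\sqrt{2(y-1)})\le 2\log(1+x+\sqrt{2x})$. Adding these,
\begin{align*}
2h^{-1}(y)\le x+3\log\bigl(1+x+\sqrt{2x}\bigr)+c_0.
\end{align*}
The constant $c_0$ is eventually dominated by the spare term $\log(1+x+\sqrt{2x})\to\infty$. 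This yields the claim.

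\textbf{Main obstacle.} The only delicate points are bookkeeping. First, one must make sure the right branch of $h^{-1}$ (the one with values $\ge 1$) is used. Second, one must make sure the threshold case of $\tilde h_{3/2}$ is eventually the first one. Third, the additive constants $\log(\pi^2/3)$ and $1$ have to be absorbed into the fourth logarithm rather than into $\upsilon$. Putting any of these into $\upsilon$ would also work, since they are $o(x)$, but it is cleaner to keep $\upsilon$ equal to exactly $e^{1/h^{-1}(y)}-1$.
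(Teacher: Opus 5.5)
Your argument is correct and follows essentially the paper's route. You apply $h^{-1}(u)\le u+\log\bigl(u+\sqrt{2(u-1)}\bigr)$ at $u=1+x$, apply it again at the argument of $\tilde h_{3/2}$ once the first branch is active, absorb $1+\log(\pi^2/3)$ into the fourth logarithm, and absorb $e^{1/h^{-1}(\cdot)}$ into $1+\upsilon$. Keeping the exact argument $y$ also spares you the monotonicity of $\tilde h_{3/2}$ and of $u\mapsto ue^{1/u}$, which the paper uses implicitly.

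One small correction to your aside: to verify the $h^{-1}$ bound yourself, you need $\log\bigl(1+a+a^2/2\bigr)\le a$ with $a=\sqrt{2(u-1)}$, i.e.\ $e^{a}\ge 1+a+a^2/2$, rather than $\log(1+s)\le s$.
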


\begin{proof}[Proof of Lemma \ref{lemma: C}]
Recall that
\begin{align}
    C(x) &= 2\tilde{h}_{3/2}\!\left( \frac{h^{-1}(1+x) + \log(\pi^2/3)}{2} \right) \label{sc5eq1} \\
    &\leq 2\tilde{h}_{3/2}\!\left( \frac{1 + x + \log(1 + x + \sqrt{2x}) + \log(\pi^2/3)}{2} \right) \label{sc5eq2} \\
    &\leq 2\tilde{h}_{3/2}\!\left( \frac{x + 2 \log(1 + x + \sqrt{2x})}{2} \right), \label{sc5eq3}
\end{align}
where the last inequality holds for all $x$ such that $\log(1 + x + \sqrt{2x}) > \log(\pi^2/3) + 1$. Now,
\begin{align}
    h^{-1}\!\left( \frac{x + 2 \log(1 + x + \sqrt{2x})}{2} \right)
    &\leq \frac{x + 2 \log(1 + x + \sqrt{2x})}{2} \nonumber \\
    &\quad + \log\!\left( \frac{x + 2 \log(1 + x + \sqrt{2x})}{2} + \sqrt{x + 2 \log(1 + x + \sqrt{2x}) - 2} \right) \label{sc6eq2} \\
    &\leq \frac{x + 2 \log(1 + x + \sqrt{2x})}{2} + \log(1 + x + \sqrt{2x}) \label{sc6eq3} \\
    &= \frac{x + 4 \log(1 + x + \sqrt{2x})}{2}, \label{sc6eq4}
\end{align}
where \eqref{sc6eq3} holds for all $x > k_1$ for some $k_1 \in \mathbb{N}^+$. From the definition of $\tilde{h}_z(y)$, for every $y > h(\log(1/z))$ we have $\tilde{h}_z(y) = e^{1/h^{-1}(y)} h^{-1}(y)$. Hence, there exists $k_2 \in \mathbb{N}^+$ such that for all $x > k_2$,
\begin{align}
    C(x)
    &\leq 2\tilde{h}_{3/2}\!\left( \frac{x + 2 \log(1 + x + \sqrt{2x})}{2} \right) \label{sc12eq1} \\
    &\leq 2 e^{\frac{2}{x + 4 \log(1 + x + \sqrt{2x})}} \cdot \frac{x + 4 \log(1 + x + \sqrt{2x})}{2} \label{sc12eq2} \\
    &\leq (1 + \upsilon)\!\left( x + 4 \log(1 + x + \sqrt{2x}) \right), \label{sc12eq3}
\end{align}
where $\upsilon \to 0$ as $x \to \infty$ and \eqref{sc12eq2} holds for $x > k_3$ for some $k_3 \in \mathbb{N}^+$. Letting $x = \log\!\left( \frac{L-1}{\delta} \right) / (L-1)$ and $k = \max\{k_1, k_2, k_3\}$, as $\delta \to 0$, $x$ exceeds $k$, and substituting into \eqref{sc12eq3} yields the desired result.
\end{proof}

\begin{proof}[Proof of Theorem \ref{theorem: sample complexity}]
Consider the event
\begin{align}
    \chi = \left\{ d_\infty\!\left( (N_{i,j}(t)/t)_{i,j \in \mathcal{L}}, \alpha^*(\nu) \right) \xrightarrow{t \to \infty} 0,\; \hat{\nu}(t) \xrightarrow{t \to \infty} \nu \right\}. \label{sc1eq1}
\end{align}
By Lemma \ref{lemma: tracking}, $\chi$ occurs with probability $1$ under the DCTAS algorithm. By the continuity of $\phi$, there exists an open neighborhood $U = U(\epsilon)$ of $\{\nu\} \times \alpha^*(\nu)$ such that
\begin{align}
    \phi(\nu_0, \alpha_0)
    &\geq (1+\epsilon)^{-1} \phi(\nu, \alpha^*) \label{sc2eq1} \\
    &= (1+\epsilon)^{-1} \min_{i \neq 1} \sum_{j \in \mathcal{L} \setminus i} \frac{\alpha_{i,j}}{\sum_{(m,n) \in \mathcal{K}} \alpha_{m,n}(c_m + c_n)}\, d(p_{i,j}, \max(0.5, p_{i,j})) \label{sc2eq2} \\
    &= (1+\epsilon)^{-1} \min_{i \neq 1} \sum_{j \in \mathcal{L} \setminus i} \frac{w_{i,j}}{c_i + c_j}\, d(p_{i,j}, \max(0.5, p_{i,j})) \label{sc2eq3} \\
    &= (1+\epsilon)^{-1} c^*(\nu)^{-1} \quad \forall (\nu_0, w_0) \in U. \label{sc2eq4}
\end{align}
Under $\chi$, there exists $n_0 = n_0(\epsilon)$ such that $\left( \hat{\nu}(t), (N_{i,j}(t)/t)_{i,j \in \mathcal{L}} \right) \in U$ for all $t \geq n_0$, so
\begin{align}
    \phi\!\left( \hat{\nu}(t), (N_{i,j}(t)/t)_{i,j \in \mathcal{L}} \right) &\geq (1+\epsilon)^{-1} c^*(\nu)^{-1} \quad \forall t \geq n_0. \label{sc3eq1}
\end{align}
Since $\hat{\nu}(t) \to \nu$ and the best arm is unique, there exists $n_1$ such that for all $t \geq n_1$, the GLRT statistic can be written as
\begin{align}
    Z(t)
    &= \min_{j \neq 1} \left( \sum_{k \in \mathcal{L} \setminus j} N_{j,k}(t)\, d(\hat{p}_{j,k}(t), \max(\hat{p}_{j,k}(t), 0.5)) \right. \nonumber \\
    &\qquad\qquad \left. - \sum_{k \in \mathcal{L} \setminus 1} N_{1,k}(t)\, d(\hat{p}_{1,k}(t), \max(\hat{p}_{1,k}(t), 0.5)) \right) \label{sc4eq1} \\
    &= \min_{j \neq 1} \sum_{k \in \mathcal{L} \setminus j} N_{j,k}(t)\, d(\hat{p}_{j,k}(t), \max(\hat{p}_{j,k}(t), 0.5)) \label{sc4eq2} \\
    &= t \min_{j \neq 1} \sum_{k \in \mathcal{L} \setminus j} \frac{N_{j,k}(t)}{t}\, d(\hat{p}_{j,k}(t), \max(\hat{p}_{j,k}(t), 0.5)) \label{sc4eq3} \\
    &= J(t) \min_{j \neq 1} \sum_{k \in \mathcal{L} \setminus j} \frac{N_{j,k}(t)/t}{\sum_{(m,n) \in \mathcal{K}} N_{m,n}(t)(c_m + c_n)/t}\, d(\hat{p}_{j,k}(t), \max(\hat{p}_{j,k}(t), 0.5)) \label{sc4eq4} \\
    &= J(t)\, \phi\!\left( \hat{\nu}(t), (N_{j,k}(t)/t)_{j,k \in \mathcal{L}} \right) \label{sc4eq5} \\
    &\overset{\eqref{sc3eq1}}{\geq} J(t) (1+\epsilon)^{-1} c^*(\nu)^{-1}. \label{sc4eq6}
\end{align}
Let $\log(B) = 4(L-1) \log\!\left( 1 + \frac{\log\left( \frac{L-1}{\delta} \right)}{L-1} \right) + \sqrt{\frac{2 \log\left( \frac{L-1}{\delta} \right)}{L-1}}$. From the definition of $\beta(t, \delta)$ and Lemma \ref{lemma: C},
\begin{align}
    \beta(t, \delta)
    &\leq 3(L-1) \log\!\left( 1 + \log\!\left( \frac{t}{L-1} \right) \right) + (1+\upsilon)\!\left( \log\!\left( \frac{L-1}{\delta} \right) + \log(B) \right) \label{sc7eq1} \\
    &\leq (1+\upsilon)\!\left( 3(L-1) \log\!\left( 1 + \log\!\left( \frac{t}{L-1} \right) \right) + \log\!\left( \frac{L-1}{\delta} \right) + \log(B) \right) \label{sc7eq3} \\
    &\leq (1+\upsilon)\!\left( 3(L-1) \log\!\left( \frac{t}{L-1} \right)^{\frac{1}{3(L-1)}} + \log\!\left( \frac{L-1}{\delta} \right) + \log(B) \right) \label{sc7eq2} \\
    &= (1+\upsilon) \log\!\left( \frac{Bt}{\delta} \right), \label{s8eq2}
\end{align}
where \eqref{sc7eq2} holds for all $t > n_2$ for some $n_2 \in \mathbb{N}^+$. We then have
\begin{align}
    J(\tau_\delta)
    &= J\!\left( \inf\{ t \geq \max\{n_0, n_1, n_2\} : Z(t) \geq \beta(t, \delta) \} \right) \label{sc9eq1} \\
    &\overset{\eqref{s8eq2}}{\leq} J\!\left( \inf\!\left\{ t \geq \max\{n_0, n_1, n_2\} : J(t) (1+\epsilon)^{-1} (1+\upsilon)^{-1} c^*(\nu)^{-1} \geq \log\!\left( \frac{Bt}{\delta} \right) \right\} \right) \label{sc9eq2} \\
    &\leq J(n_0) \cup J(n_1) \cup J(n_2) \nonumber \\
    &\quad \cup\, J\!\left( \inf\!\left\{ t : J(t) (1+\epsilon)^{-1} (1+\upsilon)^{-1} c^*(\nu)^{-1} \geq \log\!\left( \frac{B J(t)}{\min(\mathcal{C}) \delta} \right) \right\} \right). \label{sc9eq3}
\end{align}
By Lemma \ref{lemma: lambert},
\begin{align}
    &J\!\left( \inf\!\left\{ t : J(t) (1+\epsilon)^{-1} (1+\upsilon)^{-1} c^*(\nu)^{-1} \geq \log\!\left( \frac{B J(t)}{\min(\mathcal{C}) \delta} \right) \right\} \right) \nonumber \\
    &\quad \leq (1+\epsilon)(1+\upsilon) c^*(\nu) \left[ \log\!\left( \frac{B e (1+\epsilon)(1+\upsilon) c^*(\nu)}{\min(\mathcal{C}) \delta} \right) + \log \log\!\left( \frac{B (1+\epsilon)(1+\upsilon) c^*(\nu)}{\min(\mathcal{C}) \delta} \right) \right] \nonumber \\
    &\qquad + \mathcal{O}(\max(\mathcal{C})). \label{sc10eq1}
\end{align}
From \eqref{sc9eq3} and \eqref{sc10eq1}, for every $\delta \in (0,1)$, letting $\epsilon$ and $\upsilon$ tend to $0$,
\begin{align}
    \limsup_{\delta \to 0} \frac{J(\tau_\delta)}{\log(1/\delta)} &\leq c^*(\nu). \label{sc11eq1}
\end{align}
\end{proof}
\section{Baseline Algorithms}
\label{app: baselines}

We now elaborate on the baseline algorithms used in the paper. Note that as the TAS algorithm is a simple adaptation of the DCTAS algorithm with $\mathcal{C} = (1, 1, \dots, 1)_L$. However, to calculate the total expected cost, we count the number of pulls corresponding to all arm pairs, and multiply it with the actual costs of the model, before summing up across all the pairs.

\subsection{Cost Aware Round Robin (CRR)}
The cost-aware round-robin algorithm is inspired by the Chernoff Overlap algorithm provided in \cite{kanarios2024cost} and features its adaptation to the dueling setting with a slightly modified sampling rule. In this algorithm, at each time step $t$, we select the arm pair that has incurred the least cost. Mathematically, if $N_{i,j}$ represents the number of times arm pair $i, j$ is pulled, then the pair we pull next has the minimum $\left(c_i + c_j\right)N_{i,j}$ as seen in Algorithm \ref{algo: crr}. Further, similar to DCTAS (refer to Algorithm \ref{algo: DCTAS}), we use the standard GLRT-based stopping rule to terminate the algorithm. 

\begin{algorithm}[ht]
\caption{Cost Aware Round Robin (CRR)}
\label{algo: crr}
\begin{algorithmic}
  \STATE \textbf{Input:} Error Threshold $\delta$; Cost Vector $\mathcal{C}$
  \STATE \textbf{Output:} Best arm $i \in \mathcal{L}$
\end{algorithmic}  
\begin{algorithmic}[1]
\STATE Pull each arm pairs $i, j \in \mathcal{L}$ once for initialization;
\FOR{$t=1, 2, \dots$}
\STATE $a_t = \argmin_{i,j \in \mathcal{L}} \left(c_i + c_j\right)N_{i,j}$;
\STATE Pull $a_t$ and update $\hat{p}_t$;
\IF{$\exists i \in \mathcal{L}, \forall j \in \mathcal{L} \setminus \{i\}, Z_{i,j}\left(t\right) > \beta\left(t,\delta\right)$}
\RETURN $i \in \mathcal{L}$
\ENDIF
\ENDFOR
\end{algorithmic}
\end{algorithm}

\begin{theorem}
    \label{theorem: delta-pac-crr}
    Let $\delta \in \left(0,1\right)$ and $\beta\left(t, \delta\right) = 3(L-1)\log\left(1 + \log\left(\frac{t}{L-1}\right)\right) + \left(L-1\right)C\left(\frac{\log\left(\frac{L-1}{\delta} \right)}{L-1}\right)$. Then, the CRR algorithm presented in Algorithm \ref{algo: crr} is $\delta$-PC i.e.,
    \begin{align*}
        P_{\nu}\left(\tau_\delta < \infty, \hat{a}_{\tau_\delta} \neq a^*\left(\nu\right)\right) \leq \delta
    \end{align*}
\end{theorem}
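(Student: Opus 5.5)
The key observation is that the $\delta$-PC argument used for DCTAS in Theorem~\ref{theorem: delta-pac} never used the sampling rule. It relies only on three ingredients: the closed form of the GLR statistic in Lemma~\ref{lemma: mssr1eq1}, the stopping rule $\tau_\delta=\inf\{t: Z(t)>\beta(t,\delta)\}$ with the threshold $\beta$, and the time-uniform deviation bound of Lemma~\ref{lemma: delta} (Proposition~21 of \cite{kaufmann2021mixture}). The last of these holds for an arbitrary adaptive sampling rule: it only requires each pulled pair to be chosen based on the past, so that the counts $N_{i,j}(t)$ are predictable. CRR chooses $a_t=\argmin (c_i+c_j)N_{i,j}$, which is a deterministic function of past counts. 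It also uses the same statistics $Z_{i,j}(t)$ and the same threshold $\beta(t,\delta)$. My plan is therefore to repeat the DCTAS argument verbatim and to check explicitly that no step depends on how pairs are selected.

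\textbf{Step 1: reduce to a deviation event.} I first bound the error event. If CRR stops at a finite time and outputs $b\neq a^*(\nu)$, then $Z_{b,j}(t)>\beta(t,\delta)$ for all $j\neq b$. In particular this holds for $j=a^*(\nu)$, which gives the inclusion used in \eqref{pc1eq2}.

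\textbf{Step 2: bound the statistic by a self-normalized sum.} Using Lemma~\ref{lemma: mssr1eq1}, I drop the nonpositive term contributed by $b$ to obtain
\[
Z_{b,a^*(\nu)}(t)\le \sum_{k} N_{a^*(\nu),k}(t)\, d\bigl(\hat p_{a^*(\nu),k}(t),\max(\hat p_{a^*(\nu),k}(t),0.5)\bigr).
\]
Each summand is nonzero only when $\hat p_{a^*(\nu),k}(t)<0.5<p_{a^*(\nu),k}$. In that case monotonicity of $d(x,\cdot)$ on $[x,1]$ gives $d(\hat p,0.5)\le d(\hat p,p)$. This yields \eqref{pc4eq1}, a sum over at most $L-1$ pairs of $N\,d(\hat p,p)$.

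\textbf{Step 3: apply the time-uniform concentration.} I invoke Lemma~\ref{lemma: delta} with rank $R=L-1$ and $M=L$. It bounds by $\delta$ the probability that this sum of $L-1$ Bernoulli KL deviations ever exceeds $\beta(t,\delta)$, under any predictable sampling rule, and this completes the proof.

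\textbf{Main obstacle.} The hardest step is justifying that Lemma~\ref{lemma: delta} applies to CRR's sampling rule and that the constants match. Kaufmann--Koolen's mixture martingale bound is stated for arbitrary sampling, so what must be verified is that CRR's pair selection is measurable with respect to past observations, including the tie-breaking in the $\argmin$ and the initialization in which every pair is pulled once. I also need to check that the $\log((L-1)/\delta)$ term in $\beta$ accounts for the union over the $L-1$ possible wrong outputs $b$, or equivalently for $M=L$ answers. Once this is verified, the remainder is identical to the proof of Theorem~\ref{theorem: delta-pac}.
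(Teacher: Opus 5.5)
Your proposal is correct and is essentially the paper's proof. The paper observes that CRR uses the same GLRT statistic and threshold as DCTAS, so the chain \eqref{pc1eq1}--\eqref{pc6eq1} goes through unchanged because the sampling rule never enters: reduce to $Z_{b,a^*(\nu)}(t)>\beta(t,\delta)$, drop the nonpositive $b$-term, dominate by $\sum_k N_{a^*(\nu),k}(t)\,d(\hat p_{a^*(\nu),k}(t),p_{a^*(\nu),k})$, and apply the Kaufmann--Koolen bound with $R=L-1$, $M=L$.

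Two details are worth stating explicitly. First, the dominating sum in Step~2 does not depend on $b$, so no union over wrong outputs is needed and the $\log\frac{L-1}{\delta}$ term only adds slack. Second, passing from $3\sum_k\log(1+\log N_{a^*(\nu),k}(t))$ in the underlying deviation inequality to $3(L-1)\log(1+\log\frac{t}{L-1})$ uses concavity of $x\mapsto\log(1+\log x)$ together with $\sum_k N_{a^*(\nu),k}(t)\le t$, when $t$ counts all duels.
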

The proof of Theorem~\ref{theorem: delta-pac-crr} follows a similar structure to that of Theorem~\ref{theorem: delta-pac}, as both rely on the same GLRT-based stopping rule. The detailed proof is provided in Section~\ref{app: lower bound}.

\subsection{Dual Phase Cost Aware (DPCA)}
The Dual Phase Cost Aware algorithm is a natural extension of the two-phase cost unaware algorithm provided by \cite{karnin2016verification}. The algorithm as the name suggests, runs in two phases; the first is the exploration phase, followed by a verification phase.

\begin{algorithm}[ht]
\caption{Dual Phase Exploration}
\label{algo: exp}
\begin{algorithmic}
  \STATE \textbf{Input:} Error Threshold $\delta$; Cost Vector $\mathcal{C}$; Hyperparameter $k_1=10$
  \STATE \textbf{Output:} Candidate best arm $\hat{i} \in \mathcal{L}$, Adversary Pairs $i,j(i) \in \mathcal{L} \times \mathcal{L}$
\end{algorithmic}  
\begin{algorithmic}[1]
\STATE Initialize $\mathcal{Z} \leftarrow \mathcal{L} \times \mathcal{L}$; 
\STATE Pull each arm pairs $i, j \in \mathcal{L}$ once for initialization;
\FOR{$t = 1, 2, \dots$}
\IF{$|\mathcal{Z}| > 0$}
\FOR{all $\{i, j\} \mid \left(i, j\right) \in \mathcal{Z} \text{ or } \left(j, i\right) \in \mathcal{Z}$}
\STATE Pull the arm once;
\IF{Reward = 1}
\STATE $\mathcal{S}_{i,j} \leftarrow \mathcal{S}_{i,j} + 1$;
\ENDIF
\STATE Let $\hat{p}_{i,j} \leftarrow \frac{\mathcal{S}_{i,j}}{N_{i,j}} - \frac{1}{2}$;
\STATE Initialize $lcb_{i,j} \leftarrow \hat{p}_{i,j} - \sqrt{\frac{\ln{\left(2N_{i,j}^2L^2/\delta\right)}}{2N_{i,j}}}$ and $ucb_{i,j} \leftarrow \hat{p}_{i,j} + \sqrt{\frac{\ln{\left(2N_{i,j}^2L^2/\delta\right)}}{2N_{i,j}}}$;
\IF{$lcb_{i,j} > 0$ or $\left(ucb_{i,j} < 0 \text{ and } k_1ucb_{i,j} < lcb_{i,j}\right)$ or $\left(lcb_{i,j} > 0 \text{ and } \exists j' \in \mathcal{L} \backslash j \text{ s.t. } lcb_{i,j} > ucb_{i,j'} \right)$}
\STATE Remove arm pair $(i, j)$ from $\mathcal{Z}$;
\ENDIF
\ENDFOR
\ELSE
\RETURN $\hat{i} := i$ s.t. $lcb_{ij} \geq 0 \; \forall j \in \mathcal{L}$, $\forall i \in \mathcal{L} \backslash \hat{i}, j\left(i\right) = \argmin \frac{ucb_{i,j}}{\sqrt{c_i + c_j}} $;
\ENDIF
\ENDFOR
\end{algorithmic}
\end{algorithm}

In the exploration phase, as presented in Algorithm \ref{algo: exp}, we pull all the arm pairs in a round robin fashion until the stopping criteria (for that arm pair) is met. Note that the confidence parameter used in the exploration phase is independent of the error probability initialized by the user, and is often larger than the actual error probability. This explicit exploration phase ensures that for the actual $\delta$ (which is smaller than the exploration error probability), we have $L-1$ arm pairs to pull instead of $^LC_2$ as applicable in the standard round robin type algorithms. The algorithm then returns a potential Condorcet winner, along with arm pairs $\left(i, {j\left(i \right)}\right) \; \forall {i} \in \mathcal{L} \backslash \hat{i}$ such that the condition in Lemma \ref{lemma: dp} gets satisfied.

\begin{lemma}
\label{lemma: dp}
    Given the good event, when the Algorithm \ref{algo: exp} stops we have
    \begin{itemize}
        \item The Condorcet winner $i^*$ is the unique LLM for which for all $j \neq i^*$, we have $lcb_{i^*,j} > 0$
        \item For all LLMs $i \neq i^*$, let $y\left(i \right) = \argmin \frac{ucb_{i,y}}{\sqrt{c_i + c_y}}$. We  then have $\frac{p_{i, j\left(i\right)}}{\sqrt{c_i + c_{j\left(i\right)}}} \leq \frac{1}{2} \min_j \frac{p_{i,j}}{\sqrt{c_i + c_j}}$
    \end{itemize}
\end{lemma}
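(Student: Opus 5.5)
The plan is to argue on a \emph{good event} $G$ where every confidence interval built in Algorithm~\ref{algo: exp} contains the corresponding centred preference. Write $\mu_{i,j}:=p_{i,j}-\tfrac12$, which is exactly what the shifted estimate $\hat p_{i,j}$ of Algorithm~\ref{algo: exp} tracks. Let
\[
G:=\{\forall (i,j),\ \forall N_{i,j}\ge 1:\ lcb_{i,j}\le \mu_{i,j}\le ucb_{i,j}\}.
\]
By Hoeffding's inequality, at sample size $N$ the interval fails with probability at most $2\exp(-\ln(2N^2L^2/\delta))=\delta/(N^2L^2)$. Taking a union bound over the at most $L^2$ ordered pairs and summing over $N\ge1$ gives $P(G^c)\le \delta\pi^2/6$, which is $\mathcal{O}(\delta)$. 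The rest of the argument is deterministic on $G$. I will also use that when Algorithm~\ref{algo: exp} returns, $\mathcal{Z}=\emptyset$, so every pair was removed by one of the three removal tests. After removal a pair is no longer pulled, so its $lcb$ and $ucb$ are frozen at their values at the removal time.

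For the first bullet, fix $j\neq i^*$. On $G$ we have $ucb_{i^*,j}\ge \mu_{i^*,j}>0$, so the pair $(i^*,j)$ can never have been removed through the test requiring $ucb_{i^*,j}<0$. The other two tests both require $lcb_{i^*,j}>0$, so $lcb_{i^*,j}>0$ at the stopping time. For uniqueness, take any $i\neq i^*$. On $G$, $lcb_{i,i^*}\le \mu_{i,i^*}=-\mu_{i^*,i}<0$. Hence $i$ fails the "$lcb_{i,j}>0$ for all $j$" property, and the returned $\hat i$ equals $i^*$.

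For the second bullet, fix $i\neq i^*$ and read the inequality with the centred gaps $\mu$, so that "small" means a strongly negative value, i.e.\ strong evidence against $i$. The key step is a \emph{multiplicative accuracy} statement for pairs removed with $ucb_{i,j}<0$. The removal test gives $k_1\,ucb_{i,j}<lcb_{i,j}$, and together with $G$ this yields
\[
ucb_{i,j}\ \ge\ \mu_{i,j}\ \ge\ lcb_{i,j}\ >\ k_1\,ucb_{i,j},
\]
so $|ucb_{i,j}|\le|\mu_{i,j}|<k_1|ucb_{i,j}|$. Dividing by $\sqrt{c_i+c_j}$ preserves these relations. I would then compare $j(i)=\argmin_j ucb_{i,j}/\sqrt{c_i+c_j}$ with the true minimiser $j^\dagger$ of $\mu_{i,j}/\sqrt{c_i+c_j}$: the minimality of $ucb_{i,j(i)}$ transfers to $\mu$ up to the sandwich constants. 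Pairs with $lcb_{i,j}>0$ have $\mu_{i,j}>0$ and are irrelevant to the minimum. The third removal test (an $lcb$ dominating another arm's $ucb$) only removes pairs that already have $lcb>0$, so it needs no separate treatment.

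The main obstacle is the constant. The sandwich above only gives the comparison up to a factor of $k_1$, not the factor $\tfrac12$ claimed in the lemma. To recover $\tfrac12$ I would try to show that the pair $(i,j^\dagger)$ itself, once removed, has a width-to-magnitude ratio of at most about $1/(2k_1)$. The idea is that the round-robin keeps pulling every pair still in $\mathcal{Z}$, so the relevant intervals have comparable widths when they are removed. If that fails, the fallback is to prove the bullet with the constant tied to the ratio $lcb/ucb$ allowed by the stopping test, and to check whether $k_1=10$ in the stated form of the test actually corresponds to the intended factor $\tfrac12$.
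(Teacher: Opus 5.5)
\textbf{There is a genuine gap in the second bullet.} Your first bullet is correct and is the paper's argument. The second bullet, however, is never closed: you stop at the constant and propose a route that cannot work.

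\textbf{The missing step.} You need the removal test for only one pair, and only one half of your sandwich. Since $\mu_{i,j^\dagger}<0$, on $G$ every $lcb_{i,j^\dagger}$ is negative. So $(i,j^\dagger)$ left $\mathcal{Z}$ through the $ucb<0$ test, which gives $ucb_{i,j^\dagger}<lcb_{i,j^\dagger}/k_1\le\mu_{i,j^\dagger}/k_1$. For $j(i)$ you need nothing about how it was removed, only $\mu_{i,j(i)}\le ucb_{i,j(i)}$ from $G$. Then
\[
\frac{\mu_{i,j(i)}}{\sqrt{c_i+c_{j(i)}}}\le\frac{ucb_{i,j(i)}}{\sqrt{c_i+c_{j(i)}}}\le\frac{ucb_{i,j^\dagger}}{\sqrt{c_i+c_{j^\dagger}}}<\frac{1}{k_1}\,\frac{lcb_{i,j^\dagger}}{\sqrt{c_i+c_{j^\dagger}}}\le\frac{1}{k_1}\,\frac{\mu_{i,j^\dagger}}{\sqrt{c_i+c_{j^\dagger}}},
\]
which is exactly the paper's chain. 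The paper writes the removal condition as $ucb_{i,j'}<\tfrac12\,lcb_{i,j'}$, i.e.\ it silently uses $k_1=2$, and the factor $\tfrac12$ then falls out immediately.

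\textbf{Why your route to $\tfrac12$ fails.} Your suspicion about $k_1=10$ is justified. But your primary plan --- using the round robin to show that the interval of $(i,j^\dagger)$ is much tighter than the test demands --- cannot succeed, because the $\tfrac12$ is false for $k_1=10$. At removal the test only certifies $lcb/ucb>k_1$, and $j(i)$ may be a pair removed later with a narrower interval. For instance, take equal costs, $\mu_{i,j^\dagger}=-0.5$ and $\mu_{i,j_2}=-0.12$. The frozen intervals $[-0.9,-0.11]$ and $[-0.5,-0.115]$ are consistent with $G$ and both pass $10\,ucb<lcb$. Yet $j(i)=j_2$ and $-0.12>\tfrac12(-0.5)$.

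The right resolution is your fallback: the argument proves the bullet with factor $1/k_1$, which is the lemma's $\tfrac12$ exactly when $k_1=2$.

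\textbf{Two points in your favour.} First, taking $j^\dagger$ as the minimiser of $\mu_{i,j}/\sqrt{c_i+c_j}$ is what the statement needs. The paper's $j'=\argmin_j p_{i,j}$ only yields a bound against $\tfrac12 p_{i,j'}/\sqrt{c_i+c_{j'}}$, which is weaker than a bound against the cost-normalised minimum. Second, you are right that the third removal test is subsumed by the first, since it also requires $lcb_{i,j}>0$.
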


\textbf{Proof of Lemma \ref{lemma: dp}:}
For the Condorcet winner $i^*$ we have $p_{i^*,j} > 0 \; \forall j \neq i^*$, hence the only way a pair $\left(i^*,j\right)$ is eliminated from $\mathcal{Z}$ is when $lcb_{i^*,j}>0$. Also, for any non-winner $i$ there is some $j$ such that $p_{i,j} < 0$, hence it cannot be the case that $lcb_{i,j} > 0$. This proves the first point.

Now define $j'$ as $\argmin_j p_{i,j}$ for some $i \neq i^*$. We first show that the only way $j'$ gets eliminated is through the second elimination condition. Since $p_{i,j'} < 0$ it must be the case that $lcb_{i,j'} < 0$ at all times, hence it could not have been eliminated as per the first bullet. If $j'$ would have been eliminated according to the third bullet, we must have for some other $j$ that
\begin{align}
    p_{i,j'} > lcb_{i,j'} \geq ucb_{i,j} > p_{i,j}
\end{align}
which contradicts the definition of $j'$. Therefore, it follows that $j'$ is eliminated according to the second bullet, and at termination we have
\begin{align}
    ucb_{i,j'} < 0 ,\; ucb_{i,j'} < \frac{1}{2} lcb_{i,j'}
\end{align}
Now, according to the definition of $j\left(i\right)$, we have
\begin{align}
    \dfrac{p_{i, j\left(i\right)}}{\sqrt{c_i + c_{j\left(i\right)}}} \leq \dfrac{ucb_{i, j\left(i\right)}}{\sqrt{c_i + c_{j\left(i\right)}}} \leq \dfrac{ucb_{i, j'}}{\sqrt{c_i + c_{j'}}} \leq \dfrac{1}{2}\dfrac{lcb_{i, j'}}{\sqrt{c_i + c_{j'}}} \leq \dfrac{1}{2}\dfrac{p_{i, j'}}{\sqrt{c_i + c_{j'}}}
\end{align}

\begin{algorithm}[ht]
\caption{Dual Phase Verification}
\label{algo: ver}
\begin{algorithmic}
  \STATE \textbf{Input:} Error Threshold $\delta$; Candidate best arm $\hat{i} \in \mathcal{L}$, Adversary Pairs $i,j(i) \in \mathcal{L} \times \mathcal{L}$
  \STATE \textbf{Output:} Stop $\in $ \{True, False \}
\end{algorithmic}  
\begin{algorithmic}[1]
\STATE Initialize $\mathcal{A} \leftarrow \mathcal{L} \backslash \hat{i}$; 
\STATE Initialize Run = True ;
\IF{$|\mathcal{A}| > 0$ and Run == True}
\FOR{all the adversary pairs $\left(i, j\left(i\right)\right) \mid i \in \mathcal{A}$}
\STATE Pull the arm once;
\IF{Reward = 1}
\STATE $\mathcal{S}_{i,j\left(i\right)} \leftarrow \mathcal{S}_{i,j\left(i\right)} + 1$;
\ENDIF
\STATE Let $\hat{p}_{i,j\left(i\right)} \leftarrow \frac{\mathcal{S}_{i,j\left(i\right)}}{N_{i,j\left(i\right)}} - \frac{1}{2}$;
\STATE Initialize $lcb_{i,j\left(i\right)} \leftarrow \hat{p}_{i,j\left(i\right)} - \sqrt{\frac{\ln{\left(2N_{i,j\left(i\right)}^2L^2/\delta\right)}}{2N_{i,j\left(i\right)}}}$ and $ucb_{i,j\left(i\right)} \leftarrow \hat{p}_{i,j\left(i\right)} + \sqrt{\frac{\ln{\left(2N_{i,j\left(i\right)}^2L^2/\delta\right)}}{2N_{i,j\left(i\right)}}}$;
\IF{$ucb_{i,j\left(i\right)}$ < 0}
\STATE Remove $i$ from $\mathcal{A}$;
\ELSIF{$lcb_{i,j\left(i\right)}$ > 0}
\STATE Run = Fail;
\RETURN Stop = False
\ENDIF
\ENDFOR
\ELSIF{Run = True}
\RETURN Stop = True;
\ENDIF
\end{algorithmic}
\end{algorithm}

Next, the verification phase does a sanity check on the hypothesis made in the exploration phase, i.e., whether arm $j\left(i\right)$ indeed beats arm $i$ for all $i \in \mathcal{L} \backslash \hat{i} $. Incase, the hypothesis is true, the algorithm stops and returns the potential best arm $(\hat{i})$ as the Condorcet winner, otherwise it would rerun the exploration and the verification phase with a certain error probability as stated in Algorithm \ref{algo: DPCA}.

\begin{algorithm}[t]
\caption{Dual Phase Cost Aware Algorithm (DPCA)}
\label{algo: DPCA}
\begin{algorithmic}
  \STATE \textbf{Input:} Error Threshold $\delta$; Exploration Error $\delta'$; $\text{Dual Phase Exploration}\left(\delta, \mathcal{C}, k_1\right)$; $\text{Dual Phase Verification}\left(\delta, \hat{i}, \left(i, j\left(i\right)\right) \; \forall i \in \mathcal{L} \backslash \hat{i} \right)$ 
  \STATE \textbf{Output:} Best arm $i \in \mathcal{L}$
\end{algorithmic}  
\begin{algorithmic}[1]
\FOR{$r=1, 2, \cdots$}
\STATE $\hat{i}, \left(i, j\left(i\right) \right) \forall l_i \neq l_{\hat{i}} = \text{ Dual Phase Exploration}\left(\hat{\nu}\left(t\right),\mathcal{K}, \mathcal{C} \right)$; 
\STATE Stop = $\text{ Dual Phase Verification}\left(\hat{\nu}\left(t\right), \delta/r^2, l_{\hat{i}}, \left(i, j\left(i\right) \right) \forall l_i \neq l_{\hat{i}}\right)$;
\IF{Stop = True}
\RETURN $l_{\hat{i}}$;
\ENDIF
\ENDFOR
\end{algorithmic}
\end{algorithm}

\begin{theorem}
    \label{theorem: delta-pac-dpca}
    Let $\delta \in \left(0,1\right)$ then the DPCA algorithm presented in Algorithm \ref{algo: DPCA} is $\delta$-PC i.e.,
    \begin{align*}
        P_{\nu}\left(\tau_\delta < \infty, \hat{a}_{\tau_\delta} \neq a^*\left(\nu\right)\right) \leq \delta
    \end{align*}
\end{theorem}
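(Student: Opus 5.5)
The plan is to decompose the error event of DCTAS (Algorithm~\ref{algo: DPCA}) across rounds $r=1,2,\dots$ and charge the $r$-th verification phase an error budget of at most $\delta/r^2$. The infinite sum is then bounded by a multiple of $\delta$. Since $\sum_r 1/r^2=\pi^2/6>1$, I will make the per-round budget explicit, either as $\delta/(2r^2)$ or as $6\delta/(\pi^2 r^2)$, so that the sum is at most $\delta$. This mirrors the standard verification-based argument of \cite{karnin2016verification}.

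\textbf{Step 1: reduce a wrong output to a failed verification.} A wrong output can occur only if, in some round $r$, exploration returns $\hat{i}\neq a^*(\nu)$ and verification still returns Stop $=$ True. The exploration phase is run with the independent parameter $\delta'$, so its correctness is irrelevant here. The key structural fact is that $\hat{i}$ is not the Condorcet winner. Hence the hypothesis tested in verification, namely that each $i\in\mathcal{L}\setminus\hat{i}$ loses to its adversary $j(i)$, must fail for $i=a^*(\nu)$. Concretely, $a^*(\nu)\in\mathcal{A}$, and with the centered notation $p_{i,j}-\tfrac12$ used in the algorithm, $p_{a^*,j(a^*)}-\tfrac12>0$. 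Verification returns True only if $a^*$ is removed from $\mathcal{A}$, which requires $ucb_{a^*,j(a^*)}<0$ at some pull count $N$. That in turn requires the true mean to lie outside the confidence interval.

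\textbf{Step 2: bound the confidence-interval failure.} Fix round $r$ and condition on the output of exploration. The verification samples for the pair $(a^*,j(a^*))$ are fresh i.i.d. Bernoulli draws, and I need to confirm that the algorithm resets its counts; if it does not, I will use an anytime bound that does not require a reset. With width $\sqrt{\ln(2N^2L^2/\delta_r)/(2N)}$, Hoeffding's inequality gives, for each $N$,
\[
P\bigl(ucb_{a^*,j(a^*)}(N)<p_{a^*,j(a^*)}-\tfrac12\bigr)\le \frac{\delta_r}{2N^2L^2}.
\]
A union bound over $N\ge1$ bounds this by $\delta_r\pi^2/(12L^2)\le\delta_r$. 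Alternatively, taking a union over all $L$ adversary pairs still gives at most $\delta_r$.

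\textbf{Step 3: sum over rounds.} I then sum $\delta_r=\delta/r^2$ (or the rescaled budget) over $r$. Using $\pi^2/(12L^2)\cdot\pi^2/6<1$ for $L\ge2$, the total is at most $\delta$. The event $\tau_\delta=\infty$ contributes nothing to the error event.

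\textbf{Main obstacle.} The main difficulty is the adaptivity across rounds. The pair $j(\hat{i})$ depends on data, and samples may be reused between exploration and verification. I would handle this by conditioning on the history up to the start of each verification phase. Then I would apply the anytime bound to the subsequent samples only, or else use a uniform-in-time bound over all pairs and all $N$ that holds regardless of which pair is selected.
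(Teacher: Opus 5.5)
Your proposal is correct and takes essentially the same approach as the paper. The paper gives no details and simply defers to Theorem 5 of \cite{karnin2016verification}; that explore--verify argument is exactly yours. A wrong stop forces $ucb_{a^*,j(a^*)}<0$ for the true Condorcet winner despite a positive centered mean, and this is controlled by a Hoeffding union bound over sample counts $N$, candidate adversaries $j$, and rounds $r$. You also correctly fill in what the paper leaves implicit: the $1/(2N^2L^2)$ slack in the confidence width absorbs $\sum_r r^{-2}=\pi^2/6>1$, so the algorithm's per-round budget $\delta/r^2$ needs no rescaling, and the uniform bound over all $(r,N,j)$ covers the case where counts are not reset between phases.
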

The proof of Theorem \ref{theorem: delta-pac-dpca} is straightforward and follows Theorem 5 from \cite{karnin2016verification}.

\subsection{Dueling Bandits Cost Aware Track and Stop - Confidence Based (DCTAC)}
The DCTAC algorithm is an extension of the DCTAS algorithm with a confidence interval based stopping rule. While the sampling mechanism remains the decision to stop is governed by the statistical confidence in the estimated pairwise preferences. Once the algorithm determines that, with high probability, one arm consistently outperforms the others within the computed confidence bounds, it terminates and declares that arm as the winner.

The complete algorithmic steps are outlined in Algorithm~\ref{algo:DCTAC}.

\begin{algorithm}[ht]
\caption{Dueling Bandit Cost Aware Track and Stop - Confidence Based (DCTAC)}
\label{algo:DCTAC}
\begin{algorithmic}
  \STATE \textbf{Input:} Error Threshold $\delta$; 
  Cost Vector $\mathcal{C}$
  \STATE \textbf{Output:} Best arm $i \in \mathcal{L}$
\end{algorithmic}
\begin{algorithmic}[1]
\STATE Pull each arm pairs $i, j \in \mathcal{L}$ once for initialization;
\FOR{$t=1,2, \dots$}
\STATE $\alpha\left(t\right) = \text{WeightPullAllocation}\left(\mathcal{C}, \hat{\mathbb{P}}\left(t\right) \right)$;
\IF{$\exists k \in \mathcal{K} \mid N_{k}\left(t\right) < \sqrt{t}$}
\STATE $a_t = \argmin_{k \in \mathcal{K}} N_k\left(t\right)$;
\ELSE
\STATE $a_t = \argmin_{k \in \mathcal{K}} N_k\left(t\right) - \sum_{s=1}^t \alpha_{k}\left(s\right)$ ;
\ENDIF
\STATE Pull pair $a_t$
\IF{$\exists i \in \mathcal{L}$ such that $\hat{p}_{i,j} - \sqrt{\frac{\log\!\left(\frac{2 \pi^2 N_{i,j}^2 \binom{L}{2}}{6 \delta}\right)}{2 N_{i,j}}}
 > 0.5$ for all $j \in \mathcal{L} \setminus \{i\},$}
\RETURN $i$
\ENDIF
\ENDFOR
\end{algorithmic}
\end{algorithm}

\begin{theorem}
    \label{theorem: delta-pac-dctatc}
    Let $\delta \in \left(0,1\right)$ then the DCTAC algorithm presented in Algorithm \ref{algo:DCTAC} is $\delta$-PC i.e.,
    \begin{align*}
        P_{\nu}\left(\tau_\delta < \infty, \hat{a}_{\tau_\delta} \neq a^*\left(\nu\right)\right) \leq \delta
    \end{align*}
\end{theorem}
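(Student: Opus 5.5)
We prove Theorem~\ref{theorem: delta-pac-dctatc} with a union bound over arm pairs and sample counts, applied to anytime Hoeffding confidence intervals. We then show that an incorrect return can only happen when some interval fails. Write $b_{i,j}(n) := \sqrt{\log\!\left(\frac{2\pi^2 n^2 \binom{L}{2}}{6\delta}\right)/(2n)}$ for the radius used in the stopping rule when pair $(i,j)$ has been sampled $n$ times.

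\textbf{Good event.} Define
\[
G := \left\{\forall (i,j)\in\mathcal{K},\ \forall n\ge 1:\ |\hat p_{i,j}(n) - p_{i,j}| \le b_{i,j}(n)\right\},
\]
where $\hat p_{i,j}(n)$ is the empirical mean after the first $n$ duels of pair $(i,j)$.

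The adaptive sampling can be handled by a stack-of-rewards construction. For each pair, fix in advance an i.i.d.\ Bernoulli$(p_{i,j})$ sequence, and let the $n$-th duel of that pair reveal its $n$-th entry. Then $\hat p_{i,j}(n)$ is an average of $n$ i.i.d.\ variables, whatever the algorithm does.

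Hoeffding's inequality then gives, for each fixed $(i,j)$ and $n$,
\[
P\bigl(|\hat p_{i,j}(n)-p_{i,j}|>b_{i,j}(n)\bigr)\le 2\exp(-2n b_{i,j}(n)^2) = \frac{6\delta}{\pi^2 n^2\binom{L}{2}}.
\]
Summing over $n\ge 1$ uses $\sum_n 1/n^2=\pi^2/6$, and summing over the $\binom{L}{2}$ pairs gives $P(G^c)\le\delta$. Because the indexing is by per-pair sample count rather than by time, the bound holds simultaneously for every random value of $N_{i,j}(t)$.

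\textbf{Correctness on $G$.} Suppose the algorithm stops at a finite $\tau_\delta$ and returns $i$. Then $\hat p_{i,j}-b_{i,j}(N_{i,j})>0.5$ for every $j\neq i$. On $G$ this forces $p_{i,j}>0.5$ for all $j\ne i$, so $i$ is a Condorcet winner. By anti-symmetry, two distinct arms cannot both beat each other with probability above $1/2$, so the Condorcet winner is unique and $i=a^*(\nu)$.

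Hence $\{\tau_\delta<\infty,\hat a_{\tau_\delta}\ne a^*(\nu)\}\subseteq G^c$, and the claim follows. The sampling rule plays no role in this argument.

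\textbf{Main obstacle.} The only delicate point is the adaptivity of $N_{i,j}(t)$, which is a random, data-dependent count. The stack-of-rewards construction together with the union bound over all $n$ handles it. The constant in the threshold must also be checked to reproduce exactly $\delta/\binom{L}{2}$ per pair, which it does.

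One conventional detail needs attention: the stopping rule is stated for ordered pairs $(i,j)$, whereas the union bound runs over unordered pairs. This is harmless, because $\hat p_{j,i}=1-\hat p_{i,j}$ and $b_{j,i}=b_{i,j}$, so a single two-sided event per unordered pair covers both orientations.
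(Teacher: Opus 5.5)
Your proof is correct and takes the same approach as the paper's brief argument: Hoeffding's inequality for each pair, a union bound over the $\binom{L}{2}$ pairs and over sample sizes with the $\pi^2/6$ series absorbing the $n^2$ factor in the threshold, and uniqueness of the Condorcet winner. The paper loosely speaks of a union over ``time steps''; your version indexes by per-pair sample count via the stack-of-rewards construction, which is the detail that handles the random counts $N_{i,j}(t)$ and makes the threshold's constant give exactly $\delta$.
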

The proof follows directly from the application of Hoeffding’s inequality to bound the deviation of the empirical pairwise preferences from their true means. By applying a union bound over all arm pairs and time steps, it can be shown that the probability of an incorrect selection does not exceed~$\delta$.
\section{Real World Dataset}
\label{App: DD}

In this section, we provide a detailed overview of the real-world datasets used to evaluate the empirical performance of the DCTAS algorithm. As previously discussed, these datasets span four distinct tasks: text-to-text generation, text-to-image generation, vision-based tasks, and search-based tasks, and the dueling instances are formed using the preference data provided by Chatbot Arena \cite{chiang2024chatbot}. The same reference is also used to obtain the cost of models for the \textit{Text-to-Image}, \textit{Text-to-Text}, and \textit{Vision} datasets. On the other hand, for search-based tasks, the model costs are retrieved from their respective official webpage of the model. Since cost information is not available for all models listed on the leaderboard, we restrict our experiments to those for which cost values are publicly available. Additionally, for all the datasets excluding the text-to-image dataset, the reported cost is the cost required to generate one million tokens using the corresponding model. For the text-to-text and the vision models, we also assume that an average of $1000$ tokens are generated per query. A similar assumption is made for the Search dataset, where the cost corresponds to the High pricing tier based on input/output token and per-request rates. However, before proceeding with the description of individual datasets, we first outline the key assumptions underlying the dueling bandit framework and specify which of these assumptions are satisfied by the considered datasets.

To formally describe the assumptions \cite{bengs2021preference}, let $\Delta$ be the matrix such that, for all $i, j \in \mathcal{L}$, we define $\Delta_{i,j} = p_{i,j} - 0.5$. Using this definition, the assumptions are described as follows:

\begin{itemize}
    \item Low Noise Model (LNM) : For all $i, j \in \mathcal{L}$ if $\Delta_{i,j} > 0$, then it follows that $\sum_{k\in \mathcal{L}} \Delta_{i,k} > \sum_{k\in \mathcal{L}} \Delta_{j,k}$.
    \item Total Ordering (TO): For any $i \succ j$, we have $\Delta_{i,j} > 0$.  
    \item Strong Stochastic Transitivity (SST): For all triplets $i, j, k \in \mathcal{L}$, if $\Delta_{i,j}, \Delta_{j,k} > 0$, we have $\Delta_{i,k} \geq \max\{\Delta_{i,j}, \Delta_{j,k}\}$
    \item $\gamma-$ Relaxed Stochastic Transitivity ($\gamma-$RST): For all triplets $i, j, k \in \mathcal{L}$ and $\gamma \in \left(0,1\right)$, if $\Delta_{i,j}, \Delta_{j,k} > 0$, we have $\Delta_{i,k} \geq \gamma\max\{\Delta_{i,j}, \Delta_{j,k}\}$
    \item Moderate Stochastic Transitivity (MST): For all triplets $i, j, k \in \mathcal{L}$, if $\Delta_{i,j}, \Delta_{j,k} > 0$, we have $\Delta_{i,k} \geq \min\{\Delta_{i,j}, \Delta_{j,k}\}$
    \item Weak Stochastic Transitivity (WST): For all triplets $i, j, k \in \mathcal{L}$, $\Delta_{i,j}, \Delta_{j,k} > 0 \implies \Delta_{i,k} > 0$
    \item Stochastic Triangle Inequality (STI): Assuming total order over arms, for all $i \succ \ j \succ k$ we have $\Delta_{i,k} \leq \Delta_{i,j} + \Delta_{j,k}$
    \item General Identifiability Assumption (GIA): There exists an arm $i^* \in \mathcal{L}$ such that for any $j \in \mathcal{L} \backslash i$ we have $\min_{k \in \mathcal{L}} \Delta_{i^*,k} -  \Delta_{j,k} > 0$
    \item Condorcet Winner (CO): An arm $i \in \mathcal{L}$ is said to be a condorcet winner, if $\forall j \in \mathcal{L} \backslash i$, we have $\Delta_i,j > 0$
\end{itemize} 
These assumptions characterize the structural conditions under which our datasets can be analyzed in the dueling bandit framework. To make this connection explicit, we summarize in Table~\ref{tab:datasets_assumptions} the assumptions that are satisfied by each of the real-world datasets considered in our experiments.
\begin{table}[htb]
\caption{Assumptions followed across real-world datasets.}
\centering
\begin{tabular}{|l|c|c|c|c|}
\hline
Assumption & Text-to-Image & Text-to-Text & Vision & Search \\ \hline \hline
LNM & $\times$ & $\times$ & $\times$ & $\times$ \\ \hline
TO & $\times$ & $\checkmark$ & $\times$ & $\checkmark$ \\ \hline
SST & $\times$ & $\times$ & $\times$ & $\times$ \\ \hline
$\gamma$-RST & $\times$ & $\checkmark$ & $\times$ & $\checkmark$ \\ \hline
MST & $\times$ & $\checkmark$ & $\times$ & $\times$ \\ \hline
WST & $\times$ & $\checkmark$ & $\times$ & $\checkmark$ \\ \hline
STI & NA & $\times$ & NA & $\times$ \\ \hline
GIA & $\times$ & $\times$ & $\times$ & $\times$ \\ \hline
CW & $\checkmark$ & $\checkmark$ & $\checkmark$ & $\checkmark$ \\ \hline
\end{tabular}
\label{tab:datasets_assumptions}
\end{table}

From Table~\ref{tab:datasets_assumptions}, it is evident that each dataset satisfies a distinct subset of assumptions, reflecting their differing structural properties. This observation highlights the inherent diversity among the datasets and the varying nature of their underlying preference relationships. Consequently, we now look into each dataset in detail, presenting its corresponding preference matrix, associated model costs, and representative examples of violated assumptions.

\subsection{Text-To-Image Generation}
The Text-to-Image generation dataset consists of $8$ models evaluated on a variety of image synthesis tasks based on textual prompts. The preference matrix is constructed using pairwise human judgments on the generated outputs, as collected and released by LMSYS. Model costs are obtained from the inference prices reported on the LMSYS leaderboard at the time of data collection. This dataset enables the evaluation of algorithmic performance in a generative visual modality, where subjective human preferences play a significant role in scoring. However, since the LMSYS leaderboard has since been updated and no longer contains this information, we anonymize the models and refer to them as $A$ through $H$ in the subsequent analysis.

\begin{figure}[htbp]
  \centering
  \includegraphics[height=0.65\textwidth, width=0.65\textwidth]{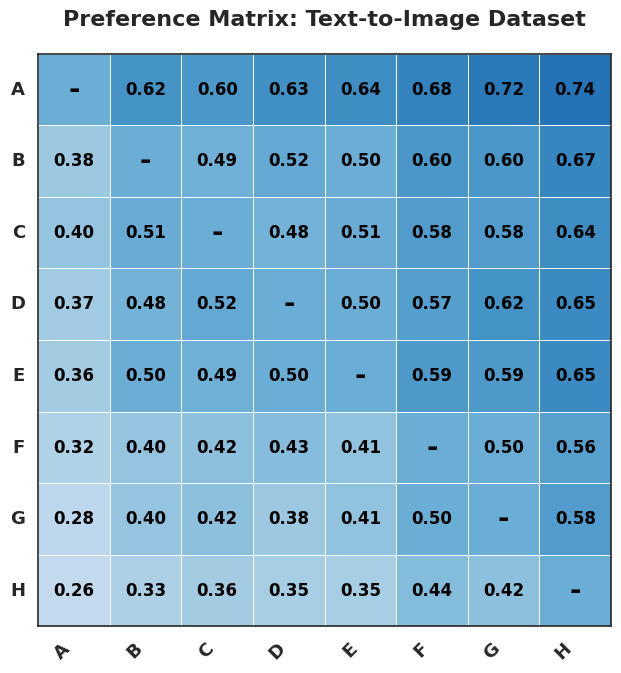}\hfill
  \includegraphics[height=0.65\textwidth, width=0.3\textwidth]{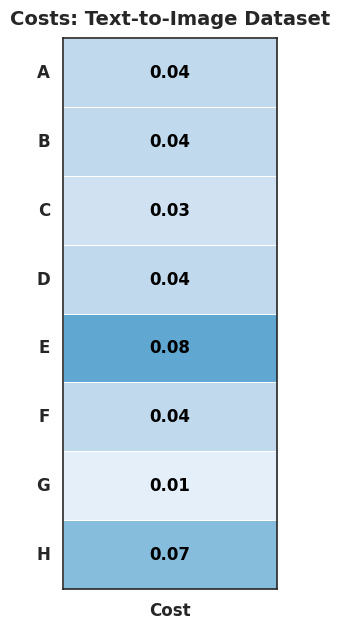}
  \caption{Illustration of the preference matrix and corresponding model costs for the Text-to-Image Generation task. The reported cost represents the amount required to generate one image.}
  \label{fig:text2image_matrix}
\end{figure}

The preference matrix, along with the corresponding cost values for each model, is shown in Figure~\ref{fig:text2image_matrix}. The entries represent the pairwise preferences between models, capturing the probability that one model is favored over another based on collected human comparisons. The costs, shown alongside the matrix, denote the per-query inference prices of the respective models. These values play a central role in analyzing the cost-awareness of the algorithms under study. 

Next, while the dataset satisfies the Condorcet-Winner (CW) assumption, certain other structural assumptions from the dueling bandit literature, such as Total Ordering (TO) and the Low Noise Model (LNM), are not fully preserved. For instance, we observe preference cycles among certain models (e.g., between $B$, $D$, and $C$), which violate transitivity. A detailed set of such violations is provided in Table~\ref{tab:violations_text2image}.

\begin{table}[htb]
\centering
\caption{Assumption-violating examples in Text-to-Image Generation task.}
\begin{tabular}{|c|c|}
\hline
\textbf{Assumption} & \textbf{Violating Example} \\ \hline
Low Noise Model & i=C, j=B\\ \hline
Strong Stochastic Transitivity & i=A, j=D, k=C\\ \hline
$\gamma-$ Relaxed Stochastic Transitivity & i=B, j=D, k=C\\ \hline
Moderate Stochastic Transitivity  & i=B, j=D, k=C\\ \hline
Weak Stochastic Transitivity & i=B, j=D, k=C\\ \hline
\end{tabular}
\label{tab:violations_text2image}
\end{table}

In summary, the Text-to-Image dataset offers a realistic setting where both cost and subjective human preferences influence performance. Although the Condorcet-Winner assumption holds, several other structural assumptions are violated, including transitivity-related properties. These observations highlight the importance of using algorithms that can handle such inconsistencies in real-world settings.

\subsection{Text-To-Text Generation}
The Text-to-Text generation dataset consists of $5$ models evaluated on standard language generation tasks, including instruction following, summarization, and open-ended completion. Similar to the previous setting, pairwise preferences are constructed using human evaluations of model outputs, as provided by LMSYS. The cost of each model is derived from its reported inference price (per million tokens) at the time of data collection.

Compared to the Text-to-Image dataset, this setting involves more structured output and generally exhibits less variability in preferences. As a result, we observe better alignment with several assumptions from the dueling bandit literature, particularly those related to transitivity.

\begin{figure}[htbp]
  \centering
  \includegraphics[height=0.65\textwidth, width=0.6\textwidth]{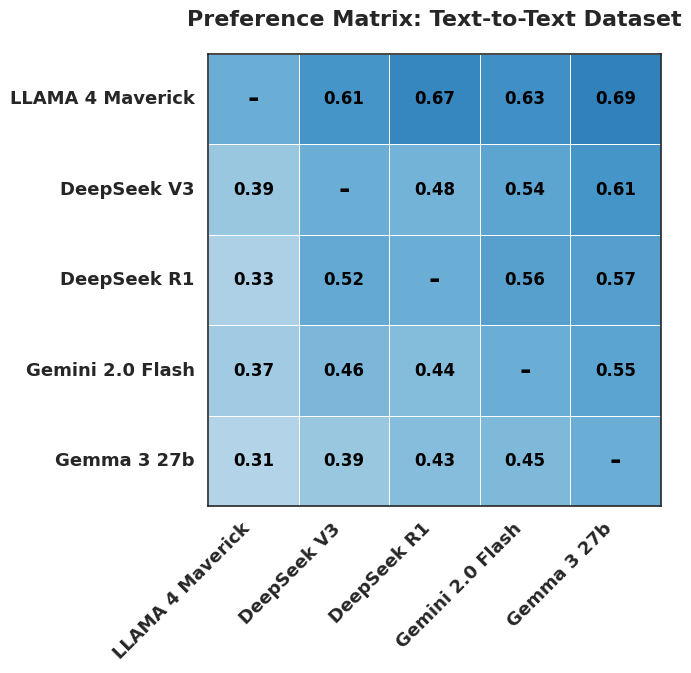}\hfill
  \includegraphics[height=0.65\textwidth, width=0.38\textwidth]{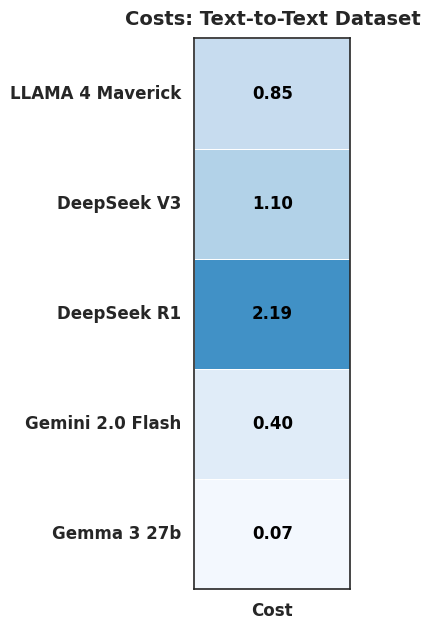}
  \caption{Illustration of the preference matrix and corresponding model costs for the Text-to-Text Generation task. The reported cost represents the amount required to generate one million tokens.}
  \label{fig:text2text_matrix}
\end{figure}

The preference matrix and the corresponding model costs are shown in Figure~\ref{fig:text2text_matrix}. We clearly see that the dataset satisfies the Condorcet-Winner (CW) assumption, with \textbf{LLAMA 4 Maverick} consistently outperforming others across pairwise comparisons. In addition, several assumptions, including \textit{Total Ordering}, $\gamma$-\textit{Relaxed Stochastic Transitivity}, \textit{Moderate Stochastic Transitivity}, and \textit{Weak Stochastic Transitivity} are also satisfied. However, certain assumptions, such as the Low Noise Model and Strong Stochastic Transitivity, are still violated in specific cases. Examples of such violations are summarized in Table~\ref{tab:violations_text2text}.

\begin{table}[htb]
\centering
\caption{Assumption-violating examples in Text-to-Text Generation task.}
\begin{tabular}{|c|c|}
\hline
\textbf{Assumption} & \textbf{Violating Example} \\ \hline
Low Noise Model & i=DeepSeek R1, j = DeepSeek V3\\ \hline
Strong Stochastic Transitivity & i=LLAMA 4 Maverick, j=DeepSeek R1, k=DeepSeek V3\\ \hline
Stochastic Triangle Inequality & i=LLAMA 4 Maverick, j=Gemini 2.0 Flash, k=Gemma 3 27b\\ \hline
\end{tabular}
\label{tab:violations_text2text}
\end{table}

Overall, the Text-to-Text dataset offers a relatively structured and low-noise environment, making it well-suited for evaluating algorithms under conditions where multiple theoretical assumptions are likely to hold.

\subsection{Vision Tasks}

The Vision dataset comprises models evaluated on tasks involving visual understanding, such as image classification and visual question answering. Unlike text-based tasks, outputs in this setting are often more open-ended, making human preference judgments more subjective and less consistent. Consequently, the resulting preference matrix tends to exhibit higher noise and weaker structural regularity compared to the Text-to-Text dataset.

\begin{figure}[htbp]
  \centering
  \includegraphics[height=0.65\textwidth, width=0.6\textwidth]{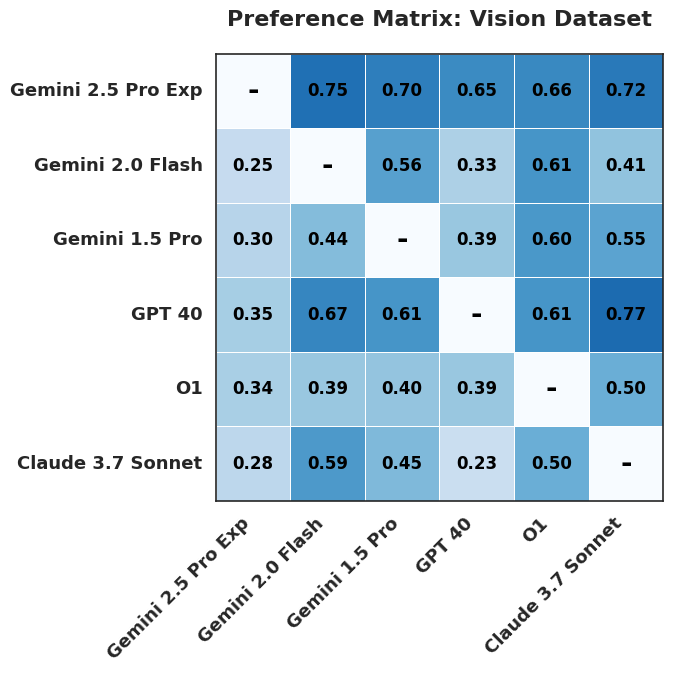}\hfill
  \includegraphics[height=0.65\textwidth, width=0.38\textwidth]{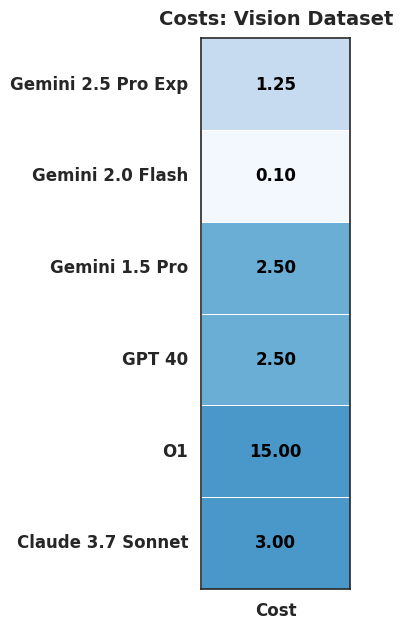}
  \caption{Illustration of the preference matrix and corresponding model costs for the Vision-based task. The reported cost represents the amount required to generate one million tokens.}
  \label{fig:vision_matrix}
\end{figure}

The preference matrix and the corresponding model costs are shown in Figure~\ref{fig:vision_matrix}. As expected, the preferences in this setting are less consistent, and we observe greater uncertainty in pairwise comparisons. While the Condorcet-Winner assumption continues to hold, most of the other structural assumptions are violated. Examples of such violations are presented in Table~\ref{tab:violations_vision}

\begin{table}[htb]
\centering
\caption{Assumption-violating examples in Vision-based task.}
\begin{tabular}{|c|c|}
\hline
\textbf{Assumption} & \textbf{Violating Example} \\ \hline
Low Noise Model & i = Gemini 2.0 Flash j = Gemini 1.5 Pro\\ \hline
Strong Stochastic Transitivity & i=Gemini 2.5 Pro Exp, j=Gemini 2.0 Flash, k=Gemini 1.5 Pro\\ \hline
$\gamma-$ Relaxed Stochastic Transitivity & i=Gemini 2.0 Flash, j=Gemini 1.5 Pro, k=Claude 3.7 Sonnet\\ \hline
Moderate Stochastic Transitivity  & i=Gemini 2.0 Flash, j=Gemini 1.5 Pro, k=Claude 3.7 Sonnet\\ \hline
Weak Stochastic Transitivity & i=Gemini 2.0 Flash, j=Gemini 1.5 Pro, k=Claude 3.7 Sonnet\\ \hline
\end{tabular}
\label{tab:violations_vision}
\end{table}

This setting thus reflects the challenges of working with noisier, less structured preference data, making it useful for testing the robustness of cost-aware algorithms under subjective and inconsistent feedback.

\subsection{Search-Based Tasks}
The Search-based dataset includes models that perform retrieval-augmented generation or open-domain question answering in response to user queries. These tasks emphasize both factual correctness and relevance, often requiring the model to search through the internet to retrieve the relevant information. While preference scores are again derived from human evaluations, the nature of these tasks introduces different sources of variability, such as incomplete or wrong retrieval. However, unlike the Text-to-Image or Vision-based tasks, the scores include a more structured preference matrix. As a result, similar to Text-to-Text datasets, for Search-based tasks, we observe better alignment with various assumptions in the literature

\begin{figure}[htbp]
  \centering
  \includegraphics[height=0.65\textwidth, width=0.63\textwidth]{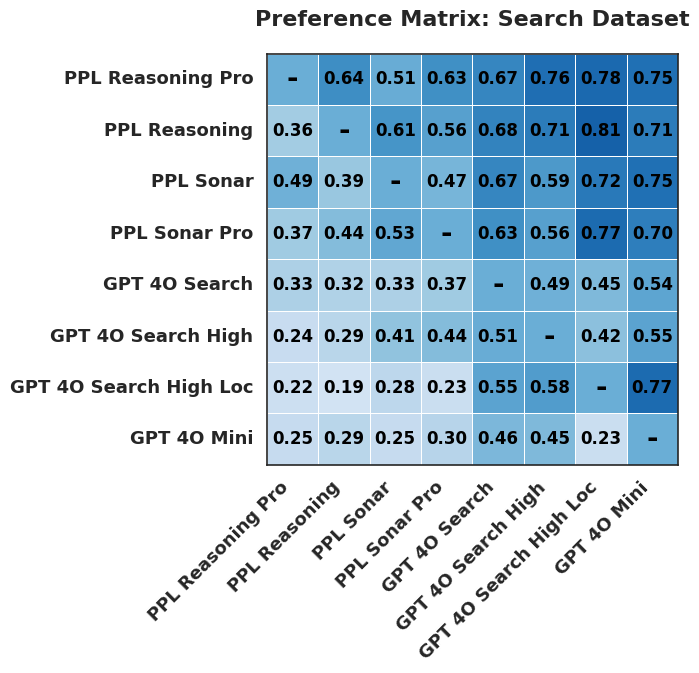}\hfill
  \includegraphics[height=0.65\textwidth, width=0.35\textwidth]{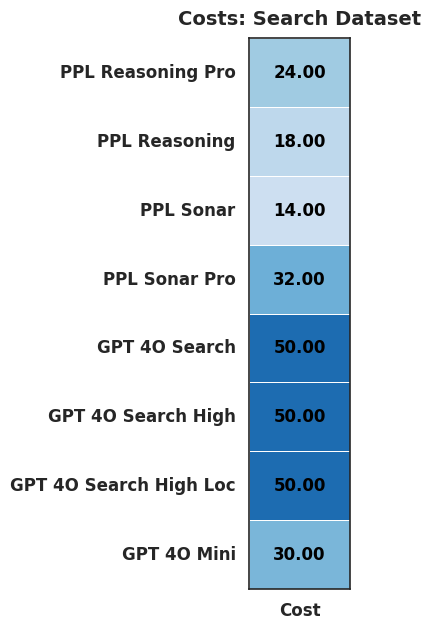}
  \caption{Illustration of the preference matrix and corresponding model costs for the Search-based task. The reported cost represents the amount required to generate one million tokens with high priority.}
  \label{fig:search_matrix}
\end{figure}

The preference matrix and the corresponding model costs for the Search-based dataset are shown in Figure~\ref{fig:search_matrix}. The matrix captures the pairwise win probabilities derived from human comparisons, while the cost values reflect the per-1,000-request pricing in the highest available inference tier (“High” or equivalent), as reported on the official webpages of each model. To enable consistent comparisons across modalities, we normalize these costs under the assumption that 1,000 requests correspond to approximately 1 million input and output tokens. Similar to other datasets, we include only models for which cost information is publicly available.

\begin{table}[htb]
\centering
\caption{Assumption-violating examples in Search-based task.}
\begin{tabular}{|c|c|}
\hline
\textbf{Assumption} & \textbf{Violating Example} \\ \hline
Low Noise Model & i=PPL Sonar Pro, j=PPL Sonar\\ \hline
Strong Stochastic Transitivity & i=PPL Reasoning Pro, j=PPL Reasoning, k=PPL Sonar\\ \hline
Moderate Stochastic Transitivity  & i=PPL Reasoning Pro, j=PPL Reasoning, k=PPL Sonar\\ \hline
Stochastic Triangle Inequality & i=PPL Reasoning Pro, j=PPL Sonar, k=GPT 4O Search High\\ \hline
\end{tabular}
\label{tab:violations_search}
\end{table}

As visible from the preference matrix, the Search-based dataset satisfies the Condorcet-Winner assumption, with \textbf{PPL Reasoning Pro} consistently outperforming the rest in pairwise comparisons. Additionally, several assumptions also hold in this setting. However, similar to the Text-to-Text dataset, assumptions like Strong Stochastic Transitivity, the Low Noise Model, and the Stochastic Triangle Inequality are not universally satisfied. Table~\ref{tab:violations_search} highlights specific examples of such violations.

We end this section by providing a comparative summary of how the assumptions behave across all datasets. Across the four datasets, we observe consistent satisfaction of the Condorcet-Winner assumption, but varying degrees of alignment with other structural assumptions, such as transitivity. These differences are closely tied to the nature of each task. The Text-to-Text and Search-based datasets typically yield more stable and structured preference matrices, owing to their clearer evaluation criteria and constrained outputs. In contrast, the Text-to-Image and Vision datasets tend to introduce greater subjectivity, leading to noisier preferences and more frequent violations of standard assumptions.

This variation in structural properties, combined with differences in model costs across modalities, allows us to evaluate the proposed algorithms in both well-behaved and challenging settings. We next outline the baseline algorithms considered in the experimental evaluation.
\section{Validation on Additional Real-World Datasets}
\label{app: add_val}

In this section, we provide additional empirical support for the assumption that a Condorcet winner exists. 
We consider real-world pairwise-comparison datasets from HuggingFace, including the LMSYS Chatbot Arena dataset and the TigerLab Text-to-Video and Text-to-Image datasets. 
These datasets cover text generation, text-to-video generation, and text-to-image generation tasks, allowing us to validate the Condorcet-winner assumption across different domains.

The purpose of this section is not to perform another cost-aware evaluation, but rather to examine whether the structural assumption used in our theoretical development is consistent with real-world preference data. 
Accordingly, all datasets considered here are used only for validating the Condorcet-winner assumption. 
We do not perform cost-aware experiments on these datasets, since the sampling costs corresponding to all the models considered in these datasets are not publicly available. 
For each dataset, we restrict attention to a subset of models such that every selected model pair has at least one comparison. 
This ensures that the empirical preference matrix is fully specified for the selected set of models.

Table~\ref{tab:additional_validation_assumptions} summarizes which structural assumptions are satisfied by each dataset. 
The main takeaway is that the Condorcet-winner assumption holds across all three datasets, while several stronger assumptions commonly used in the dueling-bandit literature are violated. 
This is important because it suggests that the Condorcet-winner assumption is mild enough to hold in diverse real-world preference datasets, even when more restrictive structural conditions fail.

\begin{table}[htb]
\caption{Assumptions satisfied by the additional real-world datasets considered in our validation. 
The Chatbot Arena dataset is provided by LMSYS, while the Text-to-Video, and Text-to-Image datasets are provided by TigerLab.}
\centering
\begin{tabular}{|l|c|c|c|}
\hline
Assumption & Chatbot Arena & Text to Video & Text to Image  \\ \hline \hline
LNM & $\times$ & $\times$ & $\checkmark$  \\ \hline
TO & $\times$ & $\times$ & $\checkmark$  \\ \hline
SST & $\times$ & $\times$ & $\times$ \\ \hline
$\gamma$-RST & $\times$ & $\times$ & $\checkmark$ \\ \hline
MST & $\times$ & $\times$ & $\times$  \\ \hline
WST & $\times$ & $\times$ & $\checkmark$ \\ \hline
STI & NA & NA & $\times$ \\ \hline
GIA & $\times$ & $\times$ & $\times$  \\ \hline
CW & $\checkmark$ & $\checkmark$ & $\checkmark$  \\ \hline
\end{tabular}
\label{tab:additional_validation_assumptions}
\end{table}

We next discuss each dataset in more detail. 
For each dataset, we present the empirical preference matrix, identify the corresponding Condorcet winner, and provide explicit examples showing which stronger structural assumptions are violated.

\subsection{LMSYS Chatbot Arena}
\label{app:add_val:lmsys}

The LMSYS Chatbot Arena dataset \citep{chiang2024chatbot} consists of pairwise human preference judgments between responses generated by different language models. 
The dataset is publicly available through HuggingFace\footnote{\url{https://huggingface.co/datasets/lmsys/chatbot\_arena\_conversations}}. 
This dataset serves as a text-generation validation case, where preferences are collected from human judgments over model responses.

\begin{figure}[htbp]
  \centering
  \includegraphics[scale=0.55]{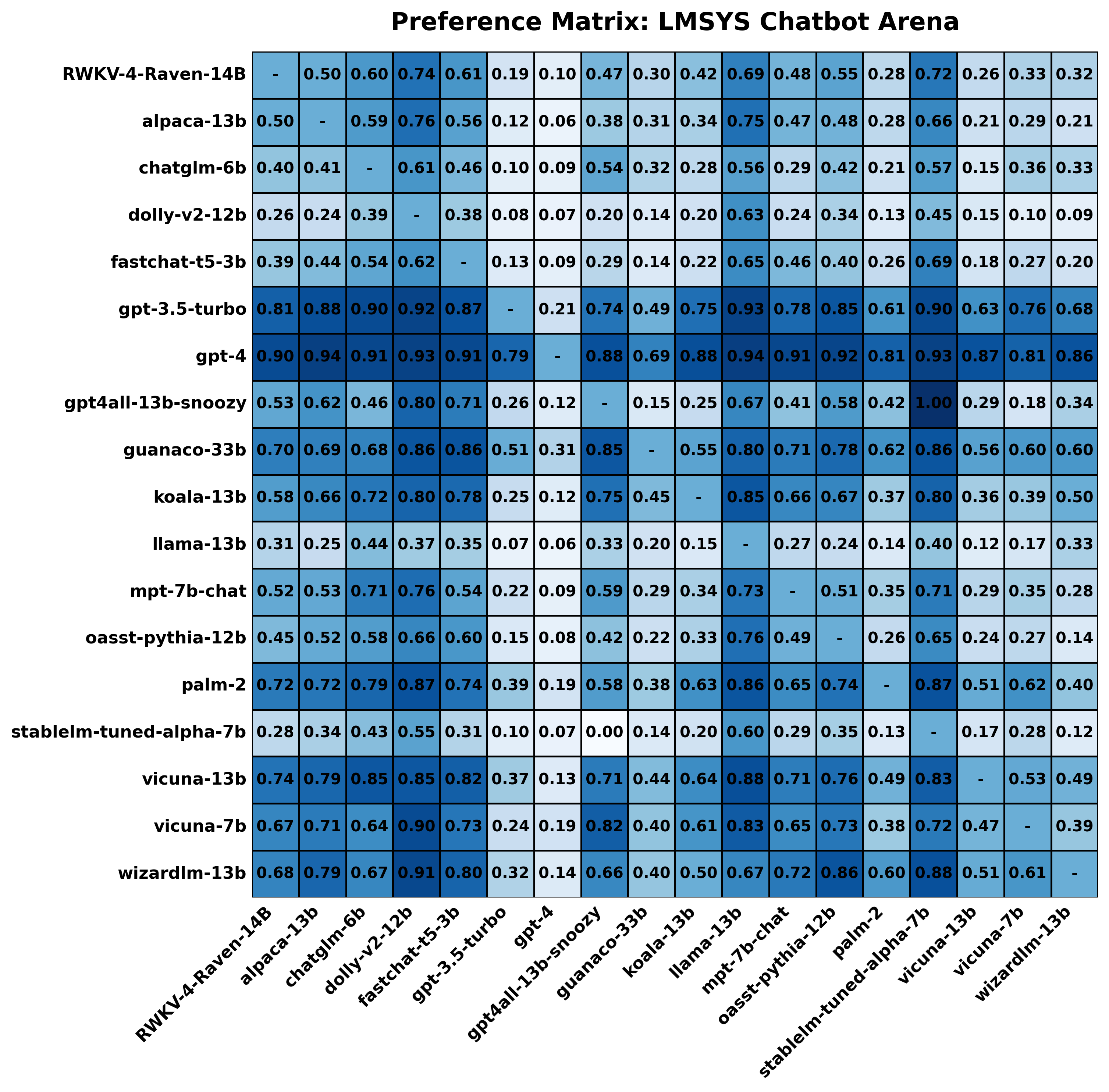}\hfill
  \caption{Illustration of the empirical preference matrix for the LMSYS Chatbot Arena dataset, showing the pairwise preference probabilities among the models.}
  \label{fig:lmsys_pref_matrix}
\end{figure}

Figure~\ref{fig:lmsys_pref_matrix} shows the empirical preference matrix for the selected models. 
From the matrix, we observe that \texttt{GPT 4} is preferred over every other selected model with probability greater than $1/2$, and hence is a Condorcet winner. 
At the same time, the preference matrix does not satisfy stronger structural assumptions such as total ordering. 
Thus, even in this text-generation dataset, the Condorcet-winner assumption holds despite the absence of several more restrictive preference structures.

\begin{table}[htb]
\centering
\caption{Assumption-violating examples in the LMSYS Chatbot Arena dataset.}
\begin{tabular}{|c|c|}
\hline
\textbf{Assumption} & \textbf{Violating Example} \\ \hline
Low Noise Model & i = chatglm-6b, j = gpt4all-13b-snoozy\\ \hline
Strong Stochastic Transitivity & i=RWKV-4-Raven-14B, j=alpaca-13b, k=dolly-v2-12b\\ \hline
$\gamma-$ Relaxed Stochastic Transitivity & i=RWKV-4-Raven-14B, j=chatglm-6b, k=gpt4all-13b-snoozy\\ \hline
Moderate Stochastic Transitivity  & i=RWKV-4-Raven-14B, j=chatglm-6b, k=gpt4all-13b-snoozy\\ \hline
Weak Stochastic Transitivity & i=RWKV-4-Raven-14B, j=chatglm-6b, k=gpt4all-13b-snoozy\\ \hline
\end{tabular}
\label{tab:lmsys_validation_assumptions}
\end{table}

Table~\ref{tab:lmsys_validation_assumptions} provides explicit examples of violated structural assumptions in the LMSYS Chatbot Arena dataset. 
These examples show that the observed preferences are not explained by stronger transitivity or low-noise conditions, while still admitting a Condorcet winner.

\subsection{TigerLab Text-to-Video}
\label{app:add_val:tigerlab_t2v}

The TigerLab Text-to-Video dataset \citep{jiang2024genai} consists of pairwise preference judgments between text-to-video generation models.
The dataset is publicly available through HuggingFace\footnote{\url{https://huggingface.co/datasets/TIGER-Lab/GenAI-Bench/viewer/video_generation}}. 
Compared to LMSYS Chatbot Arena, this dataset provides a validation case from a multimodal generation task, where the outputs are videos rather than text responses.

\begin{figure}[htbp]
  \centering
  \includegraphics[scale=0.5]{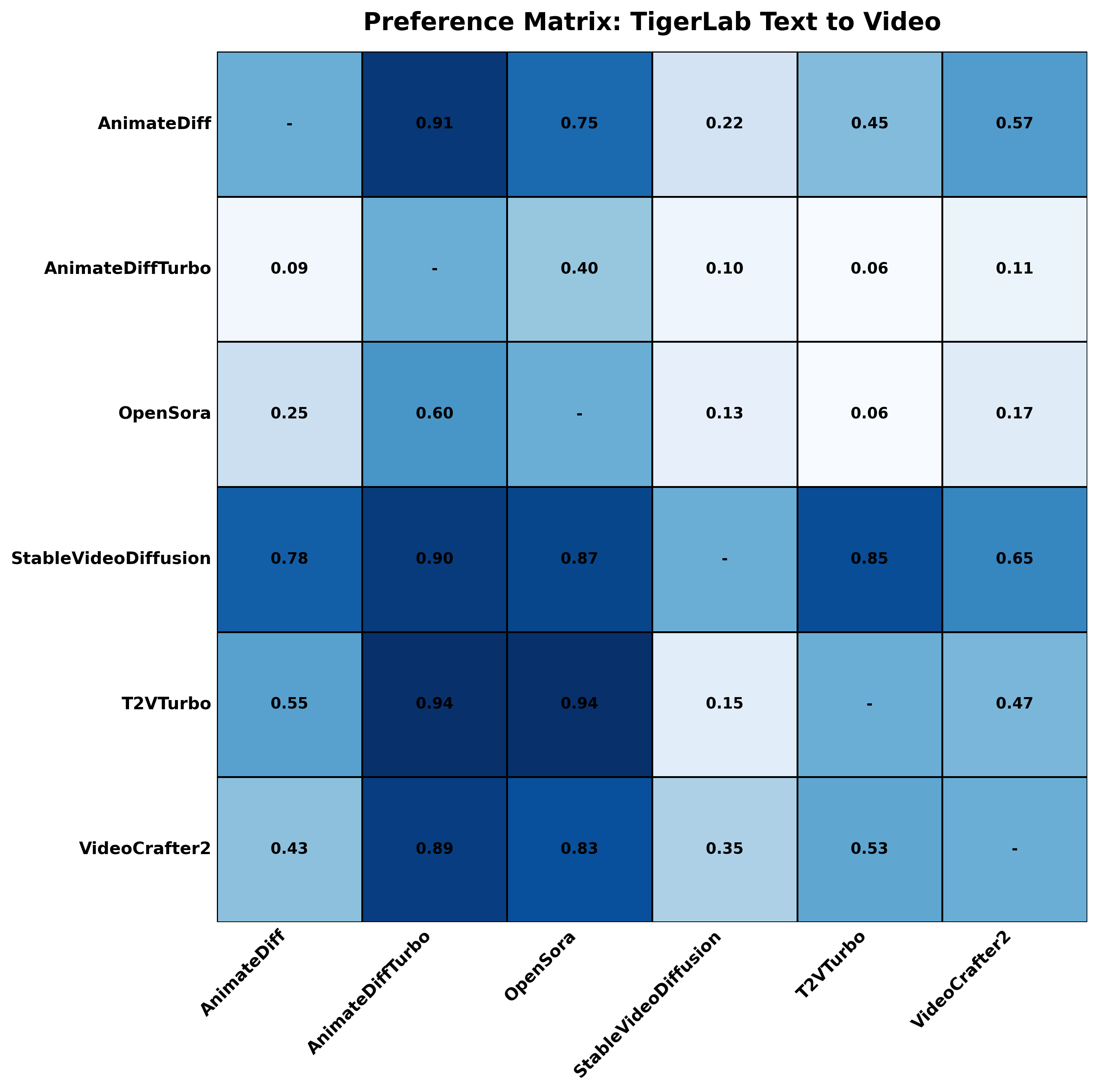}
  \caption{Illustration of the empirical preference matrix for the TigerLab Text-to-Video dataset, showing the pairwise preference probabilities among the models.}
  \label{fig:t2v_pref_matrix}
\end{figure}

Figure~\ref{fig:t2v_pref_matrix} shows the empirical preference matrix for the selected models.
From the matrix, we observe that \texttt{StableVideoDiffusion} is preferred over every other selected model with probability greater than $1/2$, and hence is a Condorcet winner. 
However, similar to the LMSYS Chatbot Arena dataset, the preference matrix does not satisfy a total-order structure or any other assumptions. 
This indicates that the Condorcet-winner assumption continues to hold even in a more complex multimodal preference dataset where stronger ordering assumptions fail.

\begin{table}[htb]
\centering
\caption{Assumption-violating examples in the TigerLab Text-to-Video dataset.}
\begin{tabular}{|c|c|}
\hline
\textbf{Assumption} & \textbf{Violating Example} \\ \hline
Low Noise Model & i = AnimateDiff, j = VideoCrafter2\\ \hline
Strong Stochastic Transitivity & i=AnimateDiff, j=VideoCrafter2, k=OpenSora\\ \hline
$\gamma-$ Relaxed Stochastic Transitivity & i=AnimateDiff, j=VideoCrafter2, k=T2VTurbo\\ \hline
Moderate Stochastic Transitivity  & i=AnimateDiff, j=VideoCrafter2, k=T2VTurbo\\ \hline
Weak Stochastic Transitivity & i=AnimateDiff, j=VideoCrafter2, k=T2VTurbo\\ \hline
\end{tabular}
\label{tab:t2v_validation_assumptions}
\end{table}

Table~\ref{tab:t2v_validation_assumptions} provides explicit examples of violated structural assumptions in the Text-to-Video dataset. 
Together with Figure~\ref{fig:t2v_pref_matrix}, these results show that the empirical preferences admit a Condorcet winner without satisfying stronger transitivity or low-noise assumptions.

\subsection{TigerLab Text-to-Image}
\label{app:add_val:tigerlab_t2i}

The TigerLab Text-to-Image dataset \citep{jiang2024genai} consists of pairwise preference judgments for text-to-image generation models and is publicly available on HuggingFace\footnote{\url{https://huggingface.co/datasets/TIGER-Lab/GenAI-Bench/viewer/image_generation}}. 
This dataset provides another multimodal validation case, now for image generation models.

\begin{figure}[htbp]
  \centering
  \includegraphics[scale=0.5]{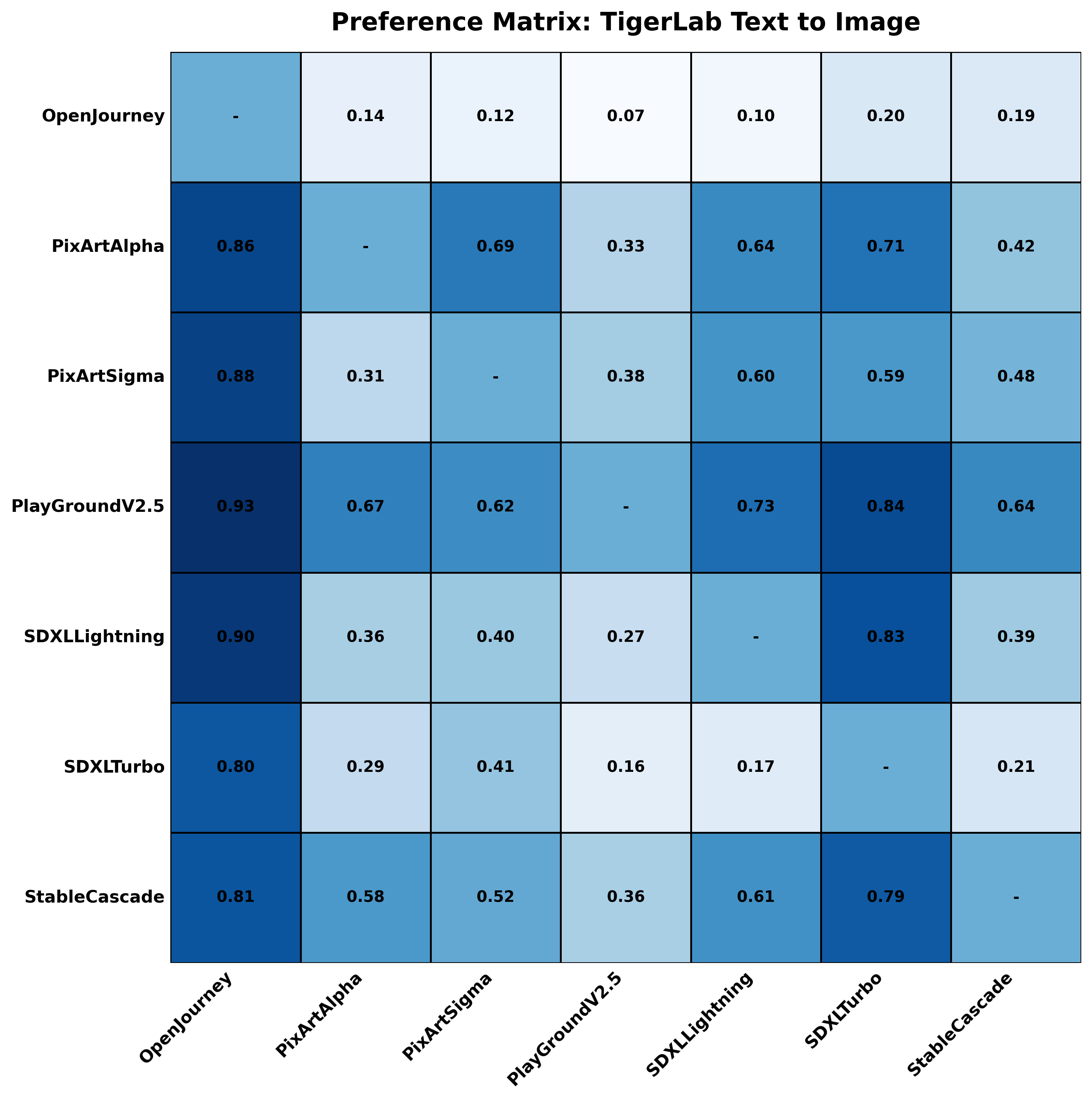}
  \caption{Illustration of the empirical preference matrix for the TigerLab Text-to-Image dataset, showing the pairwise preference probabilities among the models.}
  \label{fig:t2i_pref_matrix}
\end{figure}

Figure~\ref{fig:t2i_pref_matrix} shows the empirical preference matrix for the selected models. 
From the matrix, we observe that \texttt{PlayGroundV2.5} is preferred over every other selected model with probability greater than $1/2$, and hence is a Condorcet winner. 
Unlike the previous two datasets, this dataset also satisfies a total-order structure over the selected models. 
Nevertheless, it still violates other stronger structural assumptions, such as strong stochastic transitivity and the stochastic triangle inequality.

\begin{table}[htb]
\centering
\caption{Assumption-violating examples in the TigerLab Text-to-Image dataset.}
\begin{tabular}{|c|c|}
\hline
\textbf{Assumption} & \textbf{Violating Example} \\ \hline
Strong Stochastic Transitivity & i=PixArtAlpha, j=PixArtSigma, k=OpenJourney\\ \hline
Moderate Stochastic Transitivity  & i=PixArtSigma, j=SDXLLightning, k=SDXLTurbo\\ \hline
Stochastic Triangle Inequality & i=PlayGroundV2.5, j=PixArtSigma, k=SDXLLightning\\ \hline
\end{tabular}
\label{tab:t2i_validation_assumptions}
\end{table}

Table~\ref{tab:t2i_validation_assumptions} provides explicit examples of structural assumptions violated by the Text-to-Image dataset. 
This dataset is therefore complementary to the previous two: although it satisfies total ordering, it still violates other assumptions that are stronger than the existence of a Condorcet winner.

Overall, the validation across these three datasets shows that the Condorcet-winner assumption is consistently satisfied across text-generation, text-to-video generation, and text-to-image generation tasks. 
At the same time, stronger structural assumptions fail in multiple cases. 
This supports our use of the Condorcet-winner assumption as a comparatively mild and empirically plausible condition for real-world pairwise preference data.


\end{document}